\documentclass[12pt,a4paper]{article}
\usepackage[a4paper,margin=3cm]{geometry}
\usepackage{amsmath,amsfonts,amsthm,amssymb,dsfont,lmodern,mathrsfs}
\usepackage{xfrac}
\usepackage{natbib}
\usepackage[utf8]{inputenc}
\usepackage[T1]{fontenc}
\usepackage[shortlabels]{enumitem}
\usepackage[dvipsnames]{xcolor}
\usepackage[colorlinks,linkcolor=blue,citecolor=blue,urlcolor=blue]{hyperref}
\usepackage{doi}
\usepackage{tikz}
\usepackage[margin=1cm]{caption}
\usepackage{cleveref}

%%%%%%%%%%%%%%%%%%%%%%%%%%%%%%%%%%%%%%%%%%%%%%%%%%%%%%%%%%%%%%%%%%%%%%%%%%%%%%%%
%%% parskip (skips line between paragraphs, no indent)
%%%%%%%%%%%%%%%%%%%%%%%%%%%%%%%%%%%%%%%%%%%%%%%%%%%%%%%%%%%%%%%%%%%%%%%%%%%%%%%%
\usepackage{parskip}
% Hack for correct spacing around theorems with parskip package.
\begingroup
\makeatletter
\@for\theoremstyle:=definition,remark,plain\do{%
  \expandafter\g@addto@macro\csname th@\theoremstyle\endcsname{%
    \addtolength\thm@preskip\parskip }%
}
\endgroup

%%%%%%%%%%%%%%%%%%%%%%%%%%%%%%%%%%%%%%%%%%%%%%%%%%%%%%%%%%%%%%%%%%%%%%%%%%%%%%%%
%%% change behavior of \paragraph{title} to add a line break.
%%%%%%%%%%%%%%%%%%%%%%%%%%%%%%%%%%%%%%%%%%%%%%%%%%%%%%%%%%%%%%%%%%%%%%%%%%%%%%%%
\usepackage{titlesec}
\titleformat{\paragraph}[hang]{\bfseries}{}{0mm}{}
\titlespacing{\paragraph}{0mm}{\baselineskip}{0.5ex}

%Theorems***************************************
\theoremstyle{plain}
\newtheorem{theorem}{Theorem}[section]
\newtheorem{lemma}[theorem]{Lemma}

\newtheorem{proposition}[theorem]{Proposition}
\newtheorem{conjecture}[theorem]{Conjecture}
\newtheorem{definition}[theorem]{Definition}
\newtheorem{assumption}[theorem]{Assumption}
\Crefname{assumption}{Assumption}{Assumptions}
\theoremstyle{definition}
\newtheorem{example}[theorem]{Example}
\newtheorem{remark}[theorem]{Remark}
\newtheorem{algorithm}[theorem]{Algorithm}

% practical commands %%%%%%%%%%%%%%%%%
\newcommand{\rmd}{\mathrm{d}}
\newcommand{\1}{\mathds{1}}
\renewcommand{\P}{\mathrm{P}}
\newcommand{\E}{\mathbb{E}}
\newcommand{\norm}[2][]{#1\lVert #2 #1\rVert}
\newcommand{\abs}[2][]{#1\lvert #2 #1\rvert}

\newcommand{\symdiff}{\mathbin{\bigtriangleup}}
\DeclareMathOperator*{\argmin}{arg\,min}

\newcommand{\pderiv}[3][{}]{\frac{\partial^{#1}#3}{\partial{#2}^{#1}}}
\newcommand{\deriv}[2][{}]{\frac{\rmd^{#1}}{\rmd{#2}^{#1}}}

\newcommand{\leb}{\mathrm{Leb}}

% purely aesthetic changes %%%%%%%%%%%
\renewcommand{\epsilon}{\varepsilon}
\renewcommand{\rho}{\varrho}
\renewcommand{\phi}{\varphi}
\renewcommand{\emptyset}{\varnothing}

\begin{document}

\title{A large-sample theory for infinitesimal gradient boosting}
\author{Clément Dombry\footnote{Universit{\'e} de Franche-Comt{\'e}, CNRS, LmB, F-25000 Besan{\c c}on, France.\\Email: \url{clement.dombry@univ-fcomte.fr},\ \url{jean-jil.duchamps@univ-fcomte.fr}} \ and Jean-Jil Duchamps\footnotemark[1]}
\maketitle

\begin{abstract}
Infinitesimal gradient boosting \citep{DD21} is defined as the vanishing-learning-rate limit of the popular tree-based gradient boosting algorithm from machine learning. It is characterized as the solution of a nonlinear ordinary differential equation in a infinite-dimensional function space where the infinitesimal boosting operator driving the dynamics depends on the training sample. We consider the asymptotic behavior of the model in the large sample limit and prove its convergence to a deterministic process. This population limit is again characterized by a differential equation that depends on the population distribution. We explore some properties of this population limit: we prove that the dynamics makes the test error decrease and we consider its long time behavior.
\end{abstract}

\textbf{Keywords:} gradient boosting, large sample theory, softmax gradient tree.

\textbf{MSC 2020 subject classifications:} primary 62G05; secondary 60F15, 60J25.

\tableofcontents

\section{Introduction}
Tree-based gradient boosting \citep{F01} is one of the most successful algorithm from machine learning. It provides a powerful and versatile methodology in supervised learning and achieves excellent performance in prediction problems where one aims at understanding the relationship between a response variable (target) and explanatory variables (features). Its modern implementation in XGBoost \citep{CG16} is involved in countless applications. Several theoretical and statistical works have been devoted to the understanding of the good performance of boosting, see e.g. \cite{J04}, \cite{LV04}, \cite{BLV04}, \cite{ZY05} to cite only a few. The primary focus is to establish the consistency of the method, meaning that near optimal error rates can be achieved provided sufficiently large training data is available.

 On the other hand, theoretical results considering the time dynamics of gradient boosting are relatively scarce. A recent advance in that direction is the model of infinitesimal gradient boosting \citep{DD21} that provides a mathematical characterization of the vanishing-learning-rate limit of tree-based gradient boosting by a nonlinear ordinary differential equation in an infinite-dimensional function space.
More precisely, Theorem~1.9 in \citep{DD21} proves the uniform convergence on compact sets of the rescaled gradient boosting process to a deterministic limit, called infinitesimal gradient boosting, as the learning rate $\lambda$ tends to $0$ and Theorem~1.10 states that the rate of convergence is of order $\sqrt{\lambda}$.  
Infinitesimal gradient boosting can thus be seen as an approximation of gradient boosting  in the small learning rate limit. It has the main advantage to make the dynamics deterministic (for fixed input), i.e.\ the randomness of the stochastic algorithm vanishes. The limit dynamics is then characterized by the infinitesimal boosting operator and it ensures that the training error is non-increasing in time. The main purpose of this paper is to focus on the dependency of the infinitesimal gradient boosting with respect to the input sample and to analyze its large-sample asymptotics.

This approach is very much related to the gradient flow approximation of stochastic gradient descent \citep{DDB20}, with the additional difficulty that the dynamics takes place in a function space and that the structure of tree-functions has to be handled. For more details on infinitesimal gradient boosting and its positioning with respect to the gradient boosting literature, the reader should refer to \cite{DD21}. Other works also embed the discrete dynamics of stochastic algorithms  --- specifically AdaBoost~\citep{LMV15} and gradient descent~\citep{Lat21} --- into continuous-time processes. However, in these articles the infinitesimal-time-step limit is not taken, instead the authors rely on piecewise-deterministic Markov processes to account for the discrete steps of the algorithm.

The purpose of this paper is to analyze the large sample theory of infinitesimal gradient boosting and we prove that a deterministic population limit exists. More precisely, the infinitesimal boosting operator driving the dynamics converges as the sample size goes to infinity, implying the convergence of the solutions of the corresponding ODEs. Once convergence is established, we study some properties of the limit and we prove in particular that the population dynamics ensures that the test error is non-increasing. Furthermore, we explore the long-time properties of population boosting. We expect and conjecture that as times goes to infinity, the test error converges to its minimum and  the boosting predictor to the Bayes predictor. Unfortunately, proving such results turns out to be surprisingly difficult and out of reach for the moment, in spite of substantial efforts, so that we provide only partial results in this direction.

The structure of the paper is as follows. Section~\ref{sec:main-results} is devoted to the presentation of the model, assumptions and  results. We first recall the setting of infinitesimal gradient boosting and our main assumptions;  then we state our  results regarding the convergence of infinitesimal gradient boosting when the sample size goes to infinity (Theorem~\ref{thm:asymptotic_igb}) ; finally we describe  some important properties of the population limit. Notions and preliminary results that play an important role in our analysis are introduced in Section~\ref{sec:preliminaries}.
The proofs of the most important results are postponed to Section~\ref{sec:proofs-main-results}, while some of the proofs of
Section~\ref{sec:preliminaries} can be found in Appendix~\ref{sec:proofs-preliminaries}.

\section{Setting and main results}\label{sec:main-results}

\subsection{Setting and notation}\label{sec:setting}
We introduce the setting of infinitesimal gradient boosting developed by \cite{DD21} and try to provide a short yet self-contained presentation. Further details can be found in \cite{DD21}.

\textbf{Supervised statistical learning framework}. We observe a response variable, or target, $Y$ with state space $\mathcal{Y}\subset \mathbb{R}$ jointly with a vector of covariates, or features, $X$ taking values in $[0,1]^p$. We want to construct a model to predict the target $Y$ in view of the features $X$.

 We denote by $\mathrm{P}$ the joint distribution of  $(X,Y)$ and let $(X_i,Y_i)_{i\geq 1}$ be independent copies of $(X,Y)$.
We will also denote by $\P_X$ the marginal distribution of $X$.
 A predictor is a measurable function $F:[0,1]^p\to\mathbb{R}$ used to predict $Y$ in view of $X$. A loss function $L:\mathcal{Y}\times \mathbb{R}\to\mathbb{R}$ compares the observation $y$ and its prediction $F(x)$; the loss $L(y, F(x))$ is interpreted as a prediction error that we want to minimize.  The Bayes risk is defined by $\inf_F \mathbb{E}[L(Y,F(X))]$, i.e.\ the infimum  expected risk over all possible predictors. A predictor $F^*$ achieving the Bayes risk is called a Bayes predictor. The task is to build a predictor $\hat F_n$ using only the first $n$ observations $(X_i,Y_i)_{1\leq i\leq n}$ as a training set and that approaches the Bayes risk as the size of the training sample grows, that is we want $\mathbb{E}[L(Y,\hat F_n(X))]\to \mathbb{E}[L(Y,F^*(X))]$ as $n\to\infty$.
 
 \begin{example}\label{examples} Throughout the paper, we will illustrate our results and assumptions on the following three cases:
 \begin{itemize}
 \item \textbf{regression with squared error} where $\mathcal{Y}=\mathbb{R}$ and $L(y,z)=\frac{1}{2}(y-z)^2$;  assuming $\mathbb{E}[Y^2]<\infty$, the Bayes predictor is the regression function $F^*(x)=\mathbb{E}[Y\mid X=x]$;
 \item \textbf{binary classification with cross-entropy} where $\mathcal{Y}=\{0,1\}$ and $L(y,z)=-yz+\log(1+e^z)$; letting $p(x)=\mathbb{P}(Y=1\mid X=x)$ be the success probability, the Bayes predictor is equal to its logit $F^*(x)=\log(p(x)/(1-p(x))$;
 \item \textbf{binary classification with exponential loss} where $\mathcal{Y}=\{-1,1\}$ and  $L(y,z)=e^{-yz}$; the Bayes predictor is $F^*(x)=\frac{1}{2}\log(p(x)/(1-p(x))$ with $p(x)=\mathbb{P}(Y=1\mid X=x)$.
  \end{itemize}
 \end{example}

\textbf{Softmax gradient trees}. \cite{F01} introduces gradient boosting as an additive model that sequentially learns a sequence of trees in order to minimize the training error. The procedure is akin to gradient descent and at each step, a gradient tree is fitted and added  to the current model; a shrinking factor called learning rate is introduced that plays the same role as the step size in gradient descent. We detail in the following the construction of gradient trees. For more details on the gradient boosting algorithm, we refer to \cite{F01} and \citet[Chapter~10]{ESL}. 

Given a predictor $F:[0,1]^p\to\mathbb{R}$, the gradient tree is obtained by fitting a (randomized) regression tree to the residuals and performing a line search approximation in the different leaves. More precisely, the residuals of a predictor $F$ are defined as
\[
r_i=-\pderiv{z}{L}(y_i,F(x_i)),\quad 1\leq i\leq n.
\]
Note that Friedman used the term \emph{pseudo-residual} in his seminal paper; here for simplicity we will use the term \emph{residual} to denote the $(r_i)_{1\leq i\leq n}$, even in cases other than the regression case --- in which case we indeed have standard residuals $r_i=y_i-F(x_i)$.
A (randomized) regression tree with depth $d\geq 1$ is fitted on $(x_i,r_i)_{1\leq i\leq n}$, yielding a partition $(A_v)_{v\in\{0,1\}^d}$ of $[0,1]^p$ into hyperrectangles called leaves. In this notation,  $\{0,1\}^d$ corresponds to the $2^d$ terminal nodes of a binary tree with depth $d$, see Section~\ref{sec:preliminaries-algo} for more details.  The line search in leaf $A$ consists in searching for the best additive update, i.e.
\[
\argmin_{h\in\mathbb{R}} \sum_{i=1}^n L(y_i,F(x_i)+h)\1_{A}(x_i).
\]
Its one-step approximation  performs a single Newton-Raphson step, yielding the explicit update
\begin{equation}\label{eq:leaf-values}
\tilde r(A)=-\frac{\sum_{i=1}^n \pderiv{z}{L}(y_i,F(x_i))\1_{A}(x_i)}{\sum_{i=1}^n \pderiv[2]{z}{L}(y_i,F(x_i))\1_{A}(x_i)},
\end{equation}
with the convention $0/0=0$. The gradient tree finally writes
\begin{equation}\label{eq:gradient-tree}
T(x)=\sum_{v\in\{0,1\}^d}  \tilde{r}(A_v)\1_{A_v}(x),\quad x\in[0,1]^p.
\end{equation}
We next provide some details on the construction of the partition $(A_v)_{v\in\{0,1\}^d}$ associated with (randomized) regression tree. Starting with the trivial partition $A=[0,1]^d$ into a single leaf (depth $0$), binary splitting is applied recursively with depth $d$ so as to obtain a partition into $2^d$ leaves. Binary splitting selects a covariate $j\in\{1,\ldots,p\}$ and a threshold $u\in [0,1]$ and then divides the leaf $A=\prod_{l=1}^p [a_l,b_l]$ into 
\begin{align}
A_{0}&=A\cap \{x:x_j\leq (1-u)a_j+ub_j\},\nonumber\\
A_{1}&=A\cap\{x:x_j>(1-u)a_j+ub_j\}.\label{eq:split}
\end{align}
Different splitting rule may be used and are generally defined via a score measuring the heterogeneity between the two leaves. For regression trees, the usual score for the split is the intergroup variance
\begin{align*}
  \widetilde{\Delta} &= \frac{n(A_{0})}{n}(\bar r(A_{0})-\bar r(A))^2+\frac{n(A_{1})}{n}(\bar r(A_{1})-\bar r(A))^2\\
  & = \frac{n(A_{0})}{n}\bar r(A_{0})^2 + \frac{n(A_{1})}{n}\bar r(A_{1})^2 - \frac{n(A)}{n}\bar r(A)^2,
\end{align*}
where $n(A)$ and $\bar r(A)$ denote respectively the number of observations and mean residual in leaf $A$, and similarly for $A_{0}$, $A_{1}$.
The last term can be ignored since it depends only on the original leaf $A$, not on the specific split, so that the score we consider is
\begin{equation}\label{eq:score}
  \Delta = \frac{n(A_{0})}{n}\bar r(A_{0})^2 + \frac{n(A_{1})}{n}\bar r(A_{1})^2.
\end{equation}
In its original version \citep{BFOS84}, the algorithm uses greedy binary splitting, meaning that the covariate $j$ and threshold $u$ that are selected maximize the score $\Delta$. Another possibility, explored by Extra-Trees \citep{GEW06}, is to restrict the search of the best split within a subset of $K$ randomly chosen proposals $(j_k,u_k)_{1\leq k\leq K}$. The proposals are independent and uniform on $\{1,\ldots,p\}\times [0,1]$.  When $K=1$, the split $(j,u)$ is chosen completely at random, whence the name \emph{completely random trees}. Softmax regression trees \citep{DD21}  were proposed for the purpose of regularization of the strong argmax in Extra-Trees. Given  $K$ random proposals $(j_k,u_k)_{1\leq k\leq K}$, the scores $(\Delta_k)_{1\leq k\leq K}$ corresponding to the different proposals are computed and the threshold $(j,u)$ is randomly chosen according to the softmax distribution
\begin{equation}\label{eq:softmax}
\mathbb{P}((j,u)=(j_k,u_k))=\frac{e^{\beta \Delta_k}}{\sum_{l=1}^Ke^{\beta \Delta_l}},\quad 1\leq k\leq K.
\end{equation}
The parameter $\beta\geq 0$ allows to interpolate between completely random trees ($\beta=0$) and Extra-Trees ($\beta=\infty$). 

In order to ease the asymptotic analysis, it is useful to see the procedure  as a function of the empirical distribution $\P_n=\frac{1}{n}\sum_{i=1}^n \delta_{(X_i,Y_i)}$ associated to the sample. We use the short notation $\P_n[G(x,y)]=\int G(x,y)\P_n(\rmd x\rmd y)$ to denote the integral of a function $G:\mathbb{R}\times\mathcal{Y}\to\mathbb{R}$ with respect to $\P_n$. The leaf values defined in Equation~\eqref{eq:leaf-values} can be rewritten as
\begin{equation}\label{eq:leaf-values2}
\tilde r(A)=-\frac{\P_n[\pderiv{z}{L}(y,F(x))\1_{A}(x)]}{\P_n[ \pderiv[2]{z}{L}(y,F(x))\1_{A}(x)]}
\end{equation}
and, since $\bar{r}(A)=n\P_n[\pderiv{z}L(y,F(x))\1_{A}(x)]/n(A)$ and $n(A)=n\P_n(x\in A)$, the score of a binary split $A=A_{0}\cup A_{1}$ defined in Equation~\eqref{eq:score} can be rewritten as
\begin{equation}\label{eq:score2}
 \Delta= \frac{\P_n[\pderiv{z}L(y,F(x))\1_{A_{0}}(x)]^2}{\P_n(x\in A_{0})}+\frac{\P_n[\pderiv{z}L(y,F(x))\1_{A_1}(x)]^2}{\P_n(x\in A_1)}.
\end{equation}

The stochastic algorithm associated with softmax binary splitting and  softmax gradient tree are summarized in Algorithms~\ref{algo1} and~\ref{algo2} respectively. We write $T(x;\P_n,F)$ to emphasize the dependency of the softmax gradient tree on  the sample distribution $\P_n$ and predictor $F$. 

\begin{algorithm}\label{algo1} Softmax binary splitting.
\begin{itemize}
\item Parameters:  $K\geq 1$ and $\beta\geq 0$.
\item Input: sample distribution $\P_n$, predictor $F$, region $A$.
\item Output: randomized partition  $A=A_{0}\cup A_{1}$.
\item Procedure:
\begin{enumerate}
\item Draw $(j_1,u_1),\dots, (j_K,u_K)$ independently uniformly on $\{1,\dots,p\}\times [0,1]$.
\item For $k=1,\ldots,K$, compute the partition $A^k=A_{0}^k\cup A_{1}^k$ according to~\eqref{eq:split} and the score $\Delta_k$ according to~\eqref{eq:score2}.
\item Randomly select $(j,u)$ according to the softmax distribution~\eqref{eq:softmax}.
\item Output the partition $A=A_{0}\cup A_{1}$ associated with $(j,u)$.
\end{enumerate}
\end{itemize}
\end{algorithm}

\begin{algorithm}\label{algo2} Softmax gradient tree $T(x;\P_n,F)$.
\begin{itemize}
\item Parameters:  $d\geq 1$, $K\geq 1$ and $\beta\geq 0$.
\item Input: sample distribution $\P_n$, predictor $F$.
\item Output: randomized tree function  $T(x;\P_n,F)$.
\item Procedure:
\begin{enumerate}
\item Construct a partition $(A_v)_{v\in\{0,1\}^d}$ using softmax binary splitting (Algorithm~\ref{algo1}) recursively with depth $d\geq 1$.
\item Compute the leaf values $\tilde r(A_v)$, $v\in\{0,1\}^d$, according to~\eqref{eq:leaf-values2}.
\item Output the softmax gradient tree $T(x;\P_n,F)$ defined by Equation~\eqref{eq:gradient-tree}.
\end{enumerate}
\end{itemize}
\end{algorithm}

\textbf{Infinitesimal gradient boosting}. Infinitesimal gradient boosting is defined as the vanishing-learning-rate limit of gradient boosting and characterized by a nonlinear ordinary differential equation in function space. The existence of a limit, for the algorithm of gradient boosting with learning rate $\lambda$, as $\lambda\to 0$, is justified in \cite{DD21} for a fixed input $(x_i,y_i)_{1\leq n}$. Let $\mathbb{B}$ denote the space of measurable bounded functions $F:[0,1]^p\to \mathbb{R}$ endowed with the supremum norm $\norm{\cdot}_\infty$.
The limiting dynamics that we obtain are driven by the infinitesimal boosting operator $\mathcal{T}_n:\mathbb{B}\to \mathbb{B}$, defined by
\begin{equation}\label{eq:ibo}
\mathcal{T}_n(F)(x)=\mathbb{E}[T(x;\P_n,F)],\quad F\in\mathbb{B}, x\in[0,1]^p,  
\end{equation}
where $T(x;\P_n,F)$ is the softmax gradient tree defined in Algorithm~\ref{algo2} and  expectation is taken with respect to the algorithm randomness (and not the sample randomness)
--- see \citet[Theorem~1.9]{DD21} for a precise formulation of the convergence of the gradient boosting algorithm to the infinitesimal gradient boosting as $\lambda\to 0$. Interestingly, the  operator $\mathcal{T}_n$ defined as the expectation of a randomized tree can be interpreted as an infinite random forest appearing when averaging a large number of randomly finite trees --- see e.g. \cite{SBV15}  for a discussion on the convergence of random forests when the number of trees goes to infinity.
Under mild assumptions (discussed below), the  operator $\mathcal{T}_n$ is locally Lipschitz in $F$ and infinitesimal gradient boosting $(\hat F_t^n)_{t\geq 0}$ is defined as the unique solution in $\mathbb{B}$ of the differential equation
\begin{equation}\label{eq:ode}
  \deriv{t}{F_t}=\mathcal{T}_n(F_t),\quad t\geq 0,
\end{equation}
with initialization at the constant function
\[
\hat F_0^n=\argmin_{z\in\mathbb{R}}\P_n[L(y,z)].
\]
Importantly, we have shown in \citet[Proposition~4.7]{DD21} that the training error $t\mapsto \P_n[L(y,\hat F_t^n(x))]$ is non-increasing  (which is a natural property resulting from the fact that gradient boosting is meant to minimize the training error) and that the mean residual $\P_n[\pderiv{z}L(y,\hat F_t^n(x))]\equiv 0$ is identically zero (which follows from the line search approximation in the definition of gradient trees).

In order to consider random samples and measurability issues, it is useful to work on a complete and separable function space rather than on the non-separable space~$\mathbb{B}$. The path $(\hat F_t^n)_{t\geq 0}$ a priori defined in $\mathbb{B}$ remains in a function space with strong regularity properties. For $q\in [1,+\infty]$, let $\mathbb{W}^q$ denote the space of functions $F:[0,1]^p\to \mathbb{R}$ of the form
  \[
    F:x\mapsto \int_{[0,x]}f_F(u)\,\pi_0(\rmd u),
  \]
  where $\pi_0$ is a reference probability distribution on $[0,1]^p$ (see Section~\ref{sec:preliminaries-algo} for more details), $f_F\in L^q(\pi_0)$ and $[0,x]=\{u\in [0,1]^p: 0\leq u\leq x\}$. We endow $\mathbb{W}^q$ with the natural Banach norm
  \[
    \norm{F-G}_{\mathbb{W}^q}= \norm{f_F-f_G}_{L^q(\pi_0)}.
  \]
Clearly, for all $1\leq q \leq q' \leq \infty$, we have $\mathbb{B}\supset \mathbb{W}^q\supset \mathbb{W}^{q'}$, and for any $F\in \mathbb{W}^{q'}$, we have $\norm{F}_\infty \leq \norm{F}_{\mathbb{W}^q}\leq \norm{F}_{\mathbb{W}^{q'}}$. The infinitesimal boosting operator has its image in $\mathbb{W}^\infty$ and this implies that infinitesimal gradient boosting $(\hat F_t^n)_{t\geq 0}$ can be seen as a smooth path in $\mathbb{W}^q$ for all $1\leq q\leq \infty$. 

For $1\leq q<\infty$, the Banach space $\mathbb{W}^q$ is separable, and measurability (and even continuity) properties will be established in Section~\ref{sec:preliminaries-measurability} that allow to consider infinitesimal gradient boosting with random sample. This paper studies the almost sure convergence of infinitesimal gradient boosting when the sample size goes to infinity.

\subsection{Assumptions}

We next specify our working assumptions, that are quite general and satisfied by the three cases considered in Example~\ref{examples}. We start with a convexity assumption on the loss function in its second variable.
\begin{assumption} \label{ass:L-regular-convex}
  The function $L:\mathcal{Y}\times\mathbb{R}\to \mathbb{R}$ is $C^2$, with $\pderiv[2]{z}L(y,z)$ positive and locally Lipschitz-continuous in $z$.
  Furthermore, for all $z\in\mathbb{R}$, we have
  \[
  \E\big[L(Y,z)\big]+\E\Big[\abs[\big]{\pderiv{z}L(Y,z)}\Big]+\E\Big[\pderiv[2]{z}L(Y,z)\Big] < \infty
  \]
  and the strictly convex map $z\mapsto \mathbb{E}[L(Y,z)]$ has a unique minimizer.
\end{assumption}

The next assumption requires some integrability of the residuals.

\begin{assumption} \label{ass:Y-integrable}
  There exists $q>1$ such that for any compact subset $K\subset \mathbb{R}$, we have
  \[
    \sup_{x\in[0,1]^p}\E\Big[\sup_{z\in K}\abs[\big]{\pderiv{z}L(Y,z)}^q \;\Big|\;X=x\Big] < \infty.
  \]
\end{assumption}
This assumption is trivially satisfied for classification, either with cross-entropy or exponential loss. In the case of regression it is equivalent to $\sup_x\E[\abs{Y}^q \mid X=x] < \infty$ for some $q>1$.

The two following assumptions are more technical and we believe they are not too stringent.
The first one only concerns the loss function.
\begin{assumption} \label{ass:bound-ratios}
One of the following conditions holds.
\begin{enumerate}[label=(\roman*)]
  \item For any compact subset $K\subset \mathbb{R}$, we have $\displaystyle
    \sup_{\substack{z\in K\\y\in \mathcal{Y}}}\abs[\Big]{\frac{\pderiv{z}L(y,z)}{\pderiv[2]{z}L(y,z)}} < \infty.
  $
  \item For any compact subset $K\subset \mathbb{R}$, we have $\inf_{(y,z)\in \mathcal{Y}\times K} \pderiv[2]{z}L(y,z) > 0$.
\end{enumerate}
\end{assumption}
Note that the first point above covers the classification case, while the second point covers the regression case.

Our last assumption involves the conditional distribution of $Y$ given $X$.
For $i\in\{1,2\}$, we define
\begin{equation} \label{eq:def-ell_i}
\ell_i(x,z) = \E\Big[\pderiv[i]{z}L(Y,z)\;\Big|\;X=x\Big], \qquad x\in [0,1]^p,z\in \mathbb{R}.
\end{equation}

\begin{assumption} \label{ass:bounded-lipshitz}
For any compact subset $K\subset \mathbb{R}$, there exists $C>0$ such that for all $i\in \{1,2\},\,x\in [0,1]^p$ and $z,z'\in K$,
\begin{gather*}
\abs{\ell_i(x,z)} \leq C, \quad \ell_2(x,z) \geq \frac{1}{C},\\
\text{and}\quad \abs{\ell_i(x,z)-\ell_i(x,z')}\leq C\abs{z-z'}.
\end{gather*}
\end{assumption}
In the regression case, \Cref{ass:bounded-lipshitz} is a consequence of \Cref{ass:Y-integrable}. In the classification case, it is trivially satisfied for the two cases we consider.

\begin{example}\label{examples-2}
The different assumptions involve the loss functions and its derivatives. We recall the corresponding formulas for the three main cases from Example~\ref{examples} for which the different assumptions are easily verified.
 \begin{itemize}
 \item regression with squared error:  $\mathcal{Y}=\mathbb{R}$ and 
 \[
 L(y,z)=\frac{1}{2}(y-z)^2,\quad \pderiv{z}{L}(y,z)=z-y,\quad \pderiv[2]{z}{L}(y,z)=1.
 \]
  \item binary classification with cross-entropy: $\mathcal{Y}=\{0,1\}$ and 
 \[
 L(y,z)=-yz+\log(1+e^z),\quad \pderiv{z}{L}(y,z)=-y+p,\quad \pderiv[2]{z}{L}(y,z)=p(1-p),
 \]
  with $p= e^z/(1+e^z)$.
   \item binary classification with exponential loss: $\mathcal{Y}=\{-1,1\}$ and
 \[
 L(y,z)=e^{-yz},\quad \pderiv{z}{L}(y,z)=-ye^{-yz},\quad \pderiv[2]{z}{L}(y,z)=e^{-yz}.
 \]
  \end{itemize}
\end{example}

\subsection{Convergence to the population limit} \label{sec:convergence-results}

Our main results give the large sample asymptotics for the infinitesimal  boosting operator $\mathcal{T}_n$ and infinitesimal gradient boosting $(\hat F_t^n)_{t\geq 0}$. Measurability issues with respect to the input sample $(X_i,Y_i)_{1\leq i\leq n}$ will be considered in Section~\ref{sec:preliminaries-measurability}. We first define the limiting object corresponding to population gradient boosting.

\begin{definition}\label{def:T} ~
\begin{itemize}
\item The population softmax gradient tree $T(x;\P,F)$ is defined as the output of Algorithm~\ref{algo2} where the sample distribution $\P_n$ is replaced by the population distribution $\P$. 
\item The population infinitesimal boosting operator $\mathcal{T}:\mathbb{B}\to\mathbb{W}^\infty$ is defined by
 \[
  \mathcal{T}(F)(x) = \mathbb{E}[T(x;\P,F)],\quad F\in\mathbb{B},  x\in[0,1]^p,
\]
with expectation taken with respect to the randomness of the stochastic algorithm.
\end{itemize}
\end{definition}
An equivalent, more formal definition of $\mathcal{T}$ is given in Section~\ref{sec:preliminaries-algo} where we also check that $\mathcal{T}(F)\in\mathbb{W}^\infty$ for all $F\in\mathbb{B}$. Let $\mathcal{C}_{bb}(\mathbb{B},\mathbb{W}^q)$ be the space of functions from $\mathbb{B}$ to $\mathbb{W}^q$ that are bounded on bounded sets and endowed with the topology of uniform convergence on bounded sets.

\begin{theorem}\label{thm:asymptotic_ibo}
Let $q>1$ satisfy \Cref{ass:Y-integrable}. We have the almost sure convergence, as $n\to\infty$,
\[
\mathcal{T}_n\stackrel{a.s.}\longrightarrow \mathcal{T} \quad \mbox{ in $\mathcal{C}_{bb}(\mathbb{B},\mathbb{W}^q)$}.
\]
\end{theorem}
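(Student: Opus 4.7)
The plan is to deduce the theorem from two ingredients: continuity of the softmax gradient tree as a functional of $(\mathbb{Q},F)$ through a finite family of integrals, and a uniform Glivenko--Cantelli convergence $\P_n\to\P$ on that family, uniform over $F$ in any sup-norm ball of $\mathbb{B}$ (measurability being settled by Section~\ref{sec:preliminaries-measurability}). For the continuity step, I would unwind the recursive construction of Algorithms~\ref{algo1}--\ref{algo2}: conditionally on all random proposals $(j_{k,v},u_{k,v})_{k,v}$ drawn at each internal node $v$ of the depth-$d$ binary tree, the output $T(\cdot;\mathbb{Q},F)$ is a deterministic continuous function of the finite family of integrals
\[
\mathbb{Q}\!\left[\pderiv[i]{z}L(y,F(x))\1_A(x)\right],\ i\in\{1,2\},\quad \mathbb{Q}[\1_A(x)],
\]
with $A$ ranging over a proposal-dependent finite collection of axis-aligned hyperrectangles. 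Smoothness of the softmax weights~\eqref{eq:softmax} in the scores, the rational form of the leaf values~\eqref{eq:leaf-values2}, and the uniform positivity provided by \Cref{ass:bound-ratios} make this dependence continuous.

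\textbf{Uniform Glivenko--Cantelli.} Fix $M>0$ and $B_M=\{F:\norm{F}_\infty\leq M\}$. The analytic core is the almost sure estimate
\[
\sup_{F\in B_M,\,A\in\mathcal{R}}\Big|\P_n[\phi(y,F(x))\1_A(x)]-\P[\phi(y,F(x))\1_A(x)]\Big|\tol 0,
\]
for $\phi\in\{1,\pderiv{z}L,\pderiv[2]{z}L\}$ and $\mathcal{R}$ the VC class of axis-aligned rectangles. I would split $\phi(y,F(x))=\ell_i(x,F(x))+U_i(x,y)$ with $U_i$ centered given $X$: the noise part $U_i\1_A$ is controlled by a uniform LLN combining the VC property of $\mathcal{R}$ with the $L^q$-moment bound of \Cref{ass:Y-integrable}, the conditional centering removing the awkward dependence on $F$. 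The conditional-mean part relies on the boundedness and uniform Lipschitz-in-$z$ dependence of $\ell_i$ from \Cref{ass:bounded-lipshitz} together with the VC structure of $\mathcal{R}$.

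\textbf{Conclusion via dominated convergence.} Combining the two steps, almost surely and for every realization of the algorithm's proposals, the integrals driving the tree under $\P_n$ converge uniformly over $F\in B_M$ to their $\P$-counterparts, and by continuity this transfers to $T(\cdot;\P_n,F)\to T(\cdot;\P,F)$ in $\mathbb{W}^q$ uniformly in $F\in B_M$. A dominated-convergence argument then upgrades this to $\mathcal{T}_n(F)\to \mathcal{T}(F)$ in $\mathbb{W}^q$, uniformly in $F\in B_M$; a uniform integrable envelope for $\norm{T(\cdot;\P_n,F)}_{\mathbb{W}^q}$ is extracted from~\eqref{eq:leaf-values2} and Assumptions~\ref{ass:bound-ratios}--\ref{ass:bounded-lipshitz}.

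\textbf{Main obstacle.} The hard step is the uniform Glivenko--Cantelli on the \emph{non-separable} ball $B_M\subset\mathbb{B}$: classical covering-number or bracketing arguments do not apply directly. The proof must exploit the very restricted way $F$ enters the algorithm -- only through point values $F(X_i)$ in the empirical case and only through composition with the regular map $\ell_i(x,\cdot)$ in the population case. The Lipschitz regularity of $z\mapsto\ell_i(x,z)$ from \Cref{ass:bounded-lipshitz} is the key tool enabling a reduction to classes tame enough for VC-subgraph technology to apply.
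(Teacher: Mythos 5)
Your overall architecture (continuity of the tree output in the driving integrals, a uniform law of large numbers over the relevant index class, and a domination argument to pass to the expectation over the splitting scheme) matches the paper's, which factors the argument through a deterministic convergence lemma for $\mathcal{T}_{\mu_k}\to\mathcal{T}_\mu$ whose hypotheses are then verified via a Glivenko--Cantelli proposition. However, there is a genuine gap at exactly the step you flag as the main obstacle, and the fix is not the one you sketch.

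The uniform Glivenko--Cantelli statement over the sup-norm ball $B_M=\{F\in\mathbb{B}:\norm{F}_\infty\le M\}$ is false. Already in regression, where $\pderiv{z}L(y,F(x))=F(x)-y$ and $\ell_1(x,F(x))=F(x)-F^*(x)$, take $A=[0,1]^p$ and $F=\1_{\{X_1,\dots,X_n\}}\in B_1$: if $\P_X$ is atomless then $\P_n[F(x)]=1$ while $\P[F(x)]=0$, so $\sup_{F\in B_1}\abs[\big]{\P_n[\ell_1(x,F(x))]-\P[\ell_1(x,F(x))]}\ge 1$ for every $n$. Your proposed remedies do not repair this: conditional centering removes the dependence on $F$ only from the noise part, and the Lipschitz continuity of $z\mapsto\ell_i(x,z)$ does not tame the conditional-mean part, since the index set is still the full $L^\infty$ ball (in regression the class $\{x\mapsto F(x)-F^*(x):F\in B_M\}$ is a translate of that ball, which is neither VC-subgraph nor Glivenko--Cantelli). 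The missing ingredient is to take the uniformity over bounded subsets $B$ of $\mathbb{W}^q$ rather than of $\mathbb{B}$: by Proposition~\ref{prop:W-to-uniform-norm} such a $B$ is uniformly bounded \emph{and uniformly equicontinuous}, with modulus $\omega(\delta)\le(C\delta(1-\log\delta)^{d-1})^{1/q'}M$, and this is what yields finite bracketing numbers for $\{(x,y)\mapsto\pderiv{z}L(y,F(x)):F\in B\}$ via the classical bracketing bound for equicontinuous classes; the rectangles are then incorporated by a separate bracketing of $\mathcal{A}$ against an integrable envelope. This restriction is also all that is needed downstream, since the boosting trajectories stay in bounded subsets of $\mathbb{W}^\infty\subset\mathbb{W}^q$.

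A second, smaller gap: your dominated-convergence step needs a $Q_0$-integrable, uniform-in-$n$ envelope for the leaf values $\abs{\tilde r(A_v)}^q$, and under Assumption~\ref{ass:bound-ratios}(ii) the denominator $\P_n(x\in A_v)$ can be arbitrarily small for leaves of small mass, so no crude bound exists. The paper controls $\sup_{n\ge1}\P_n[\sup_{z}\abs{\pderiv{z}L(y,z)}\1_A(x)]/\P_n(x\in A)$ by recognizing a backwards martingale and applying Doob's $L^q$ maximal inequality uniformly in $A$; some argument of this kind is needed and is absent from your sketch.
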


Next we consider the dynamics associated with the population infinitesimal boosting operator which defines the population gradient boosting process.
\begin{proposition}\label{prop:ODE}
For all $F_0\in\mathbb{W}^\infty$,  the differential equation in the space $\mathbb{W}^\infty$
 \begin{equation}\label{eq:ODE}
  \deriv{t}{F_t}=\mathcal{T}(F_t),\quad t\geq 0,
  \end{equation}
  has a unique maximal solution  started at $F_0$ at time $0$. 
\end{proposition}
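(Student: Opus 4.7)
The plan is to establish Proposition~\ref{prop:ODE} via the Cauchy--Lipschitz (Picard--Lindel\"of) theorem in the Banach space $\mathbb{W}^\infty$, so the central analytic task is to show that $\mathcal{T}\colon\mathbb{W}^\infty\to\mathbb{W}^\infty$ is locally Lipschitz: for every $R>0$, there exists $L_R>0$ such that
\[
\norm{\mathcal{T}(F)-\mathcal{T}(G)}_{\mathbb{W}^\infty}\le L_R\norm{F-G}_{\mathbb{W}^\infty}
\qquad \text{whenever } \norm{F}_{\mathbb{W}^\infty},\norm{G}_{\mathbb{W}^\infty}\le R.
\]
Since $\norm{F}_\infty\le\norm{F}_{\mathbb{W}^\infty}$, in this range the values $F(x)$ stay in the compact set $K=[-R,R]$, which is precisely the situation in which Assumptions~\ref{ass:bound-ratios} and~\ref{ass:bounded-lipshitz} deliver uniform bounds on $\ell_1,\ell_2$, a uniform lower bound on $\ell_2$, and uniform Lipschitz estimates in $z$.

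The proof then proceeds in parallel with the argument by which \cite{DD21} established the analogous property for the sample operator $\mathcal{T}_n$, with integrals against $\P_n$ replaced by integrals against $\P$. Unwinding Algorithm~\ref{algo2}, the softmax gradient tree $T(x;\P,F)$ depends on $F$ only through quantities of the form $\P[\ell_i(x,F(x))\1_A(x)]$ for $i\in\{1,2\}$ (together with $\P_X(A)$), which enter both the scores $\Delta_k$ of~\eqref{eq:score2} and the leaf values $\tilde r(A;F)$ of~\eqref{eq:leaf-values2}. Assumption~\ref{ass:bounded-lipshitz} immediately yields that these building blocks are uniformly bounded and Lipschitz in $F$ with respect to $\norm{\cdot}_\infty$, with constants independent of the leaf. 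The softmax map~\eqref{eq:softmax} is globally Lipschitz in its arguments, and the leaf-value quotient is well controlled because its denominator is bounded away from $0$ on leaves of positive mass. An induction on the tree depth $d$ then propagates Lipschitz control through the recursive softmax partition, and taking the expectation over the algorithm randomness delivers a Lipschitz bound for $\mathcal{T}(F)$ in $\norm{\cdot}_\infty$. Combining this with the explicit $\mathbb{W}^\infty$ representation of $\mathcal{T}(F)$ built in Section~\ref{sec:preliminaries-algo} upgrades the bound to the $\mathbb{W}^\infty$ norm.

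Once $\mathcal{T}$ is locally Lipschitz, Cauchy--Lipschitz gives a unique local solution started at $F_0$, and the standard concatenation of compatible local solutions produces a unique maximal solution on an interval $[0,T^*)$ with $T^*\in(0,+\infty]$, satisfying $\norm{F_t}_{\mathbb{W}^\infty}\to\infty$ as $t\to T^*$ whenever $T^*<\infty$. The main technical obstacle is the depth-$d$ induction underlying the Lipschitz estimate: the random leaves themselves depend on $F$ via the softmax split, so one must simultaneously propagate Lipschitz control of the partition and of the leaf values through the recursion. This is exactly the difficulty already overcome in \cite{DD21} for $\mathcal{T}_n$, and Assumption~\ref{ass:bounded-lipshitz} is tailored so that the same induction transfers verbatim to the population operator $\mathcal{T}$.
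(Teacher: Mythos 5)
Your proposal is correct and follows essentially the same route as the paper: Proposition~\ref{prop:ODE} is reduced to the local Lipschitz continuity of $\mathcal{T}:\mathbb{W}^\infty\to\mathbb{W}^\infty$ (the paper's Proposition~\ref{prop:ibo_Lipschitz}), obtained by transferring the argument of \cite{DD21} for $\mathcal{T}_n$ to the population measure $\P$ via Assumption~\ref{ass:bounded-lipshitz}, after which Picard--Lindel\"of gives the unique maximal solution. The only cosmetic difference is that where you invoke an induction on the tree depth to propagate Lipschitz control of the $F$-dependent random partition, the paper instead fixes the randomness under $Q_0$ and isolates all $F$-dependence in the explicit product density $\rmd Q_F^{\P}/\rmd Q_0$ of~\eqref{eq:RN} and in the leaf values, combining Lemmas~\ref{lem:1-prop-ibo-lipschitz}, \ref{lem:2-prop-ibo-lipschitz} and~\ref{lem:bound-norm-Wq} to obtain the $\mathbb{W}^\infty$ estimate.
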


It is not clear under our general assumptions whether the solution of~\eqref{eq:ODE} is defined for all time $t\geq 0$ or explosion may occur in finite time. We denote by $t_{\max}$ the maximal time of definition of the solution $(F_t)_{t\geq 0}$. A linear growth condition ensures that the solution is defined for all time  $t\geq 0$.
\begin{lemma}\label{lem:tmax}
 If the loss function satisfies
 \begin{equation}\label{eq:infinite-tmax}
   \E\Bigg[\bigg|\frac{\pderiv{z}L(Y,z)}{\pderiv[2]{z}L(Y,z)}\bigg|\;\Big|\; X=x \Bigg]\leq A|z|+B,\quad  x\in[0,1]^p,z\in\mathbb{R},
 \end{equation}
for some constants $A,B\geq 0$, then $t_{\max}=+\infty$.
\end{lemma}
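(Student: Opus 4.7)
The plan is to derive an a~priori linear growth bound of the form $\|\mathcal{T}(F)\|_{\mathbb{W}^\infty}\le A'\|F\|_\infty+B'$, and then conclude via Grönwall's inequality together with the standard blow-up criterion for ODEs in Banach spaces: under the local Lipschitz property of $\mathcal{T}$ that underlies Proposition~\ref{prop:ODE}, if $t_{\max}<\infty$ then $\|F_t\|_{\mathbb{W}^\infty}\to\infty$ as $t\uparrow t_{\max}$. The linear growth bound will rule out such finite-time blow-up.

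The key step is to control the leaf values $\tilde r(A)$ of the softmax gradient tree. The random tree $T(\cdot;\P,F)$ returned by Algorithm~\ref{algo2} is a step function of the form $\sum_{v\in\{0,1\}^d}\tilde r(A_v)\1_{A_v}$, and both its supremum and $\mathbb{W}^\infty$ norms are controlled by $\max_v|\tilde r(A_v)|$. It therefore suffices to bound $|\tilde r(A)|$ uniformly over hyperrectangles $A\subseteq[0,1]^p$ with $\P_X(A)>0$. Writing
\[
\tilde r(A)=-\frac{\E[\pderiv{z}L(Y,F(X))\1_A(X)]}{\E[\pderiv[2]{z}L(Y,F(X))\1_A(X)]}
\]
and factoring $|\pderiv{z}L|=(|\pderiv{z}L|/\pderiv[2]{z}L)\cdot \pderiv[2]{z}L$, the hypothesis~\eqref{eq:infinite-tmax} applied inside the conditional expectation given $X$ should yield
\[
\bigl|\E[\pderiv{z}L(Y,F(X))\1_A(X)]\bigr|\le (A\|F\|_\infty+B)\,\E[\pderiv[2]{z}L(Y,F(X))\1_A(X)].
\]
Dividing by the positive denominator gives $|\tilde r(A)|\le A\|F\|_\infty+B$, and averaging over the algorithmic randomness yields the desired linear bound on $\mathcal{T}(F)$.

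With this estimate in hand, I apply it to the maximal solution $(F_t)_{0\le t<t_{\max}}$. For $t<t_{\max}$,
\[
\|F_t\|_\infty\le \|F_0\|_\infty+\int_0^t\|\mathcal{T}(F_s)\|_\infty\,\rmd s\le \|F_0\|_\infty+\int_0^t(A\|F_s\|_\infty+B)\,\rmd s,
\]
and Grönwall's inequality gives $\|F_t\|_\infty\le (\|F_0\|_\infty+Bt)e^{At}$ on every compact subinterval of $[0,t_{\max})$. Feeding this back into the $\mathbb{W}^\infty$ bound on $\|\mathcal{T}(F_t)\|_{\mathbb{W}^\infty}$ produces a finite bound on $\|F_t\|_{\mathbb{W}^\infty}$ on $[0,t_{\max})$, which contradicts the blow-up criterion unless $t_{\max}=+\infty$.

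The main obstacle is the factorization step in the leaf-value bound: since $\pderiv{z}L(Y,z)$ and $\pderiv[2]{z}L(Y,z)$ both depend on the same random variable $Y$, the conditional expectation hypothesis $\E\bigl[|\pderiv{z}L/\pderiv[2]{z}L|\bigm|X=x\bigr]\le A|z|+B$ does not a~priori imply the factored bound $\E[|\pderiv{z}L(Y,z)|\mid X=x]\le (A|z|+B)\,\ell_2(x,z)$. The natural resolution is either to read~\eqref{eq:infinite-tmax} as a pointwise-in-$y$ inequality (which is what is verified in the regression example, where $\pderiv[2]{z}L\equiv 1$), or to exploit the uniform positivity and boundedness of $\ell_2$ guaranteed by Assumption~\ref{ass:bounded-lipshitz} on compact subsets of $z$ to absorb the correlation.
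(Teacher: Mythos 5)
Your proof follows essentially the same route as the paper's: a linear growth bound $\norm{\mathcal{T}(F)}_{\mathbb{W}^\infty}\le A'\norm{F}_\infty+B'$ obtained by controlling the leaf values via \eqref{eq:infinite-tmax}, followed by Grönwall's inequality and the blow-up criterion for locally Lipschitz ODEs in Banach spaces. The factorization subtlety you flag is genuine --- the paper's one-line justification glosses over the fact that a bound on $\E[|\pderiv{z}L/\pderiv[2]{z}L|\mid X=x]$ does not formally control the ratio of expectations --- but it is harmless in the cases where \eqref{eq:infinite-tmax} is actually verified (regression, where $\pderiv[2]{z}L\equiv 1$, and exponential loss, where the ratio is identically $1$), so your proposed resolution is the right one.
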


Note that Equation~\eqref{eq:infinite-tmax} holds in the case of regression (under \Cref{ass:Y-integrable}, which implies that $\sup_x\E[\abs{Y}\mid X=x]<\infty$) or classification with exponential loss. In the case of classification with cross-entropy, Assumption~\eqref{eq:infinite-tmax} is not fulfilled in general and we do not know whether $t_{\max}$ is finite or not.

We finally consider convergence of the gradient boosting process $(\hat F_t^n)_{t\geq 0}$. The space of continuous functions $\mathcal{C}([0,t_{\max}),\mathbb{W}^q)$ is endowed with the topology of uniform convergence on compact sets.
\begin{theorem} \label{thm:asymptotic_igb}
  Let $q>1$ satisfy \Cref{ass:Y-integrable}.
Consider $\hat F_0= \argmin_{z\in\mathbb{R}} \mathbb{E}[L(Y,z)]$. As $n\to\infty$,  we have the almost sure convergence 
\[
(\hat F_t^n)_{t\geq 0}\stackrel{a.s.}\longrightarrow (\hat F_t)_{t\geq 0} \quad \mbox{ in $\mathcal{C}([0,t_{\max}),\mathbb{W}^q)$,}
\]
where $(\hat F_t)_{ t\geq 0}$ denotes the unique solution of~\eqref{eq:ODE} started from $\hat F_0$.
\end{theorem}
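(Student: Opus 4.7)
The plan is to deduce convergence of the ODE solutions from convergence of the driving operators (\Cref{thm:asymptotic_ibo}) and of the initial conditions, through the classical continuous-dependence argument for ODEs. I would fix any $T\in(0,t_{\max})$, localize the analysis to a closed ball of $\mathbb{W}^q$ that contains the full limit trajectory $(\hat F_t)_{t\in[0,T]}$ with a unit safety margin, run a Gronwall estimate on this ball using the local Lipschitz property of $\mathcal{T}$ together with the uniform convergence $\mathcal{T}_n\to\mathcal{T}$, and bootstrap to show that the approximating trajectories $\hat F_\cdot^n$ do not leave the ball before time $T$, for $n$ large. Letting $T\uparrow t_{\max}$ will then give convergence in $\mathcal{C}([0,t_{\max}),\mathbb{W}^q)$.

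\textbf{Convergence of initial conditions.} Both $\hat F_0^n$ and $\hat F_0$ are constant functions, equal to the unique minimizers of $z\mapsto \P_n[L(y,z)]$ and $z\mapsto \E[L(Y,z)]$ respectively. By the strong law of large numbers, $\P_n[L(y,z)]\to\E[L(Y,z)]$ almost surely for each fixed $z$; these functions are strictly convex by \Cref{ass:L-regular-convex}, and it is a classical fact that pointwise convergence of convex functions is automatically uniform on compact subsets. This forces $\hat F_0^n\to\hat F_0$ almost surely in $\mathbb{R}$, hence in $\mathbb{W}^q$.

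\textbf{Gronwall step and bootstrap.} Fix $T\in(0,t_{\max})$, set $R:=1+\sup_{t\in[0,T]}\norm{\hat F_t}_{\mathbb{W}^q}<\infty$ (finite by continuity on a compact interval), and let $\bar B$ denote the closed $\mathbb{W}^q$-ball of radius $R$. Let $L_R$ be a Lipschitz constant for $\mathcal{T}$ on $\bar B$ (this is morally the same local Lipschitz property that underlies \Cref{prop:ODE}), and define the exit time $\tau_n:=\inf\{t\geq 0:\norm{\hat F_t^n}_{\mathbb{W}^q}\geq R\}$. For $t\in[0,\tau_n\wedge T]$, the integral identity
\[
\hat F_t^n-\hat F_t=(\hat F_0^n-\hat F_0)+\int_0^t\!\big(\mathcal{T}_n(\hat F_s^n)-\mathcal{T}(\hat F_s^n)\big)\,\rmd s+\int_0^t\!\big(\mathcal{T}(\hat F_s^n)-\mathcal{T}(\hat F_s)\big)\,\rmd s,
\]
together with the bounds $\epsilon_n:=\sup_{F\in\bar B}\norm{\mathcal{T}_n(F)-\mathcal{T}(F)}_{\mathbb{W}^q}$ on the first integrand and $L_R\norm{\hat F_s^n-\hat F_s}_{\mathbb{W}^q}$ on the second, yields by Gronwall
\[
\sup_{t\in[0,\tau_n\wedge T]}\norm{\hat F_t^n-\hat F_t}_{\mathbb{W}^q}\leq\big(\norm{\hat F_0^n-\hat F_0}_{\mathbb{W}^q}+T\epsilon_n\big)e^{L_RT}.
\]
By \Cref{thm:asymptotic_ibo} and the previous step, the right-hand side tends to $0$ almost surely. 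Since $R$ was chosen strictly above $\sup_{t\leq T}\norm{\hat F_t}_{\mathbb{W}^q}$, this forces $\tau_n>T$ for $n$ large enough, and the estimate then holds on all of $[0,T]$. As $T<t_{\max}$ was arbitrary, this is precisely the desired convergence in $\mathcal{C}([0,t_{\max}),\mathbb{W}^q)$.

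\textbf{Main obstacle.} The delicate point is securing the local Lipschitz estimate $\norm{\mathcal{T}(F)-\mathcal{T}(G)}_{\mathbb{W}^q}\leq L_R\norm{F-G}_{\mathbb{W}^q}$ on $\bar B$ used in the Gronwall step. The operator $\mathcal{T}$ takes values in $\mathbb{W}^\infty$, and bounded $\mathbb{W}^q$-balls embed into bounded $\mathbb{B}$-balls via $\norm{F}_\infty\leq\norm{F}_{\mathbb{W}^q}$, so the \emph{inputs} are easily controlled; the subtlety is that the \emph{output} must be measured in $\mathbb{W}^q$-norm, which forces one to unpack the softmax gradient tree and track how the leaf values and the softmax distribution over the split proposals depend Lipschitzly on $F$, using \Cref{ass:bound-ratios,ass:bounded-lipshitz}. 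These estimates (and their analogues for $\mathcal{T}_n$, with uniform-in-$n$ constants on a large-probability event guaranteed by the SLLN applied to the finitely many integrals entering the leaf values and scores) should be proved once and for all in the preliminary section \Cref{sec:preliminaries}; once in hand, the Gronwall--bootstrap argument above goes through essentially automatically.
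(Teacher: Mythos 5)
Your proposal is correct and follows essentially the same route as the paper: almost sure convergence of the initial conditions via convexity and the law of large numbers, the decomposition $\mathcal{T}_n(\hat F^n_s)-\mathcal{T}(\hat F_s)=(\mathcal{T}_n-\mathcal{T})(\hat F^n_s)+(\mathcal{T}(\hat F^n_s)-\mathcal{T}(\hat F_s))$ controlled by Theorem~\ref{thm:asymptotic_ibo} and the local Lipschitz property of Proposition~\ref{prop:ibo_Lipschitz}, a Gr\"onwall estimate on a ball with a unit safety margin, and a bootstrap to keep $\hat F^n$ in that ball up to time $T<t_{\max}$. Your exit-time formulation of the bootstrap is a clean variant of the paper's incremental stopping-time argument, and the ``main obstacle'' you identify is indeed dispatched in the preliminaries (Lemma~\ref{lem:bound-norm-Wq} and Proposition~\ref{prop:ibo_Lipschitz}); note only that no uniform-in-$n$ Lipschitz constant for $\mathcal{T}_n$ is actually needed, since your decomposition applies the Lipschitz bound solely to the limit operator $\mathcal{T}$.
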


\subsection{Properties of population infinitesimal gradient boosting}\label{sec:infinite-population-results}

Gradient boosting is designed in order to minimize the training error and it is indeed proved that the training error $t\mapsto P_n[L(y,\hat F_t^n(x))]$ is non-increasing, see Proposition~4.6 in \citet{DD21}. In the population limit, gradient boosting has the fundamental property that the test error is non increasing. 

\begin{proposition}\label{prop:decreasing_error}
For all initialization $\hat F_0\in\mathbb{B}$, the test error $t\mapsto \mathbb{E}[L(Y,\hat F_t(X))]$ is non-increasing.

\end{proposition}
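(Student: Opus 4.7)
The plan is to differentiate the test error $e(t):=\E[L(Y,\hat F_t(X))]$ and show that $e'(t)\le 0$ by exploiting the explicit Newton--Raphson form of the leaf values of the softmax gradient tree.

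\textbf{Step 1: Time derivative of the test error.} Since $(\hat F_t)$ solves the ODE~\eqref{eq:ODE} in $\mathbb{W}^\infty$, the map $t\mapsto \hat F_t(x)$ is $C^1$ for each $x\in[0,1]^p$ with derivative $\mathcal T(\hat F_t)(x)$. By the chain rule (using \Cref{ass:L-regular-convex}), $s\mapsto L(Y,\hat F_s(X))$ is absolutely continuous in $s$ with derivative $\partial_z L(Y,\hat F_s(X))\,\mathcal T(\hat F_s)(X)$. On any compact subinterval of $[0,t_{\max})$, the paths $(\hat F_s)$ and $(\mathcal T(\hat F_s))$ stay in a bounded ball of $\mathbb{W}^\infty$, so $\hat F_s(X)$ takes values in a fixed compact $K\subset\mathbb{R}$ and $\mathcal T(\hat F_s)(X)$ is uniformly bounded. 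Combined with \Cref{ass:Y-integrable}, this furnishes an integrable dominant and allows differentiation under the expectation:
\[
e'(t) \;=\; \E\bigl[\partial_z L(Y,\hat F_t(X))\,\mathcal{T}(\hat F_t)(X)\bigr].
\]

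\textbf{Step 2: Swap with the algorithm expectation.} By \Cref{def:T}, $\mathcal{T}(\hat F_t)(x)=\E_{\mathrm{algo}}[T(x;\P,\hat F_t)]$, where the expectation is taken over the randomness of Algorithm~\ref{algo2}. Since the bounds of Step~1 hold also for $T(x;\P,\hat F_t)$ uniformly in the algorithm randomness (leaf values being controlled thanks to \Cref{ass:bound-ratios,ass:bounded-lipshitz}), Fubini's theorem yields
\[
e'(t)\;=\;\E_{\mathrm{algo}}\!\Bigl[\,\E\bigl[\partial_z L(Y,\hat F_t(X))\,T(X;\P,\hat F_t)\bigr]\Bigr].
\]

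\textbf{Step 3: Tree-wise identity.} Condition on the algorithm, so that the partition $(A_v)_{v\in\{0,1\}^d}$ and leaf values $\tilde r(A_v)$ become deterministic. Writing $T(X;\P,\hat F_t)=\sum_v \tilde r(A_v)\1_{A_v}(X)$ and recalling from~\eqref{eq:leaf-values2} (with $\P_n$ replaced by $\P$) that
\[
\tilde r(A_v)=-\frac{\P[\partial_z L(y,\hat F_t(x))\1_{A_v}(x)]}{\P[\partial_{zz}L(y,\hat F_t(x))\1_{A_v}(x)]},
\]
a direct expansion gives
\[
\E\bigl[\partial_z L(Y,\hat F_t(X))\,T(X;\P,\hat F_t)\bigr]
=-\sum_{v\in\{0,1\}^d}\frac{\P[\partial_z L(y,\hat F_t(x))\1_{A_v}(x)]^{2}}{\P[\partial_{zz}L(y,\hat F_t(x))\1_{A_v}(x)]}\;\le\;0,
\]
since $\partial_{zz} L>0$ by \Cref{ass:L-regular-convex}, with the convention $0/0=0$ covering the degenerate leaves (where $\P_X(A_v)=0$ forces both numerator and denominator to vanish). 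Averaging over the algorithm gives $e'(t)\le 0$ on every compact subinterval of $[0,t_{\max})$, which yields the claim.

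\textbf{Main obstacle.} The core computation is essentially a one-line algebraic identity driven by the very definition of the leaf values; the only genuine technicality lies in Step~1, justifying the exchange of time derivative and expectation, which however follows comfortably from the $\mathbb{W}^\infty$-boundedness of the trajectory on compact time intervals combined with the integrability and Lipschitz assumptions on $L$.
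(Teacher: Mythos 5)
Your proposal is correct and follows essentially the same route as the paper: differentiate under the expectation (justified by the boundedness of the trajectory in $\mathbb{W}^\infty$ on compact time intervals together with \Cref{ass:Y-integrable}), then use the Newton--Raphson form of the leaf values to recognize the derivative as minus an expectation of nonnegative ratios $\P[\partial_z L\,\1_{A_v}]^2/\P[\partial^2_z L\,\1_{A_v}]$. Your Steps 2--3 merely spell out the Fubini exchange and the degenerate-leaf convention that the paper's computation leaves implicit.
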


The specific initialization $\hat F_0^n=\argmin_{z\in\mathbb{R}}\P_n[L(y,z)]$ ensures that the finite sample  model has centered residual on the training set, that is $t\mapsto \P_n[\pderiv{z}{L}(y,\hat F_t^n(x))]$ is identically null, see Proposition~4.6 in \citet{DD21}. Interestingly, this property is preserved in the population limit. 

\begin{proposition} \label{prop:zero_residual}
For $\hat F_0 = \argmin_{z\in\mathbb{R}} \mathbb{E}[L(Y,z)]$, the population gradient boosting has centered residuals on the population, that is $t\mapsto \mathbb{E}[\pderiv{z}{L}(Y,\hat F_t(X))]$ is identically null.
\end{proposition}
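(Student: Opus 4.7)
The plan is to introduce the scalar quantity $g(t) := \mathbb{E}[\pderiv{z}{L}(Y, \hat F_t(X))]$ and show that it satisfies a linear ODE with zero initial condition, forcing $g \equiv 0$. The initial value $g(0) = 0$ is immediate from the definition of $\hat F_0$ as the unique minimizer of the strictly convex, differentiable map $z \mapsto \mathbb{E}[L(Y,z)]$ under \Cref{ass:L-regular-convex}: the first-order optimality condition is exactly $\mathbb{E}[\pderiv{z}{L}(Y, \hat F_0)] = 0$.

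The core of the proof is the identity, valid for any $F$ lying in a bounded subset of $\mathbb{B}$,
\begin{equation*}
\mathbb{E}\big[\pderiv[2]{z}{L}(Y, F(X))\,\mathcal{T}(F)(X)\big] = -\mathbb{E}\big[\pderiv{z}{L}(Y, F(X))\big].
\end{equation*}
I would prove this by unfolding $\mathcal{T}(F)(x) = \mathbb{E}[T(x;\P,F)]$, applying Fubini (justified by the integrability bounds from \Cref{ass:Y-integrable,ass:bounded-lipshitz}) to swap the algorithmic randomness with the $(X,Y)$-expectation, and conditioning on the random partition $(A_v)_{v \in \{0,1\}^d}$ produced by Algorithm~\ref{algo2}. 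Writing $T(X;\P,F) = \sum_v \tilde r(A_v) \1_{A_v}(X)$ and using the explicit Newton-Raphson formula~\eqref{eq:leaf-values2} for $\tilde r(A_v)$, each summand becomes $\tilde r(A_v)\,\P[\pderiv[2]{z}{L}(y,F(x))\1_{A_v}(x)] = -\P[\pderiv{z}{L}(y,F(x))\1_{A_v}(x)]$. Summing over the leaves and using that they partition $[0,1]^p$ collapses the expression to $-\mathbb{E}[\pderiv{z}{L}(Y, F(X))]$, as claimed.

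Once this identity is in hand, differentiating $g(t)$ using the ODE~\eqref{eq:ODE} and the chain rule (the interchange of derivative and expectation being controlled by the local Lipschitz and boundedness properties of $\pderiv[2]{z}{L}$ from \Cref{ass:L-regular-convex,ass:bounded-lipshitz} along the locally bounded trajectory) yields $g'(t) = -g(t)$, and combined with $g(0) = 0$ this gives $g \equiv 0$ on $[0, t_{\max})$. The main technical nuisance I expect is the $0/0 = 0$ convention in~\eqref{eq:leaf-values2}: one must check that when $\P[\pderiv[2]{z}{L}(y,F(x))\1_{A_v}(x)] = 0$, the corresponding numerator $\P[\pderiv{z}{L}(y,F(x))\1_{A_v}(x)]$ also vanishes, so that the leaf-wise identity still holds trivially. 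This is immediate from \Cref{ass:bounded-lipshitz}, which provides a uniform lower bound $\ell_2(x,z) \geq 1/C > 0$: the denominator can vanish only on leaves of zero $\P_X$-mass, on which the numerator is automatically zero as well.
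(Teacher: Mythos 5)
Your proof is correct and follows exactly the route the paper takes: the paper's own proof simply states that the computation (analogous to that of Proposition~\ref{prop:decreasing_error}) yields $\deriv{t}\E[\pderiv{z}L(Y,\hat F_t(X))]=-\E[\pderiv{z}L(Y,\hat F_t(X))]$, which together with the first-order optimality condition at $t=0$ gives the result. You supply precisely this computation, including the correct handling of the $0/0=0$ convention via the lower bound on $\ell_2$ from Assumption~\ref{ass:bounded-lipshitz}.
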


 We next focus on the long time behavior of population infinitesimal gradient boosting.  It is related to the  critical points of the ODE \eqref{eq:ODE} that we first characterize.   Let $\mathcal{A}_d$ denote  the class of hypercubes $A=[a,b]$, $a,b\in[0,1]^p$, $a\leq b$, where $[a_j,b_j]=[0,1]$ for all but at most $d$ dimensions. Clearly, the leaves of a gradient tree with depth $d$ belong to $\mathcal{A}_d$. For a subset of indices $J\subset\{1,\ldots,p\}$, we write $X_J=(X_j)_{j\in J}$. 

\begin{proposition}\label{prop:critical-points}
Let $F\in\mathbb{B}$. The following properties are equivalent:
\begin{enumerate}[label=(\roman*)]
\item $\mathcal{T}(F)=0$;
\item $\mathbb{E}\big[\pderiv{z}{L}(Y,F(X))\1_{A}(X)\big]=0$ for all $A\in \mathcal{A}_d$;
\item For all $J\subset \{1,\ldots,p\}$ with cardinal at most $d$, $\mathbb{E}\big[\pderiv{z}{L}(Y,F(X))\mid X_J\big]=0$ a.s.
\end{enumerate}
\end{proposition}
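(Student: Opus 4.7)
The plan is to prove the cycle (ii) $\Rightarrow$ (i) $\Rightarrow$ (iii) $\Rightarrow$ (ii). Two of the three implications are routine. For (ii) $\Rightarrow$ (i), the leaves of every realization of a depth-$d$ softmax gradient tree lie in $\mathcal{A}_d$, so (ii) makes all leaf values~\eqref{eq:leaf-values2} vanish and forces $T(\cdot;\P,F)\equiv 0$, hence $\mathcal{T}(F)=0$. For (iii) $\Rightarrow$ (ii), any $A\in\mathcal{A}_d$ is $\sigma(X_J)$-measurable for some $|J|\leq d$ and the tower property gives $\mathbb{E}[\pderiv{z}{L}(Y,F(X))\1_A(X)]=\mathbb{E}[\1_A(X)\mathbb{E}[\pderiv{z}{L}(Y,F(X))\mid X_J]]=0$.

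The real content is (i) $\Rightarrow$ (iii). I would start by contracting $\mathcal{T}(F)$ against the conditional gradient. Setting $g(A):=\mathbb{E}[\pderiv{z}{L}(Y,F(X))\1_A(X)]$ and $h(A):=\mathbb{E}[\pderiv[2]{z}{L}(Y,F(X))\1_A(X)]$, Fubini (whose applicability is granted by \Cref{ass:Y-integrable,ass:bounded-lipshitz}) together with the leaf-value formula~\eqref{eq:leaf-values2} yields
\begin{equation*}
0 \;=\; \mathbb{E}\bigl[\mathcal{T}(F)(X)\,\pderiv{z}{L}(Y,F(X))\bigr] \;=\; -\,\mathbb{E}_{\text{algo}}\!\left[\sum_{v\in\{0,1\}^d}\frac{g(A_v)^2}{h(A_v)}\right].
\end{equation*}
By \Cref{ass:bounded-lipshitz}, $h(A)\geq c\,\P(X\in A)$ for some $c>0$ depending only on $\|F\|_\infty$, so each summand is non-negative; the identity therefore forces $g(A_v)=0$ almost surely over the algorithm randomness, for every leaf $v$.

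Next I would upgrade this to a deterministic statement by exploiting the positive density of the split proposals. Fix distinct $j_1,\ldots,j_k\in\{1,\ldots,p\}$ with $k\leq \min(d,p)$, and let $\mathcal{E}$ be the event that along the leftmost branch the algorithm selects, in order, dimensions $j_1,\ldots,j_k$ and then $d-k$ further splits in dimension $j_1$. Because the softmax rule~\eqref{eq:softmax} is strictly positive and the proposals uniform, $\P(\mathcal{E})>0$ and, conditional on $\mathcal{E}$, the $d$ thresholds $(u_1,\ldots,u_k,v_1,\ldots,v_{d-k})$ admit a strictly positive joint density on $[0,1]^d$. Unfolding~\eqref{eq:split} shows that the leftmost leaf under $\mathcal{E}$ equals $\{X_{j_1}\leq u_1\prod_{i=1}^{d-k}v_i,\ X_{j_2}\leq u_2,\ \ldots,\ X_{j_k}\leq u_k\}$. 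Substituting $w=u_1\prod_i v_i$, whose marginal still has positive density on $(0,1)$, and invoking the continuity of $u\mapsto g(\{X_{j_1}\leq u,\ \ldots\})$ (dominated convergence, via \Cref{ass:Y-integrable}), I obtain $g(\{X_{j_1}\leq u_1,\ldots,X_{j_k}\leq u_k\})=0$ for \emph{every} $(u_1,\ldots,u_k)\in[0,1]^k$. An inclusion--exclusion in $k$ coordinates then gives $g(A)=0$ for every hyperrectangle $A$ constraining only $\{j_1,\ldots,j_k\}$; these rectangles form a $\pi$-system generating $\sigma(X_J)$, so $\mathbb{E}[\pderiv{z}{L}(Y,F(X))\mid X_J]=0$ a.s., and the tower property extends the conclusion to all $|J|\leq d$.

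The step I expect to be the main obstacle is the density/continuity upgrade in the third paragraph: tree-realizable leaves vanish only on a fairly thin subfamily of hyperrectangles, and the gap to the full class $\mathcal{A}_d$ must be bridged. The key device is the use of $d-k$ additional splits in the single dimension $j_1$, which is what lets the product $u_1\prod_i v_i$ sweep through $[0,1]$ with positive density; continuity of $g$ then closes the argument.
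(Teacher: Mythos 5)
Your proposal is correct and follows essentially the same route as the paper: the hard direction is handled by contracting $\mathcal{T}(F)$ against the gradient to obtain $0=-\E_{\text{algo}}\bigl[\sum_v g(A_v)^2/h(A_v)\bigr]$, forcing the leaf quantities to vanish almost surely over the algorithm randomness, and then upgrading to all of $\mathcal{A}_d$ by a density/continuity argument (the paper then closes the cycle via (ii)$\iff$(iii) rather than (i)$\Rightarrow$(iii), an immaterial difference). Your third paragraph is in fact a more explicit execution of the paper's terser ``dense subset of $\mathcal{A}_d$ plus a standard continuity argument''; just note that $u\mapsto g(\{X_{j_1}\le u,\dots\})$ is only right-continuous when $\P_X$ charges hyperplanes, so the a.e.-to-everywhere step should approximate the closed sets from above rather than invoke full continuity.
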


\begin{example}\label{ex:crit-point-reg} In the case of regression, $\pderiv{z}{L}(y,F(x))=F(x)-y$.  We consider $L^2=L^2([0,1]^p,\P_X)$ and the subspace $L^2_d=\overline{\mathrm{span}}(\1_{A},\ A\in\mathcal{A}_d)$ --- in other words it is the subset of functions $f\in L^2$ that can be written
  \[
  f(x_1,\dots,x_p) = \sum_{\substack{J\subset [\![1,p]\!]\\ \abs{J}\leq d}} f_J\big((x_i)_{i\in J}\big), \qquad x\in [0,1]^p.
  \]
Let $\mathcal{P}_d:L^2\to L^2$ be the orthogonal projection on $L^2_d$. We deduce from Proposition~\ref{prop:critical-points} that $\mathcal{T}(F)=0$ if and only if $\mathcal{P}_d(F)=\mathcal{P}_d(F^*)$ $\P_X$-a.s., where $F^*$ denotes the regression function $F^*(x)=\mathbb{E}[Y\mid X=x]$. In other words, the set of critical points is exactly the affine subspace orthogonal to $L^2_d$ containing $\mathcal{P}_d(F^*)$. When $d\geq p$, then $L^2_d=L^2$ and  $\mathcal{P}_d(F^*)=F^*$ is the only critical point. Furthermore, since  population infinitesimal gradient boosting remains in the subspace $L^2_d$, only critical points in $L^2_d$ are relevant. The only critical point in $L^2_d$ is the projection $\mathcal{P}_d(F^*)$ which is also the minimizer of the squared error $\mathbb{E}[(Y-F(X))^2]$ under the constraint $F\in L^2_d$. 
\end{example}

For a general loss function and assuming $d\geq p$, using point (iii) of \Cref{prop:critical-points} and the convexity of $L$ in its second variable, we see that we have $\mathcal{T}(F)=0$ if and only if
\[
F(x)=\argmin_{z\in\mathbb{R}} \mathbb{E}[L(Y,z)|X=x]\quad \mbox{$\P_X$-a.s.}
\]
In statistical learning, a desirable general property is consistency, which means that the test error converges to its minimum. Such consistency for population boosting is considered in \cite{B04}  for a version of Adaboost. Proposition~\ref{prop:critical-points} shows that the critical points of the ODE are exactly the minimizer of  the expected loss, which is a first step toward consistency. Unfortunately, it seems difficult to prove formally consistency and we are able only to prove a weaker statement.

 We focus on the case of regression  where consistency is equivalent to the convergence $\hat F_t\to \mathcal{P}_d(F^*)$ in $L^2=L^2([0,1]^p,\P_X)$ with the notation introduced in \Cref{ex:crit-point-reg}. We cannot prove strong convergence but weak convergence only. We recall that in the Hilbert space $L^2$, a sequence  $G_n$ converges weakly to $G$, noted $G_n\overset{w}{\to}G$, if the convergence of inner products $\langle G_n,H\rangle\to \langle G,H\rangle$ holds for all $H\in L^2$. 
\begin{proposition} \label{prop:weak-consistency} ~
\begin{enumerate}[label=(\roman*)]
  \item In the case of regression,  weak convergence holds:
  \[
  \hat{F}_t \overset{w}{\longrightarrow} \mathcal{P}_d(F^*),\quad \mbox{as $t\to\infty$}.
  \]
  \item For completely random trees (case $\beta=0$), strong convergence holds:
  \[
  \hat{F}_t \overset{L^2}{\longrightarrow} \mathcal{P}_d(F^*),\quad \mbox{as $t\to\infty$}.
  \]
\end{enumerate}
\end{proposition}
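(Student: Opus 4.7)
The plan is to handle the two parts separately, using the test error $V(F):=\mathbb{E}[L(Y,F(X))]=\tfrac{1}{2}\mathbb{E}[(Y-F(X))^2]$ as a Lyapunov function. On $L^2_d$ the orthogonal decomposition yields $V(F)-V(\mathcal{P}_d(F^*))=\tfrac{1}{2}\|F-\mathcal{P}_d(F^*)\|_{L^2(\P_X)}^2$, and a direct computation along the flow gives
\[
\frac{d}{dt}V(\hat F_t)=-\Psi(\hat F_t),\qquad \Psi(F):=\mathbb{E}\Bigg[\sum_{v\in\{0,1\}^d}\P_X(A_v)\,\bar r_F(A_v)^2\Bigg]\ge 0,
\]
where $\bar r_F(A)=\mathbb{E}[F^*(X)-F(X)\mid X\in A]$ and the outer expectation runs over the randomness of Algorithm~\ref{algo2}. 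Two facts will be used repeatedly: specialising \Cref{prop:critical-points} to regression gives $\Psi(F)=0\Leftrightarrow\mathcal{P}_d(F)=\mathcal{P}_d(F^*)$, and Jensen's inequality yields the control $\|\mathcal{T}(F)\|_{L^2(\P_X)}^2\le\Psi(F)$.

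For part~(i), I would run an Opial-type argument. Since $\|\hat F_t-\mathcal{P}_d(F^*)\|_{L^2}^2=2(V(\hat F_t)-V(\mathcal{P}_d(F^*)))$ is non-increasing, the trajectory is bounded in $L^2(\P_X)$ and integrating gives $\int_0^\infty\Psi(\hat F_t)\,dt<\infty$. Combined with the Jensen bound this yields $\mathcal{T}(\hat F_\cdot)\in L^2([0,\infty); L^2(\P_X))$, so Cauchy--Schwarz implies $\|\hat F_{t+h}-\hat F_t\|_{L^2}\to 0$ uniformly for $h\in[0,1]$ as $t\to\infty$. For any sequence $s_k\to\infty$, one selects $t_k\in[s_k,s_k+1]$ with $\Psi(\hat F_{t_k})\to 0$, which is possible because $\int_{s_k}^{s_k+1}\Psi(\hat F_u)\,du\to 0$. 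Along a weakly convergent subsequence $\hat F_{t_{k_j}}\weakcv F_\infty$, the weak lower semi-continuity of $\Psi$ --- each summand is the square of a continuous linear form in $F$, and Fatou handles the tree expectation --- forces $\Psi(F_\infty)=0$. Since $\hat F_t\in L^2_d$ for all $t$ (constants belong to $L^2_d$ and $\mathcal{T}(\mathbb{B})\subset L^2_d$ as the trees' leaves lie in $\mathcal{A}_d$) and $L^2_d$ is weakly closed, $F_\infty\in L^2_d$, hence $F_\infty=\mathcal{P}_d(F^*)$. The closeness $\|\hat F_{t_k}-\hat F_{s_k}\|_{L^2}\to 0$ transfers this weak limit to $(\hat F_{s_k})$, so every weak limit point of $(\hat F_t)_{t\ge 0}$ equals $\mathcal{P}_d(F^*)$, and weak compactness of bounded subsets of $L^2(\P_X)$ delivers weak convergence of the whole trajectory.

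For part~(ii), when $\beta=0$ the partition in Algorithm~\ref{algo2} is drawn independently of the data, making $\mathcal{T}$ affine: one checks $\mathcal{T}(F)=\mathcal{A}(F^*-F)$ where $\mathcal{A}:L^2(\P_X)\to L^2(\P_X)$ is defined by $\mathcal{A}(G)(x)=\mathbb{E}[\bar G(A(x))]$, with $A(x)$ the random leaf containing $x$. The operator $\mathcal{A}$ is bounded with $\|\mathcal{A}\|_{\mathrm{op}}\le 1$ (Jensen), self-adjoint because $\langle\mathcal{A}G,H\rangle=\mathbb{E}[\sum_v\P_X(A_v)\bar G(A_v)\bar H(A_v)]$ is symmetric in $(G,H)$, and positive semi-definite. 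The ODE linearises to $\frac{d}{dt}(\hat F_t-F^*)=-\mathcal{A}(\hat F_t-F^*)$, with solution $\hat F_t-F^*=e^{-t\mathcal{A}}(\hat F_0-F^*)$. The Borel functional calculus gives $e^{-t\mathcal{A}}\to P_{\ker\mathcal{A}}$ in the strong operator topology, since $e^{-t\lambda}\to\1_{\{\lambda=0\}}$ pointwise on $\sigma(\mathcal{A})\subset[0,\|\mathcal{A}\|]$ with $|e^{-t\lambda}|\le 1$ and dominated convergence applies in the spectral measure. \Cref{prop:critical-points} identifies $\ker\mathcal{A}=(L^2_d)^\perp$; combined with $\hat F_0\in L^2_d$, this gives $P_{\ker\mathcal{A}}(\hat F_0-F^*)=(I-\mathcal{P}_d)(\hat F_0-F^*)=\mathcal{P}_d(F^*)-F^*$, whence $\hat F_t\to\mathcal{P}_d(F^*)$ strongly in $L^2(\P_X)$.

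The main obstacle in part~(i) is the lack of weak continuity of the nonlinear operator $\mathcal{T}$ when $\beta>0$, which blocks a direct LaSalle-style invariance argument in the weak topology: one cannot simply pass to the limit in $\mathcal{T}(\hat F_{s_k})$ along an arbitrary weakly convergent sequence. The Opial-type detour circumvents this by coupling any sequence $s_k$ to a nearby $t_k$ along which $\Psi$ --- rather than $\mathcal{T}$ --- vanishes, at the cost of yielding only weak (not strong) convergence. This difficulty disappears in part~(ii), where linearity turns the ODE into a semigroup equation and the spectral theorem delivers strong convergence without requiring a spectral gap for $\mathcal{A}$.
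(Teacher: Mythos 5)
Your overall strategy coincides with the paper's: the Lyapunov function $\tfrac12\norm{\hat F_t-F^*}_{L^2}^2$, the dissipation identity $\deriv{t}V(\hat F_t)=-\Psi(\hat F_t)$ with $\int_0^\infty\Psi(\hat F_t)\,\rmd t<\infty$, weak compactness of the bounded trajectory in $L^2_d$, identification of limit points via $\Psi=0$ and \Cref{prop:critical-points}, and, for part (ii), the linearisation $\mathcal{T}(F)=\mathcal{L}(F^*-F)$ with $e^{-t\mathcal{L}}\to P_{\ker\mathcal{L}}$ strongly and $\ker\mathcal{L}=(L^2_d)^\perp$. Part (ii) is correct and is essentially the paper's proof. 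In part (i) your endgame differs slightly: you select times $t_k\in[s_k,s_k+1]$ with $\Psi(\hat F_{t_k})\to0$ and transfer the limit via the $L^2$-in-time bound on $\mathcal{T}(\hat F_\cdot)$ (so you only need weak \emph{lower semi-}continuity of $\Psi$), whereas the paper argues that a limit point $\ell$ with $\phi(\ell)>0$ would force the trajectory to spend infinite time in a weak neighbourhood where $\phi>c/2$, contradicting integrability (which uses full weak continuity of $\phi$, its Lemma on weak continuity of $\phi$, plus the uniform bound $\norm{\mathcal{T}(F)}_{L^2}\leq 2^dK^{2^d-1}\norm{F-F^*}_{L^2}$). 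Both variants are sound.

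The one genuine weak point is your justification of the weak lower semi-continuity of $\Psi$ when $\beta>0$: the summands are \emph{not} merely squares of continuous linear forms, because the expectation over the tree is taken under $Q_F$, i.e.\ each term carries the factor $\frac{\rmd Q_F}{\rmd Q_0}(\xi)$, which depends on $F$ nonlinearly through the softmax of the scores. Fatou applied naively to ``the tree expectation'' does not dispose of this dependence, and establishing weak continuity of $\phi$ on bounded sets in the presence of this density is precisely the content of the paper's \Cref{lem:weak-cont-calt} (proved there via a finite $\epsilon$-net in the unit ball of $L^2$ and uniform continuity of the softmax functional). Your claim is nevertheless true and repairable with a short argument you did not give: on bounded subsets of the separable Hilbert space $L^2$ the weak topology is metrizable, for each fixed splitting scheme the scores --- hence the density, by dominated convergence over the auxiliary proposals --- are weakly sequentially continuous in $F$ and uniformly bounded by $K^{2^d-1}$, so a second application of dominated convergence gives weak sequential continuity of $\Psi$ on bounded sets. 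With that step supplied, your proof is complete.
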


\begin{remark}
\Cref{prop:weak-consistency} states that the population infinitesimal gradient boosting behaves quite well for estimating the regression function: when $d\geq p$, it satisfies $\hat F_t\to F^*$  as $t\to\infty$ in some weak sense. However,  we only have access in practice to finite samples and we must use $(\hat F_t^n)$ to approximate $F^*$. It is known that finite-population (infinitesimal) gradient boosting is prone to overfitting \citep[see][Proposition~4.11]{DD21} so that the limit $t\to\infty$ does not commute with the large sample limit  $n\to\infty$ from \Cref{thm:asymptotic_igb}.

In practice overfitting can be avoided thanks to early stopping  and $\hat F_{t_n}^n$ is used as an approximation of $F^*$. We can deduce from \Cref{thm:asymptotic_igb}  the existence of a (sufficiently slowly) increasing sequence $t_n\to \infty$ such that $\sup_{t\in [0,t_n]} \abs{\hat{F}_t^n - \hat{F}_t} \to 0$ almost surely. Identifying the growth rate of a sequence $(t_n)$ ensuring this property would be a significant advance in the study of sample-based gradient boosting but we have not managed to do so with the techniques we develop.
\end{remark}

\begin{remark}
We would expect that strong convergence holds in the case $\beta>0$ as well; a hint in this direction is the following remark.
Let us temporarily write $\mathcal{T}_\beta$ instead of $\mathcal{T}$ to highlight the dependence on the parameter $\beta$, and consider $(\tilde{F}_t)_{t\geq 0}$ the solution of $\deriv{t}\tilde{F}_t=\mathcal{T}_\beta(\tilde{F}_t)$, started from some $\tilde{F}_0\in L^2_d$ rather than from the constant $F_0$.
In the case of regression, for the proofs of the results above we will show and use the fact that
\[
  \deriv{t}\norm[\big]{\tilde{F}_t-\mathcal{P}_d(F^*)}_{L^2}^{2} \,\big|_{t=0} = 2\langle \mathcal{P}_d(F^*) -\tilde{F}_0, \mathcal{T}_\beta(\tilde{F}_0)\rangle \; \leq 0.
\]
Then one can show, after tedious calculations, that the latter quantity is decreasing when $\beta$ increases.
This suggests that, at least around $t=0$, $\norm[\big]{\tilde{F}_t-\mathcal{P}_d(F^*)}_{L^2}^{2}$ tends to $0$ faster when $\beta>0$ than when $\beta=0$.
However, it is not obvious to compare the whole trajectories of $(\tilde{F}_t)_{t\geq 0}$ as a function of $\beta$ and with the techniques we used, we were not able to prove the convergence $\hat{F}_t \overset{L^2}{\longrightarrow} \mathcal{P}_d(F^*)$ in the case $\beta>0$.
\end{remark}

Similar results may be expected in the general case, for instance in classification, where we expect that the test error converges to the minimal risk over the space in which $(\hat{F}_t)_{t\geq 0}$ lives.
This remark leads us to the following conjecture.

\begin{conjecture}~
\begin{enumerate}[label=(\roman*)]
\item In the case of regression, for $\beta> 0$, strong convergence holds
\[
\hat{F}_t \overset{L^2}{\longrightarrow} \mathcal{P}_d(F^*),\quad \mbox{as $t\to\infty$}.
\]
\item In the general case, for $\beta\geq 0$,
\[
  \E[L(Y,\hat{F}_t(X))] \longrightarrow \inf_{F\in \mathbb{W}^\infty\cap L^2_d}\E[L(Y,F(X))], \quad  \mbox{as $t\to\infty$}.
\]
\end{enumerate}
\end{conjecture}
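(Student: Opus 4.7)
The plan is to attack part (ii) by a Lyapunov/LaSalle-type argument, then deduce part (i) from it via a Hilbert-space Pythagoras identity.

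\textbf{Reduction of (i) to (ii).} Since $\hat F_t\in L^2_d$ and $F^*-\mathcal{P}_d(F^*)$ is orthogonal to $L^2_d$, in the regression case Pythagoras gives
\[
\E[L(Y,\hat F_t(X))] = \tfrac12\E[(Y-F^*(X))^2] + \tfrac12\norm{F^*-\mathcal{P}_d(F^*)}_{L^2}^2 + \tfrac12\norm{\mathcal{P}_d(F^*)-\hat F_t}_{L^2}^2,
\]
and the first two terms equal $\inf_{F\in L^2_d}\E[L(Y,F(X))]$. Hence (ii) would immediately yield $\norm{\hat F_t - \mathcal{P}_d(F^*)}_{L^2}\to 0$, i.e.\ (i). So it suffices to focus on (ii), with the observation that in the regression case this reduces to identifying the limit of the non-increasing quantity $\norm{\hat F_t-\mathcal{P}_d(F^*)}_{L^2}^2$ (whose monotonicity was already flagged in the remark preceding the conjecture).

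\textbf{LaSalle framework for (ii).} Set $\Phi(t):=\E[L(Y,\hat F_t(X))]$. By \Cref{prop:decreasing_error}, $\Phi$ is non-increasing, and it is bounded below by the conjectured infimum, so $\Phi(t)\to E_\infty$. Differentiating under the expectation,
\[
\Phi'(t) = \E\Big[\pderiv{z}L(Y,\hat F_t(X))\,\mathcal{T}(\hat F_t)(X)\Big] \leq 0,
\]
and $\int_0^\infty(-\Phi'(t))\,dt = \Phi(0)-E_\infty<\infty$, so $\Phi'(t_k)\to 0$ along some $t_k\to\infty$. I would then use a priori bounds on $\hat F_t$ in $\mathbb{W}^q$ (from the regularity of tree functions, together with \Cref{lem:tmax} or a bespoke estimate on the orbit) to extract a subsequential limit $\hat F_{t_k}\to F_\infty$, ideally strongly in $L^2(\P_X)$ via a compact embedding $\mathbb{W}^q\hookrightarrow L^2$ in the spirit of Arzel\`a--Ascoli.

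\textbf{Identification of the limit.} Two ingredients are then needed: (a) continuity of $\mathcal{T}$ at $F_\infty$ along the chosen topology, so that $\mathcal{T}(\hat F_{t_k})\to\mathcal{T}(F_\infty)$; and (b) a coercivity bound of the form $-\Phi'(t)\geq c(R)\norm{\mathcal{T}(\hat F_t)}_{L^2}^2$ whenever $\norm{\hat F_t}_\infty\leq R$, the infinite-population analogue of the Newton-step descent underlying Proposition~4.6 in \citet{DD21}. Combining (a)--(b) with $\Phi'(t_k)\to 0$ would force $\mathcal{T}(F_\infty)=0$. Assuming $d\geq p$, \Cref{prop:critical-points} then identifies $F_\infty$ pointwise with the conditional Bayes predictor, so $\E[L(Y,F_\infty(X))]=\inf$, and continuity of $\Phi$ along the subsequence yields $E_\infty=\inf$, proving (ii).

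\textbf{Main obstacle.} The bottleneck is precisely the pair (a)--(b): establishing a quantitative coercivity of $-\Phi'$ by $\norm{\mathcal{T}}^2$, uniformly along the orbit, together with enough continuity of $\mathcal{T}$ to pass to the limit. The operator $\mathcal{T}$ combines softmax weights in split selection with leaf-wise Newton--Raphson leaf values, both nonlinear functionals of $F$ entering through the quadratic scores~\eqref{eq:score2}. For $\beta=0$ the softmax degenerates to uniform weights and a clean coercivity is built into the algorithm, which is why \Cref{prop:weak-consistency}(ii) can be proved directly. For $\beta>0$ the softmax couples the scores in a way that resists a direct lower bound on $-\Phi'$; only weak compactness is then cheap, yielding the weak convergence of \Cref{prop:weak-consistency}(i) but stopping short of the strong convergence conjectured here. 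This is presumably the reason the authors leave (i) open for $\beta>0$ and (ii) open in general.
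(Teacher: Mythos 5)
This statement is stated in the paper as a \emph{conjecture}: the authors explicitly say they could not prove it despite substantial efforts, so there is no proof in the paper to compare against, and your proposal must be judged on whether it actually closes the problem. It does not. Your reduction of (i) to (ii) via Pythagoras in the regression case is correct, and your LaSalle-type programme is the natural line of attack, but you yourself flag that the two decisive steps are missing, and they are genuinely missing rather than routine. Concretely: (a) the coercivity bound $-\Phi'(t)\geq c\,\norm{\mathcal{T}(\hat F_t)}_{L^2}^2$ is exactly the point where the softmax weights for $\beta>0$ obstruct the argument --- the identity \eqref{eq:time-diff-of-error} expresses $-\Phi'$ as an integral of squared leaf averages weighted by $\rmd Q_{F}/\rmd Q_0$, which is bounded \emph{above} by $K^{2^d-1}$ but has no useful lower bound, so $-\Phi'(t_k)\to 0$ does not transfer to $\mathcal{T}(\hat F_{t_k})\to 0$; and (b) the compactness you invoke is not available: the orbit is bounded in $L^2$ (by \Cref{prop:decreasing_error}) but the only $\mathbb{W}^q$ estimates in the paper (\Cref{prop:ibo_Lipschitz}, \Cref{lem:tmax}) allow $\norm{\hat F_t}_{\mathbb{W}^\infty}$ to grow exponentially in $t$, so no uniform-in-time equicontinuity, hence no Arzel\`a--Ascoli extraction of a strongly convergent subsequence. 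Moreover $\mathcal{T}$ is Lipschitz on bounded sets for $\norm{\cdot}_\infty$, not continuous for the $L^2$ topology in which your subsequential limit would live, so step (a) of your identification also lacks a basis; this is precisely why the authors retreat to the \emph{weakly} continuous functional $\phi$ of \Cref{lem:weak-cont-calt} and obtain only weak convergence in \Cref{prop:weak-consistency}(i).

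Two smaller points. Your identification step assumes $d\geq p$ so that critical points are Bayes predictors, but the conjecture is stated for general $d$; for $d<p$ one must instead show that the critical point reached is the minimizer over $L^2_d$, which in the non-regression case requires an additional convexity argument (point (iii) of \Cref{prop:critical-points} plus convexity of $L$ handles this, but it should be said). Also note that in the regression case, \Cref{prop:weak-consistency}(i) already gives $\hat F_t\rightharpoonup\mathcal{P}_d(F^*)$ weakly, and $\norm{\hat F_t-F^*}_{L^2}$ is non-increasing hence convergent; strong convergence is therefore \emph{equivalent} to showing that this norm converges to $\norm{\mathcal{P}_d(F^*)-F^*}_{L^2}$ rather than to something strictly larger. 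Reformulating the open problem this way makes clear that what is missing is a quantitative statement preventing the descent from stalling, which is the same obstruction as your item (b). In short: the plan is sensible and correctly locates the difficulty, but it is a research programme, not a proof.
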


\section{Preliminaries}\label{sec:preliminaries}
We now present some technical results that will be used for the proofs of the main results from Section~\ref{sec:main-results}.
The proofs of the most technical results in \Cref{sec:preliminaries} (Sections~\ref{sec:preliminaries-w-norm} and~\ref{sec:prelim-glivenko-cantelli}) are provided in Appendix~\ref{sec:proofs-preliminaries}, while the other proofs are deferred to \Cref{sec:proofs-main-results}.

\subsection{Explicit formulas associated with Algorithm~\ref{algo2}}\label{sec:preliminaries-algo}
We recall some technical background and explicit formulas  associated with Algorithm~\ref{algo2}. We refer to \citet[Section~2.2]{DD21} for more details. 

The binary rooted tree with depth $d\geq 1$ (from graph theory) is defined on the vertex set $\mathscr{T}_d=\cup_{l=0}^d \{0,1\}^l$. The vertex set is divided into the internal nodes $v\in \mathscr{T}_{d-1}$ and the terminal nodes $v\in\{0,1\}^d$, also called leaves. The construction of the partition in Algorithm~\ref{algo2} starts from the  the single component $A_\emptyset=[0,1]^p$ indexed by the root $\emptyset$ and performs softmax binary splitting recursively with depth $d$ so as to end up with a partition $(A_v)_{v\in\{0,1\}^d}$ indexed by the leaves. This can be encoded thanks to the notion of \textit{splitting scheme} 
\[
\xi=(j_v,u_v)_{v\in\mathscr{T}_{d-1}}\in ([\![1,p]\!]\times (0,1))^{\mathscr{T}_{d-1}}
\]
giving the covariate and threshold used at each internal node to perform the split.  When $\beta=0$, the softmax distribution~\eqref{eq:softmax} is the uniform distribution so that the splits $(j_v,u_v)$ are independent and uniform on $[\![1,p]\!]\times (0,1)$. This situation corresponds to a completely random tree and we denote by $Q_0$ the distribution of the associated splitting scheme. When $\beta>0$, the distribution of the splitting scheme $\xi$ depends on the sample distribution $\P_n$ and model $F$ and is denoted by $Q_{n,F}$. According to Proposition~2.1 in \cite{DD21}, softmax binary splitting as defined in Algorithm~\ref{algo1}, implies 
\begin{equation}\label{eq:RN}
\frac{\rmd Q_{n,F}}{\rmd Q_0}(\xi^1)= 
\int \prod_{v\in \mathscr{T}_{d-1}}  \frac{K\exp(\beta \Delta(j_v^1,u_v^1; A_v))}{\sum_{k=1}^K \exp(\beta \Delta(j_v^{k},u_v^{k}; A_v))} \,Q_0(\rmd \xi^2)\cdots Q_0(\rmd \xi^K),
\end{equation}
with $(A_v)_{v\in\mathscr{T}_d}$ the different regions induced by the splitting scheme $\xi^1$, $\Delta(j,u;A)$ the score resulting from the split at $(j,u)$ of region $A$, and $\xi^k=(j_v^k,u_v^k)_{v\in\mathscr{T}_{d-1}}$, $1\leq k\leq K$. In this formula, the different scores $\Delta$ implicitly depend on $F$ and on the sample distribution $\P_n$ according to Equation~\eqref{eq:score2}. Note that in Equation~\eqref{eq:RN}, each factor in the product is bounded by $K$ so that the Radon-Nikodym derivative~\eqref{eq:RN} satisfies
\begin{equation}\label{eq:RN-bounded}
\Big|\frac{\rmd Q_{n,F}}{\rmd Q_0}(\xi)\Big|\leq K^{2^d-1}.
\end{equation}

Once the distribution of the splitting scheme is made explicit, one can easily deduce an integral formula for the infinitesimal boosting operator defined by \eqref{eq:ibo}. Indeed, Equations~\eqref{eq:gradient-tree} and~\eqref{eq:leaf-values2} together imply
\[
T(\cdot;\P_n,F)=-\sum_{v\in\{0,1\}^d} \frac{\P_n[\pderiv{z}{L}(y,F(x))\1_{A_v}(x)]}{\P_n[ \pderiv[2]{z}{L}(y,F(x))\1_{A_v}(x)]}\1_{A_v}(\cdot),
\]
where $(A_v)_{v\in\{0,1\}^d}$ denotes the random partition driven by a splitting scheme with distribution $Q_{n,F}$. Integrating with respect to the splitting scheme, we deduce
\begin{equation}\label{eq:ibo2}
\mathcal{T}_n(F)=-\int \sum_{v\in\{0,1\}^d} \frac{\P_n[\pderiv{z}{L}(y,F(x))\1_{A_v}(x)]}{\P_n[ \pderiv[2]{z}{L}(y,F(x))\1_{A_v}(x)]}\1_{A_v}(\cdot)\frac{\rmd Q_{n,F}}{\rmd Q_0}(\xi) \, Q_0(\rmd \xi).
\end{equation}

\begin{remark}
In \Cref{def:T}, \Cref{algo2} is also considered when the sample distribution $\P_n$ is replaced by the population distribution $\P$. Equations~\eqref{eq:RN} and~\eqref{eq:ibo2} still hold true with straightforward modifications ($\P_n$ replaced by $\P$).
\end{remark}

\subsection{Norm estimate and regularity in \texorpdfstring{$\mathbb{W}^q$}{Wq}} \label{sec:preliminaries-w-norm}

The space $\mathbb{W}^q$ was introduced in the end of \Cref{sec:setting} with no details on the reference measure $\pi_0$. We now provide further details as well as useful properties.
Recall that, for $q\in [1,+\infty]$,  $\mathbb{W}^q$ denotes the space of functions $F:[0,1]^p\to \mathbb{R}$ of the form
\[
F:x\mapsto \int_{[0,x]}f_F(u)\,\pi_0(\rmd u),
\]
where $\pi_0$ is a reference probability distribution on $[0,1]^p$ and $f_F\in L^q(\pi_0)$. Naturally, $\mathbb{W}^q$ is endowed with the norm $\norm{F}_{\mathbb{W}^q}=\norm{f_F}_{L^q(\pi_0)}$. The reference probability distribution $\pi_0$ is related to completely random trees and to their splitting scheme distribution denoted by $Q_0$.
The idea is that $\pi_0$ should distribute its mass on the typical points where the tree functions $T(\cdot;\P_n,F)$ vary, and we take these points as the ``corners'' of the tree leaves (\Cref{lem:bound-norm-Wq} below justifies this choice).
To define formally $\pi_0$, recall that
a splitting scheme $\xi$ induces a partition of $[0,1]^d$ into leaves, denoted $(A_v)_{v\in\{0,1\}^d}$. For a hyperrectangle $A=[a,b]\subset [0,1]^p$, we define  the set $c(A)=\prod_{j=1}^p \{a_j,b_j\}$ of its $2^d$ vertices --- or corners. The point measure
\[
	\pi_\xi = \sum_{v\in A_v}\sum_{x\in c(A_v)}\1_{[0,1)^d}(x) \delta_x
\]
is seen as a random point process on $[0,1)^d $ under the distribution $Q_0(\rmd \xi)$.
We then denote by $\tilde\pi_0$ its intensity measure and by $\pi_0$ the  normalized probability distribution, that is: for all Borel subset $B\subset [0,1]^p$,
  \begin{equation} \label{eq:def-pi0}
    \tilde{\pi}_0(B) = \int \pi_\xi(B) \,Q_0(\rmd \xi) \quad \text{and}\quad \pi_0(B) = \frac{\tilde{\pi}_0(B)}{\tilde{\pi}_0([0,1]^p)}.
  \end{equation}
Note that in \citet[Section 4.4]{DD21}, the space $\mathbb{W}^\infty$ was used under the notation $\mathbb{W}$, and the normalization of $\pi_0$ (which is useful to simplify some formulas) was not considered.

In view of Equation~\eqref{eq:ibo2}, the following technical lemma will be crucial to obtain norm estimates  in $\mathbb{W}^q$ for the gradient boosting operator.
\begin{lemma} \label{lem:bound-norm-Wq}
Let $T:[0,1]^p\to\mathbb{R}$ be a function of the form
\[
  T(z) = \int \sum_{v\in\{0,1\}^d}\psi(v,\xi)\1_{A_v}(z)\,Q_0(\rmd \xi).
\]
Then, for all $q\in [1,\infty]$, we have
\[
  \norm{T}_{\mathbb{W}^q} \;\leq\; 2^{p+d}\sum_{v\in\{0,1\}^d} \norm{\psi(v,\cdot)}_{L^q(Q_0)}.
\]
\end{lemma}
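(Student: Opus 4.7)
The plan is to exploit the construction~\eqref{eq:def-pi0} of $\pi_0$ as the mean occupation measure of the corners of the random partitions drawn from $Q_0$. This makes $\pi_0$ a natural dominating measure for signed measures obtained by pushing $Q_0$ forward through corner maps, so I would decompose $T$ into a combination of such pushforwards and bound each piece.

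The first step is a signed inclusion--exclusion identity. Writing $A_v(\xi) = \prod_j [a^v_j(\xi), b^v_j(\xi)]$, one has (modulo a $\pi_0$-null set)
\[
\1_{A_v(\xi)}(z) = \sum_{S\subset \{1,\ldots,p\}} (-1)^{|S|}\, \1_{c^S_v(\xi) \leq z},
\]
where $c^S_v(\xi)$ is the corner of $A_v(\xi)$ with coordinates $b^v_j(\xi)$ for $j\in S$ and $a^v_j(\xi)$ otherwise. Substituting into the definition of $T$ and exchanging sum and integral gives $T(z) = \sum_{v,S} (-1)^{|S|}\rho_{v,S}([0,z])$, where $\rho_{v,S}$ is the pushforward of the signed measure $\psi(v,\xi)\,Q_0(\rmd\xi)$ under the corner map $\xi \mapsto c^S_v(\xi)$. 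Since~\eqref{eq:def-pi0} expresses $\tilde{\pi}_0$ as the sum (restricted to $[0,1)^p$) of the laws $\mu_{v,S}$ of the corner maps $c^S_v$ under $Q_0$, each $\rho_{v,S}$ is absolutely continuous with respect to $\pi_0$, giving the representation $T(z) = \int_{[0,z]} f_T(u)\,\pi_0(\rmd u)$ with $f_T = \sum_{v,S} (-1)^{|S|}\,\rmd\rho_{v,S}/\rmd\pi_0$.

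For the norm bound, I would disintegrate $Q_0$ along the corner maps to write $\rmd\rho_{v,S}/\rmd\pi_0(u) = \E_{Q_0}[\psi(v,\xi)\mid c^S_v = u]\cdot q_{v,S}(u)$, with $q_{v,S} := \rmd\mu_{v,S}/\rmd\pi_0$. The crucial identity coming from~\eqref{eq:def-pi0} is
\[
\sum_{v,S} q_{v,S}(u) \;=\; \tilde{\pi}_0([0,1]^p) \;=:\; Z \;\leq\; 2^{p+d} \quad \text{for $\pi_0$-a.e.\ $u$,}
\]
since the $2^d$ leaves have at most $2^p$ corners each. Normalising $w_{v,S} := q_{v,S}/Z$ into a probability vector on the $2^{p+d}$ pairs $(v,S)$ and applying Jensen's inequality twice---first to $|\E[\,\cdot\mid\cdot]| \leq \E[|\cdot|\mid\cdot]$, then to the convex combination with convex map $x\mapsto x^q$---yields the pointwise bound $|f_T(u)|^q \leq Z^{q-1}\sum_{v,S} \E_{Q_0}[|\psi(v,\xi)|^q \mid c^S_v=u]\,q_{v,S}(u)$. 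Integrating against $\pi_0$ using $q_{v,S}\,\rmd\pi_0 = \rmd\mu_{v,S}$ and the inequality $(\sum_v a_v^q)^{1/q}\leq \sum_v a_v$, the prefactor simplifies to $2^{p+d-d/q}\leq 2^{p+d}$, which yields the claim; the case $q=\infty$ is handled directly from $|f_T(u)|\leq Z\sum_v \norm{\psi(v,\cdot)}_\infty$.

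The main obstacle I expect is the double Jensen application, which is what allows the constant $2^{p+d}$ to multiply a sum (rather than a maximum) of $L^q$-norms of $\psi(v,\cdot)$ over leaves; a secondary subtlety is the bookkeeping around corners with some coordinate equal to $1$, which lie outside the support of $\pi_0$: these require establishing the representation first for $z\in[0,1)^p$ and then extending to $z\in [0,1]^p$ by right-continuity.
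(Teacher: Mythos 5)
Your proof is correct, and while it rests on the same two structural facts as the paper's argument --- the signed inclusion--exclusion representation of $\1_{A_v}$ via corner point masses (Proposition~3.3 of \cite{DD21}), and the bound $\tilde{\pi}_0([0,1]^p)\leq 2^{p+d}$ coming from counting at most $2^p$ corners per leaf and $2^d$ leaves --- the technical route is genuinely different. The paper never disintegrates: it forms the signed measure $\nu_T(\rmd z)=\int\sum_v\psi(v,\xi)\frac{\rmd\nu_{A_v}}{\rmd\pi_\xi}(z)\,\pi_\xi(\rmd z)Q_0(\rmd\xi)$, computes $\norm{\rmd\nu_T/\rmd\pi_0}_{L^q(\pi_0)}$ by duality against unit-norm $h\in L^{q'}(\pi_0)$, and applies H\"older twice (first in $\pi_\xi$, then in $Q_0$), finishing with Minkowski; the case $q=\infty$ is obtained by letting $q\to\infty$ rather than treated separately. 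You instead push $\psi(v,\xi)Q_0(\rmd\xi)$ forward through the corner maps, identify $\tilde{\pi}_0=\sum_{v,S}\mu_{v,S}$ so that the densities $q_{v,S}$ sum to the constant $Z\leq 2^{p+d}$, and control the resulting convex combination pointwise by two applications of Jensen (one for the weights $w_{v,S}$, one conditional). The two are essentially H\"older-dual versions of one another; what yours buys is an explicit formula for $f_T$ as a sum of conditional expectations and a marginally sharper constant $2^{p+d-d/q}$, at the cost of having to justify the disintegration and handle $q=\infty$ separately, whereas the paper's duality argument is shorter and uniform in $q$. Your bookkeeping on corners with a coordinate equal to $1$ is the right concern and is resolved, as in \cite{DD21}, by the half-open convention $[r,s\rangle$ on the leaves, under which the representation $\1_{A_v}(z)=\nu_{A_v}([0,z])$ holds exactly for all $z\in[0,1]^p$ with $\nu_{A_v}$ supported in $[0,1)^p$, so no limiting argument is actually needed.
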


Next we describe a regularity property of functions in $\mathbb{W}^q$. Recall the definition of the measure $\pi_0$ in~\eqref{eq:def-pi0}.
It is interesting to note, as a result of \cite{DD21}, that $\pi_0$ is absolutely continuous with respect to the measure
\[
\sum_{\substack{J\subset [\![1,p]\!]\\
    \abs{J}\leq d}} \sum_{\epsilon\in \{0,1\}^{J^c}} \leb_{J,\epsilon},
\]
where $\leb_{J,\epsilon}$ is the $\abs{J}$-dimensional Lebesgue measure on the subspace
\[
\{x\in [0,1]^p : \forall j\in J, x_j\in [0,1],\; \forall j\notin J, x_j=\epsilon_j\}.
\]
This is the key to estimate the modulus of continuity of functions in $\mathbb{W}^q$.

\begin{proposition} \label{prop:W-to-uniform-norm}
  Let $F\in \mathbb{W}^{q}$ for $q\in [1,\infty]$.  Then $F$ is continuous on $[0,1]^p$ and its modulus of continuity satisfies
  \[
  \omega_F(\delta) := \sup_{\substack{x,y\in[0,1]^p\\\norm{x-y}_\infty \leq \delta}} \abs{F(x)-F(y)} \;\leq \; \Big( C \delta\big(1-\log \delta \big)^{d-1}\Big)^{\frac{1}{q'}} \norm{F}_{\mathbb{W}^{q}},
  \]
  where $q'=q/(q-1)$ and $C$ is a constant that depends only on $p$ and $d$.
\end{proposition}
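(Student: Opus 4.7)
The plan is to estimate the modulus of continuity via H\"older's inequality, which reduces the problem to a measure-theoretic estimate on $\pi_0$. For any $x,y\in[0,1]^p$, I would write
\[
F(x)-F(y) = \int \bigl(\1_{[0,x]}(u)-\1_{[0,y]}(u)\bigr)f_F(u)\,\pi_0(\rmd u),
\]
and apply H\"older's inequality with conjugate exponents $q$ and $q'$ to obtain
\[
|F(x)-F(y)| \leq \norm{F}_{\mathbb{W}^q}\,\pi_0\bigl([0,x]\triangle [0,y]\bigr)^{1/q'}.
\]
Continuity of $F$ on $[0,1]^p$ and the stated bound on $\omega_F(\delta)$ will then both follow from the geometric estimate $\pi_0([0,x]\triangle[0,y]) \leq C\delta(1-\log\delta)^{d-1}$ whenever $\norm{x-y}_\infty\leq\delta$, which is the heart of the argument.

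To establish this geometric estimate, I would first unpack the lower-set definitions to observe that $u\in[0,x]\triangle[0,y]$ forces some coordinate $u_j$ to lie in the interval $(x_j\wedge y_j, x_j\vee y_j]$ of length at most $\delta$. This gives the inclusion
\[
[0,x]\triangle[0,y] \subset \bigcup_{j=1}^p S_j, \qquad S_j := \bigl\{u\in[0,1]^p: u_j\in (x_j\wedge y_j, x_j\vee y_j]\bigr\},
\]
so by subadditivity it is enough to control $\pi_0(S_j)$ for a single coordinate $j\in\{1,\dots,p\}$.

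The key ingredient is then the absolute continuity $\pi_0 \ll \sum_{J,\epsilon}\leb_{J,\epsilon}$ recalled just before the statement. On each face $\{u_l=\epsilon_l, l\notin J\}$ with $|J|\leq d$, the density of $\pi_0$ with respect to $\leb_{J,\epsilon}$ is computable from the recursive construction of the completely random tree: each split threshold at depth $k$ is an iterated affine combination of $k$ independent uniforms, whose marginal density has a blow-up of order at most $(-\log)^{k-1}$ near $0$ and $1$; since $k\leq d$, the density of $\pi_0$ in the direction $u_j$ is dominated by a bounded function times $(-\log u_j)^{d-1}+(-\log(1-u_j))^{d-1}$. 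Fubini along direction $j$, combined with the elementary estimate $\int_0^\delta (-\log t)^{d-1}\,\rmd t \leq C\delta(1-\log\delta)^{d-1}$, then yields $\pi_0(S_j)\leq C\delta(1-\log\delta)^{d-1}$, as required.

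The main obstacle is the density estimate in the third step: making rigorous the claim that the density of $\pi_0$, on every $(J,\epsilon)$-face uniformly, is controlled (up to multiplicative constants depending only on $p$ and $d$) by the $(d-1)$-th power of the logarithm in each direction. This amounts to tracking how the nested affine changes of variables in the recursive splitting scheme $Q_0$ distribute mass on the various lower-dimensional faces, a bookkeeping exercise on the structure already set up in Section~\ref{sec:preliminaries-algo} and in~\cite{DD21}.
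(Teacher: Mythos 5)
Your reduction is exactly the one the paper uses: H\"older's inequality gives $\abs{F(x)-F(y)}\leq \pi_0([0,x]\symdiff[0,y])^{1/q'}\norm{F}_{\mathbb{W}^q}$, and the symmetric difference is covered by $p$ coordinate slices of width $\norm{x-y}_\infty$, so the whole proposition rests on the single estimate $\pi_0(\{u: a< u_j\leq b\})\leq C(b-a)\bigl(1-\log(b-a)\bigr)^{d-1}$, \emph{uniformly over the position of $[a,b]$ in $[0,1]$}. That estimate is the entire content of the paper's Lemma~\ref{lem:pi0-bound}, and it is precisely the step you do not supply: you assert a pointwise bound on the density of $\pi_0$ along each coordinate of each face by $C\bigl(1+(-\log u)^{d-1}+(-\log(1-u))^{d-1}\bigr)$ and yourself label its justification ``the main obstacle''. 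This is a genuine gap, not a bookkeeping exercise. The corner coordinates produced by $d$ recursive splits are \emph{nested affine interpolations} $(1-U)a_j+Ub_j$ inside random subintervals, not products of uniforms; only the extreme branch (always splitting the same coordinate and always keeping the left child) literally yields $U_1\cdots U_d$, whose density $(-\log t)^{d-1}/(d-1)!$ is the source of the logarithmic factor. For a general branch one must still show that the mass deposited in an \emph{interior} slice of width $\delta$ is $O(\delta(1-\log\delta)^{d-1})$, and that requires comparing an arbitrary configuration to the extremal one.

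The paper does this by a backward induction on the depth $k$: it introduces $Q_v^k(a,b,r,s)$, the conditional probability that the branch to leaf $v$ creates an atom in the slice $(a,b]$ given that the current cell projects to $[r,s\rangle$ on the relevant coordinate, and proves the monotonicity $Q_v^k(a,b,r,s)\leq Q_0^k(0,b-a,0,s-r)$ --- i.e.\ the worst case is the slice translated to the origin inside a cell of the same length, with all subsequent splits on the same coordinate keeping the left child. This reduces everything to $\P(U_1\cdots U_d\leq b-a)=\P(\Gamma_d\geq -\log(b-a))=(b-a)\sum_{k=0}^{d-1}(-\log(b-a))^k/k!$, which gives the uniform-in-position bound directly, without ever computing the density of $\pi_0$. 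If you want to salvage your density-based route you would have to prove an analogous comparison for the laws of the interpolated split points restricted to an arbitrary interval $[a,a+\delta]$, which is essentially the same induction in disguise; as written, your step three is a restatement of the lemma to be proved rather than a proof of it.
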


\subsection{Properties of infinitesimal boosting operators}\label{sec:preliminaries-measurability}

Recall that infinitesimal boosting operator $\mathcal{T}_n$ is defined in Equation~\eqref{eq:ibo} and its population version $\mathcal{T}$ in Definition~\ref{def:T}. Explicit formulas involving splitting scheme are provided in \Cref{sec:preliminaries-algo}.  Together with \Cref{lem:bound-norm-Wq}, Equations~\eqref{eq:RN}-\eqref{eq:ibo2} are crucial in our analysis of the infinitesimal boosting operators.

\begin{proposition}\label{prop:ibo_Lipschitz}
For every $F\in\mathbb{B}$, $\mathcal{T}_n(F)\in\mathbb{W}^\infty$ and  $\mathcal{T}(F)\in\mathbb{W}^\infty$. Furthermore, when restricted on an arbitrary bounded set,  the mappings $\mathcal{T}_n:\mathbb{B}\to\mathbb{W}^\infty$ and $\mathcal{T}:\mathbb{B}\to\mathbb{W}^\infty$ are Lipschitz-continuous.
\end{proposition}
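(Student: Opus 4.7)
The plan is to work directly from the integral representation~\eqref{eq:ibo2} and apply \Cref{lem:bound-norm-Wq} with $q=\infty$. Writing $\mathcal{T}_n(F)$ in the form required by the lemma, the associated weight function is
\[
\psi_F(v,\xi)=-\frac{\P_n[\pderiv{z}L(y,F(x))\1_{A_v}(x)]}{\P_n[\pderiv[2]{z}L(y,F(x))\1_{A_v}(x)]}\,\frac{\rmd Q_{n,F}}{\rmd Q_0}(\xi),
\]
with the usual convention $0/0=0$. Hence by \Cref{lem:bound-norm-Wq},
\[
\norm{\mathcal{T}_n(F)}_{\mathbb{W}^\infty}\leq 2^{p+d}\sum_{v\in\{0,1\}^d}\norm{\psi_F(v,\cdot)}_{L^\infty(Q_0)},
\]
so it suffices to bound $\psi_F$ uniformly in $\xi$. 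The Radon–Nikodym factor is bounded by $K^{2^d-1}$ thanks to~\eqref{eq:RN-bounded}. For the leaf values, I would use \Cref{ass:bound-ratios}: in case~(i), the pointwise bound $|\pderiv{z}L(y,z)|\leq C\pderiv[2]{z}L(y,z)$ on the relevant compact in $z$ yields $|\psi_F|\leq CK^{2^d-1}$; in case~(ii), $\pderiv[2]{z}L\geq 1/C$ bounds the denominator from below, and for fixed $n$ the numerator is bounded by $\max_i|\pderiv{z}L(y_i,F(x_i))|$, which is finite on a bounded set of $\mathbb{B}$. This gives $\mathcal{T}_n(F)\in\mathbb{W}^\infty$, and the same argument with $\P$ in place of $\P_n$, using \Cref{ass:bounded-lipshitz} to bound $\ell_1,\ell_2$ uniformly and $\ell_2\geq 1/C$, gives $\mathcal{T}(F)\in\mathbb{W}^\infty$.

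For local Lipschitz-continuity, fix a bounded set $B\subset\mathbb{B}$ and $F,G\in B$. The difference writes
\[
\mathcal{T}_n(F)-\mathcal{T}_n(G)=\int\sum_{v\in\{0,1\}^d}[\psi_F(v,\xi)-\psi_G(v,\xi)]\1_{A_v}(\cdot)\,Q_0(\rmd\xi),
\]
and applying \Cref{lem:bound-norm-Wq} again reduces the task to proving $\|\psi_F(v,\cdot)-\psi_G(v,\cdot)\|_{L^\infty(Q_0)}\leq C_B\norm{F-G}_\infty$. Split the bracket using a product-rule decomposition into a factor corresponding to the change of the leaf-value ratio and a factor corresponding to the change in the Radon–Nikodym density. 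The leaf-value ratio is Lipschitz in $F$: both numerator and denominator are Lipschitz in $F\in B$ by the local Lipschitz regularity of $\pderiv{z}L$ and $\pderiv[2]{z}L$ (\Cref{ass:L-regular-convex}), and the denominator is bounded away from~$0$ uniformly on $B$ by the argument above, so the quotient rule delivers a Lipschitz bound. For the Radon–Nikodym density~\eqref{eq:RN}, each factor is of the form $K e^{\beta\Delta^1}/\sum_k e^{\beta\Delta^k}$; the scores $\Delta(j,u;A)$ from~\eqref{eq:score2} are Lipschitz in $F\in B$ by the same local Lipschitz argument applied to $\pderiv{z}L$, and the softmax map is smooth with bounded derivatives on any bounded domain, so this factor is Lipschitz in $F$ uniformly in $\xi$. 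Multiplying the Lipschitz estimates of the bounded factors yields the required bound. The argument for $\mathcal{T}$ is identical after replacing $\P_n$ by $\P$, with all uniform bounds provided by \Cref{ass:bounded-lipshitz}.

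The main obstacle will be bookkeeping for the two distinct cases of \Cref{ass:bound-ratios} in the control of the leaf-value ratio: case~(i) gives a direct pointwise bound on the ratio $\pderiv{z}L/\pderiv[2]{z}L$ but requires a slightly indirect argument to obtain Lipschitz estimates (since the denominator may vanish on regions with no mass), whereas case~(ii) gives a clean positive lower bound on $\pderiv[2]{z}L$ and makes the quotient rule direct but requires extra integrability to bound the numerator. In both cases the key is to exploit the convention $0/0=0$ together with the identity that the ratio equals zero exactly when $\P_n(x\in A_v)=0$, so the denominator appearing in the Lipschitz bound can be taken positive. Once this point is handled, the rest is a routine application of \Cref{lem:bound-norm-Wq} combined with the local regularity assumptions on $L$ and on the softmax weights.
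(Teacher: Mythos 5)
Your overall route is the same as the paper's: write $\mathcal{T}_\mu(F)$ in the form required by \Cref{lem:bound-norm-Wq} with $\psi_F(v,\xi)=\tilde r_\mu(F,A_v)\,\frac{\rmd Q^\mu_F}{\rmd Q_0}(\xi)$, bound the Radon--Nikodym factor by $K^{2^d-1}$ via~\eqref{eq:RN-bounded}, show the leaf values and the density are each bounded and Lipschitz in $F$ on bounded sets uniformly in $(v,\xi)$, and multiply. The treatment of the density through the Lipschitzness of the scores and of the softmax matches the paper's Lemma on $\frac{\rmd Q_F^\mu}{\rmd Q_0}$. One cosmetic difference: the paper never invokes \Cref{ass:bound-ratios} here; it gets both boundedness and Lipschitzness of $\tilde r_\mu(F,A)$ directly from \Cref{ass:bounded-lipshitz}, which (as it notes) is satisfied by every empirical measure, so the two operators are handled by one argument.

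There is, however, one step whose justification as written would fail for the population operator $\mathcal{T}$. You assert that ``the denominator is bounded away from $0$ uniformly on $B$'' and then apply the quotient rule. For $\mathcal{T}_n$ with $n$ fixed this is true (whenever the denominator is nonzero it contains at least one data point, hence is at least $\min_i\inf_{|z|\le M}\pderiv[2]{z}L(y_i,z)>0$), but for $\mathcal{T}$ the denominator is $\int_A\ell_2(x,F(x))\,\P_X(\rmd x)$, which is only bounded below by $C^{-1}\P_X(A)$ and can be arbitrarily small as the leaf $A$ shrinks; mere positivity does not give a Lipschitz constant uniform in $\xi$, which is what the $L^\infty(Q_0)$ norm in \Cref{lem:bound-norm-Wq} requires. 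The estimate is nevertheless salvageable, and this is exactly what the paper does: writing the difference of ratios over a common denominator as a double integral over $A^2$, the numerator of the difference is bounded by $O(\norm{F-G}_\infty)\,\P_X(A)^2$ while the common denominator is at least $C^{-2}\P_X(A)^2$, so the factors of $\P_X(A)$ cancel and one gets $\abs{\tilde r_\P(F,A)-\tilde r_\P(G,A)}\le C^4\norm{F-G}_\infty$ uniformly in $A$. Equivalently, your quotient rule works provided you record that the numerator, the denominator, and both of their increments all scale linearly in $\P_X(A)$ (via \Cref{ass:bounded-lipshitz}), not merely that the denominator is positive. With that correction the proof is complete and coincides with the paper's.
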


Note that the result for $\mathcal{T}_n$ was already stated in \citet[Lemma~5.4]{DD21} and we extend it here naturally to $\mathcal{T}$. In this former work, the gradient boosting operator $\mathcal{T}_n$ and the gradient boosting process $(\hat F_t^n)_{t\geq 0}$ were considered for a fixed input sample  $(\mathbf{x},\mathbf{y})=(x_i,y_i)_{1\leq i\leq n}$.
In this paper, we consider a random independent sample of size $n$ and we denote by $\mathcal{T}_n(\cdot;\mathbf{x},\mathbf{y})$  and $\hat F_t^n(\cdot;\mathbf{x},\mathbf{y})$ the corresponding random operator and random process.
Therefore we need to check the measurability (and even prove the continuity) of $\mathcal{T}_n(\cdot;\mathbf{x},\mathbf{y})$ and $\hat F_t^n(\cdot;\mathbf{x},\mathbf{y})$  as functions of the input sample $(\mathbf{x},\mathbf{y})$.
Recall that we denote by $\mathcal{C}_{bb}(\mathbb{B}, \mathbb{W}^q)$ the space of continuous functions $\mathbb{B}\to \mathbb{W}^q$ that are bounded on bounded sets, endowed with the topology of uniform convergence on bounded sets.
We also endow $\mathcal{C}([0,\infty),\mathbb{W}^q)$ with the topology of uniform convergence on compact intervals.
\begin{proposition}\label{prop:continuity_ibo_igb} Let $q\in [1,+\infty)$
\begin{enumerate}[label=(\roman*)]
\item The mapping
\[
(\mathbf{x},\mathbf{y})\in ([0,1]^p\times \mathbb{R})^n \;\longmapsto\; \mathcal{T}_n(\cdot;\mathbf{x},\mathbf{y})\in \mathcal{C}_{bb}(\mathbb{B},\mathbb{W}^q)
\]
is continuous.
\item The mapping
\[
(\mathbf{x},\mathbf{y})\in ([0,1]^p\times \mathbb{R})^n \;\longmapsto\; \big(\hat F_t^n(\cdot;\mathbf{x},\mathbf{y})\big)_{t\geq 0}\in \mathcal{C}([0,\infty),\mathbb{W}^q)
\]
is continuous.
\end{enumerate}
\end{proposition}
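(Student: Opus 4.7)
I start from the integral representation~\eqref{eq:ibo2} of $\mathcal{T}_n(F)$ and apply \Cref{lem:bound-norm-Wq} with
\[
\psi(v,\xi;\mathbf{x},\mathbf{y},F) = -\frac{\P_n[\pderiv{z}L(y,F(x))\1_{A_v(\xi)}(x)]}{\P_n[\pderiv[2]{z}L(y,F(x))\1_{A_v(\xi)}(x)]}\cdot \frac{\rmd Q_{n,F}}{\rmd Q_0}(\xi;\mathbf{x},\mathbf{y}),
\]
reducing the control of $\norm{\mathcal{T}_n(F;\mathbf{x}',\mathbf{y}')-\mathcal{T}_n(F;\mathbf{x},\mathbf{y})}_{\mathbb{W}^q}$ to $L^q(Q_0)$-bounds on differences of $\psi$. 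The Radon-Nikodym bound~\eqref{eq:RN-bounded}, together with \Cref{ass:bound-ratios} and~\Cref{ass:bounded-lipshitz}, yields a uniform $L^\infty(Q_0)$ bound on $\psi$ as long as $F$ and $(\mathbf{x},\mathbf{y})$ stay in bounded sets. The heart of the argument is pointwise-in-$\xi$ convergence: since under $Q_0$ the split thresholds $u_v$ have an absolutely continuous marginal, the event that some sample point $x_i$ lies on the boundary of some $A_v(\xi)$ has $Q_0$-measure zero, so $\1_{A_v(\xi)}(x_i^k)\to\1_{A_v(\xi)}(x_i)$ $Q_0$-a.s.\ whenever $\mathbf{x}_k\to\mathbf{x}$. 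Combined with continuity of $L$ and its derivatives in $(y,z)$, this gives $\psi(v,\xi;\mathbf{x}_k,\mathbf{y}_k,F)\to\psi(v,\xi;\mathbf{x},\mathbf{y},F)$ $Q_0$-a.s.; dominated convergence followed by \Cref{lem:bound-norm-Wq} then yields $\mathbb{W}^q$-convergence for each fixed $F$. To upgrade to uniformity over a bounded set $B\subset \mathbb{B}$, I combine this pointwise convergence with the equi-Lipschitz control from \Cref{prop:ibo_Lipschitz}: inspecting its proof, the Lipschitz constant is uniform for $(\mathbf{x},\mathbf{y})$ in compact neighborhoods.

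\textbf{Plan for (ii).} Granting (i), this is a continuous dependence result for the ODE on its right-hand side and initial condition. Continuity of the initial value $\mathbf{y}\mapsto \hat F_0^n=\argmin_z\P_n[L(y,z)]$ follows from the strict convexity in \Cref{ass:L-regular-convex} and an implicit-function-type argument. For a sequence $(\mathbf{x}_k,\mathbf{y}_k)\to(\mathbf{x},\mathbf{y})$, fix $T>0$ and decompose
\begin{align*}
\hat F_t^n(\cdot;\mathbf{x}_k,\mathbf{y}_k) - \hat F_t^n(\cdot;\mathbf{x},\mathbf{y}) &= \hat F_0^n(\cdot;\mathbf{x}_k,\mathbf{y}_k) - \hat F_0^n(\cdot;\mathbf{x},\mathbf{y}) \\
&\quad + \int_0^t\big[\mathcal{T}_n(\hat F_s^n(\cdot;\mathbf{x}_k,\mathbf{y}_k);\mathbf{x}_k,\mathbf{y}_k)-\mathcal{T}_n(\hat F_s^n(\cdot;\mathbf{x},\mathbf{y});\mathbf{x}_k,\mathbf{y}_k)\big]\rmd s \\
&\quad + \int_0^t\big[\mathcal{T}_n(\hat F_s^n(\cdot;\mathbf{x},\mathbf{y});\mathbf{x}_k,\mathbf{y}_k)-\mathcal{T}_n(\hat F_s^n(\cdot;\mathbf{x},\mathbf{y});\mathbf{x},\mathbf{y})\big]\rmd s.
\end{align*}
An a priori uniform bound $\sup_{k,\,t\in[0,T]}\norm{\hat F_t^n(\cdot;\mathbf{x}_k,\mathbf{y}_k)}_\infty\leq M$ is obtained by a Gronwall argument on the sup-norm, using a linear growth bound on $\mathcal{T}_n$ and continuity of the initial data. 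The first integrand is then bounded by a constant multiple of $\norm{\hat F_s^n(\cdot;\mathbf{x}_k,\mathbf{y}_k)-\hat F_s^n(\cdot;\mathbf{x},\mathbf{y})}_{\mathbb{W}^q}$ via \Cref{prop:ibo_Lipschitz}, and the second tends to $0$ uniformly on $[0,T]$ by part (i) applied to the bounded set $\{\hat F_s^n(\cdot;\mathbf{x},\mathbf{y}):s\in[0,T]\}\subset \mathbb{B}$. A final application of Gronwall's inequality in $\mathbb{W}^q$ yields $\sup_{t\in[0,T]}\norm{\hat F_t^n(\cdot;\mathbf{x}_k,\mathbf{y}_k)-\hat F_t^n(\cdot;\mathbf{x},\mathbf{y})}_{\mathbb{W}^q}\to 0$.

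\textbf{Main obstacle.} The crux is the indicator issue in (i): the map $x\mapsto \1_{A_v(\xi)}(x)$ jumps whenever $x$ crosses a splitting hyperplane, so the integrand in~\eqref{eq:ibo2} is not continuous in $\mathbf{x}$ for fixed $\xi$. The resolution is that under $Q_0$ the splitting thresholds are absolutely continuous, so the set of bad $\xi$ is $Q_0$-negligible; combined with the uniform bound \eqref{eq:RN-bounded} this triggers dominated convergence. A secondary difficulty is making the convergence uniform for $F$ in bounded sets of the non-separable space $\mathbb{B}$, which is handled by the local Lipschitz estimate of \Cref{prop:ibo_Lipschitz}.
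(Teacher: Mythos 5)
Your overall architecture matches the paper's on both parts. For (i), the core mechanism is the same as the paper's: for $Q_0$-almost every splitting scheme none of the sample points lies on the boundary of a leaf (the thresholds being uniform under $Q_0$), so the indicators $\1_{A_v}(x_i^k)$ stabilize along a sequence $\mathbf{x}^k\to\mathbf{x}$, and the uniform bound~\eqref{eq:RN-bounded} together with \Cref{lem:bound-norm-Wq} lets dominated convergence in $L^q(Q_0)$ conclude. For (ii), the decomposition, the a priori bound, Grönwall's Lemma, and the implicit-function argument for $\hat F_0^n$ reproduce the paper's proof almost verbatim. The only organizational difference is that the paper factors part (i) through \Cref{lem:deterministic-T-cv} (reused later for \Cref{thm:asymptotic_ibo}), whose hypotheses treat the leaf values $\tilde r_{\mu_k}(F,A_v)$ and the split scores (hence $\rmd Q_F^{\mu_k}/\rmd Q_0$) separately, whereas you bundle both into a single $\psi$; that is fine, but it leaves implicit that you must also verify convergence of the scores entering the softmax, not only of the leaf values.

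The one genuine gap is your mechanism for upgrading fixed-$F$ convergence to uniformity over a bounded set $B$. The implication ``pointwise convergence plus equi-Lipschitz continuity implies uniform convergence on bounded sets'' is false in infinite dimension: it requires $B$ to be totally bounded, and bounded subsets of $\mathbb{B}$ or $\mathbb{W}^q$ are not (on the unit ball of $\ell^2$, the functionals $x\mapsto\langle x,e_k\rangle$ are $1$-Lipschitz and converge pointwise to $0$ but not uniformly). The paper avoids this by placing the uniformity in $F$ inside the hypotheses of \Cref{lem:deterministic-T-cv}: one proves directly that $\sup_{F\in B}\sum_v\abs{\tilde r_{\mu_k}(F,A_v)-\tilde r_{\mu}(F,A_v)}\to 0$ for each good $\xi$, which works because, once the indicators have stabilized, the dependence on the sample enters only through the finitely many evaluations $(y_i^k,F(x_i^k))$, controlled uniformly over $\norm{F}_\infty\leq M$ by the local Lipschitz bounds of \Cref{ass:L-regular-convex,ass:bounded-lipshitz}. (Strictly speaking one also needs $\sup_{F\in B}\abs{F(x_i^k)-F(x_i)}\to 0$, which holds when $B$ is bounded in $\mathbb{W}^q$ thanks to the common modulus of continuity of \Cref{prop:W-to-uniform-norm}; this point is equally terse in the paper.) You should therefore restructure the uniformity step: establish the sup-over-$F$ convergence of the leaf values and scores for each good $\xi$ first, then dominate, rather than attempting to recover uniformity a posteriori from the Lipschitz property of $\mathcal{T}_n$.
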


\subsection{Glivenko--Cantelli classes} \label{sec:prelim-glivenko-cantelli}

Our main results, Theorems~\ref{thm:asymptotic_ibo} and~\ref{thm:asymptotic_igb}, state the almost sure convergence of the infinitesimal boosting operator and infinitesimal gradient boosting process as the sample size tends to infinity. The main technical tool for our proof is the notion of Glivenko--Cantelli classes of function. Following \citet[Section~19.2]{vW96}, a class $\mathcal{F}$ of measurable functions $f:(x,y)\mapsto f(x,y)$ is said $\P$-Glivenko--Cantelli when
\[
\sup_{f\in\mathcal{F}} |\P_n[f(x,y)]-\P[f(x,y)]|\stackrel{as*}{\longrightarrow} 0, \quad \mbox{as $n\to\infty$},
\]
where $\P_n=\sum_{i=1}^n\delta_{(X_i,Y_i)}$ is the empirical measure associated with an i.i.d. sample $(X_i,Y_i)_{i\geq 1}$ with distribution $\P$. The notation $\stackrel{as*}{\longrightarrow} 0$ stands for  almost sure convergence in outer probability, which is introduced to handle the possible non-measurability of the supremum.

In the context of gradient boosting, the following result will be useful. We denote by $\mathcal{A}$ the class of hyperrectangles of the form $A=[a,b]$ for some $a,b\in [0,1]^p$, $a\leq b$.

\begin{proposition}\label{prop:GC}
Let $q\in (1,\infty]$ and $B\subset\mathbb{W}^q$ a bounded set. The classes of functions
\[
\mathcal{F}_1=\left\{(x,y)\mapsto \pderiv{z}L(y,F(x))\1_{A}(x)\ :\ A\in\mathcal{A}, F\in B
\right\} \]
\[
\mathcal{F}_2=\left\{(x,y)\mapsto \pderiv[2]{z}L(y,F(x))\1_{A}(x)\ :\ A\in\mathcal{A}, F\in B\right\}
\]
are $\P$-Glivenko--Cantelli.
\end{proposition}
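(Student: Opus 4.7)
The plan is to combine three ingredients. First, the class $\mathcal{A}$ of closed hyperrectangles in $[0,1]^p$ is a Vapnik--Chervonenkis class, hence $\{\1_A:A\in\mathcal{A}\}$ is $\P_X$-Glivenko--Cantelli. Second, by \Cref{prop:W-to-uniform-norm} combined with Arzelà--Ascoli, the bounded set $B\subset\mathbb{W}^q$ (with $q>1$) is totally bounded in $(\mathcal{C}([0,1]^p),\norm{\cdot}_\infty)$ and uniformly bounded in sup-norm; in particular every $F\in B$ takes values in some compact $K=[-M,M]$. Third, \Cref{ass:L-regular-convex}, \Cref{ass:Y-integrable}, and \Cref{ass:bounded-lipshitz} provide integrability and Lipschitz regularity in $z$ for $\pderiv{z}L$ and $\pderiv[2]{z}L$ on $\mathcal{Y}\times K$.

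Fix $\eta>0$ and cover $B$ by finitely many sup-norm balls $B_\infty(F_1,\eta),\dots,B_\infty(F_{N(\eta)},\eta)$. For each $j$, the map $g_j(x,y):=\pderiv{z}L(y,F_j(x))$ has the $\P$-integrable envelope $H(y):=\sup_{z\in K}|\pderiv{z}L(y,z)|\in L^q(\P)$ by \Cref{ass:Y-integrable}, and the class $\{g_j\1_A:A\in\mathcal{A}\}$ is $\P$-Glivenko--Cantelli: truncating $g_j$ at level $M'$ and splitting it into positive and negative parts yields a uniformly bounded class whose $L^1(\P)$-bracketing numbers are controlled by those of the VC class $\{\1_A\}$, while the truncation tail is handled by $\P[H\1_{H>M'}]\to 0$ together with the SLLN applied to $H(Y_i)$. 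Taking the finite maximum over $j$ then gives
\[
\sup_{1\leq j\leq N(\eta),\ A\in\mathcal{A}}\bigl|(\P_n-\P)[g_j(x,y)\1_A(x)]\bigr|\xrightarrow[n\to\infty]{\mathrm{a.s.}}0.
\]

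To pass from the grid $\{F_j\}$ to an arbitrary $F\in B_\infty(F_j,\eta)$, the key remark is that $z\mapsto \pderiv{z}L(y,z)$ is non-decreasing (since $\pderiv[2]{z}L\geq 0$ by \Cref{ass:L-regular-convex}), which yields the deterministic pointwise bound
\[
\bigl|\pderiv{z}L(y,F(x))-\pderiv{z}L(y,F_j(x))\bigr|\leq \pderiv{z}L(y,F_j(x)+\eta)-\pderiv{z}L(y,F_j(x)-\eta).
\]
Its $\P$-expectation equals $\E[\ell_1(X,F_j(X)+\eta)-\ell_1(X,F_j(X)-\eta)]$, which is at most $2C\eta$ by the Lipschitz bound on $\ell_1$ from \Cref{ass:bounded-lipshitz}; applying the SLLN to the finitely many functions $(x,y)\mapsto \pderiv{z}L(y,F_j(x)\pm\eta)$ gives the same asymptotic bound for the $\P_n$-expectation. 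Combining these and letting $n\to\infty$ then $\eta\to 0$ proves the $\P$-Glivenko--Cantelli property of $\mathcal{F}_1$.

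The argument for $\mathcal{F}_2$ follows the same outline, with $\pderiv{z}L$ replaced by $\pderiv[2]{z}L$ and $\ell_1$ replaced by $\ell_2$. The main difficulty here is the approximation step: $\pderiv[2]{z}L$ is in general not monotone in $z$, so the simple telescoping inequality above is not available, and one must instead use the local Lipschitz continuity of $\pderiv[2]{z}L$ in $z$ from \Cref{ass:L-regular-convex} to dominate $|\pderiv[2]{z}L(y,F(x))-\pderiv[2]{z}L(y,F_j(x))|$ by $\eta$ times a $\P$-integrable envelope. Such an envelope can be produced by evaluating $\pderiv[2]{z}L$ at a finite grid of points in $K$, using the uniform bound $\ell_2(x,z)\leq C$ from \Cref{ass:bounded-lipshitz}, and interpolating via the Lipschitz regularity in $z$.
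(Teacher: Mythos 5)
Your argument for $\mathcal{F}_1$ is correct but takes a genuinely different route from the paper's. The paper works entirely with bracketing entropy: a product lemma (\Cref{lem:product-GC}) shows that multiplying a class with finite bracketing numbers and integrable envelope by $\{\1_A:A\in\mathcal{A}\}$ preserves finite bracketing numbers, and the class $\{\pderiv{z}L(y,F(x)): F\in B\}$ is bracketed by restricting $y$ to a compact set, invoking the uniform-entropy bound for equicontinuous classes \citep[Theorem~2.7.1]{vW96} via the common modulus of continuity from \Cref{prop:W-to-uniform-norm}, and patching with $\pm G$ outside that compact set. You instead take a finite sup-norm net of $B$ (same modulus of continuity, but through Arzel\`a--Ascoli), treat each net point times rectangles by truncation plus the standard rectangle bracketing, and control the net approximation error via the monotonicity of $z\mapsto\pderiv{z}L(y,z)$ combined with the Lipschitz bound on $\ell_1$ from \Cref{ass:bounded-lipshitz}. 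That last step is the real difference and it is sound: it converts a sup-norm perturbation of $F$ into the deterministic monotone bracket $h_j=\pderiv{z}L(y,F_j+\eta)-\pderiv{z}L(y,F_j-\eta)$, whose $\P$-mean is $O(\eta)$ by smoothness of the conditional expectation $\ell_1$ rather than of $\pderiv{z}L$ itself, and the SLLN applied to the finitely many $h_j$ handles the empirical side. Your route is more elementary (no entropy theorem beyond rectangles is needed) and exploits convexity in a way the paper does not.

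For $\mathcal{F}_2$, however, your repair is not complete as stated. Since the error must be controlled under $\P_n$ and not only under $\P$, the Lipschitz property of the conditional expectation $\ell_2$ is of no direct use there: you need a deterministic pointwise domination $\abs[\big]{\pderiv[2]{z}L(y,F(x))-\pderiv[2]{z}L(y,F_j(x))}\leq \eta\,E(y)$ with $E$ $\P$-integrable, or at least a fixed integrable modulus-of-continuity envelope $\omega_y(\eta)\to 0$. The ``finite grid plus interpolation'' construction you sketch is circular: interpolating between grid values requires precisely the integrable control in $z$ you are trying to manufacture, and \Cref{ass:L-regular-convex} only provides a local Lipschitz constant that may depend on $y$ with no stated integrability. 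What suffices is $\E\big[\sup_{z\in[-M,M]}\pderiv[2]{z}L(Y,z)\big]<\infty$ --- true in all three examples, and implicitly needed by the paper as well, whose own proof for $\mathcal{F}'_2$ is dispatched with ``the proof is the same'' and faces the same envelope question. Under that hypothesis, dominated convergence applied to $\omega_y(\eta):=\sup\{\abs{\pderiv[2]{z}L(y,z)-\pderiv[2]{z}L(y,z')}: z,z'\in[-M,M],\,\abs{z-z'}\leq\eta\}$ together with the SLLN closes your argument; you should either state this integrability explicitly or give a non-circular construction of the envelope.
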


\section{Proofs of the main results}\label{sec:proofs-main-results}
\subsection{Proofs of Propositions~\ref{prop:ibo_Lipschitz},~\ref{prop:continuity_ibo_igb} and of Theorem~\ref{thm:asymptotic_ibo}}

The three results we consider here deal with continuity properties of the infinitesimal boosting operators $\mathcal{T}_n$ and $\mathcal{T}$, and can be proven similarly.
In order to ease the proof of Proposition~\ref{prop:ibo_Lipschitz},~\ref{prop:continuity_ibo_igb} and \Cref{thm:asymptotic_ibo} and treat in an unified way the finite sample case (associated with the measure $\P_n$) and the population case (associated with the measure $\P$), we introduce some notation.
Let $\mathcal{M}$ be the set of probability measures $\mu$ on the space $[0,1]^p\times \mathbb{R}$ satisfying Assumptions \ref{ass:L-regular-convex} to \ref{ass:bounded-lipshitz} (when seen as joint distributions for a pair $(X,Y)$ of random variables). The key for factorizing the proof  is that $\mathcal{M}$ contains any empirical distribution so that $\P_n,\P\in \mathcal{M}$. We use the short notation
\[
\mu(x\in A)=\int \1_{A}(x) \,\mu(\rmd x\rmd y) \quad \text{and}\quad \mu[f(x,y)]=\int f(x,y)\,\mu(\rmd x\rmd y).
\]

Quite generally, we may consider Algorithms~\ref{algo1} and~\ref{algo1} when $\P_n$ is replaced by a generic measure $\mu\in\mathcal{M}$. The corresponding gradient tree is written $T(x;\mu,F)$ and the infinitesimal boosting operator is written $\mathcal{T}_\mu$ (see \Cref{def:T}). In particular, $\mathcal{T}=\mathcal{T}_\P$ and $\mathcal{T}_n=\mathcal{T}_{\P_n}$. All the quantities and notation introduced so far can be adapted in a straightforward way: the leaf values \eqref{eq:leaf-values2} become
\[
\tilde{r}_\mu(F,A) = - \frac{\mu[\pderiv{z}L(y,F(x))\1_{A}(x)]} {\mu[\pderiv[2]{z}L(y,F(x))\1_{A}(x)]},
\]
where we have added the dependency on $F$ (so that $\tilde r(A)=\tilde{r}_{\P_n}(F,A)$); the score \eqref{eq:score2} attributed to the split of a region $A$ along variable $j$ at threshold $u$ becomes
\[
 \Delta_\mu(j,u;A)= \frac{\mu[\pderiv{z}L(y,F(x))\1_{A_{0}}(x)]^2}{\mu(x\in A_{0})}+\frac{\mu[\pderiv{z}L(y,F(x))\1_{A_1}(x)]^2}{\mu(x\in A_1)};
\]
the distribution of the splitting scheme is written $Q_{F}^\mu$ and Equations~\eqref{eq:RN}-\eqref{eq:ibo2} are readily modified.
In particular, Equation~\eqref{eq:ibo2} becomes
\begin{equation}\label{eq:ibo3}
\mathcal{T}_\mu(F)(z)=\int \sum_{v\in\{0,1\}^d} \tilde{r}_\mu(F,A) \frac{\rmd Q_{F}^\mu}{\rmd Q_0}(\xi) \1_{A_v}(z) \, Q_0(\rmd \xi).
\end{equation}
We can see that this is exactly the form required in Lemma~\ref{lem:bound-norm-Wq} to get $\mathbb{W}^q$-norm estimates.

\subsubsection{Proof of Proposition~\ref{prop:ibo_Lipschitz}}

The statement concerns $\mathcal{T}_\mu$ with either $\mu=\P$ or $\mu=\P_n$ and our proof holds for a generic $\mu\in\mathcal{M}$.  It is convenient to first state two technical lemmas. 

\begin{lemma} \label{lem:1-prop-ibo-lipschitz}
  When restricted on a bounded set, the map $F\in\mathbb{B} \mapsto \tilde r_\mu(F,A) $
  is bounded and Lipschitz, with constants that do not depend on $A$.
\end{lemma}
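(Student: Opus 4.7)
The plan is to reduce the statement to elementary estimates on the numerator $N_\mu(F,A):=\mu[\pderiv{z}L(y,F(x))\1_A(x)]$ and denominator $D_\mu(F,A):=\mu[\pderiv[2]{z}L(y,F(x))\1_A(x)]$ of the ratio $\tilde r_\mu(F,A)=-N_\mu(F,A)/D_\mu(F,A)$. Fix a bounded set $B\subset\mathbb{B}$, let $M=\sup_{F\in B}\norm{F}_\infty$ and $K=[-M,M]$, and let $C$ denote the constant provided by \Cref{ass:bounded-lipshitz} for this $K$. Conditioning on $X$ under $\mu$ rewrites the two quantities as $N_\mu(F,A)=\mu[\ell_1(X,F(X))\1_A(X)]$ and $D_\mu(F,A)=\mu[\ell_2(X,F(X))\1_A(X)]$, so that integrating the pointwise and Lipschitz-in-$z$ bounds of \Cref{ass:bounded-lipshitz} yields, uniformly in $A$ and for all $F,G\in B$,
\[
\abs{N_\mu(F,A)}\leq C\,\mu(x\in A),\qquad D_\mu(F,A)\geq C^{-1}\mu(x\in A),
\]
together with the fact that both $N_\mu(\cdot,A)$ and $D_\mu(\cdot,A)$ are Lipschitz on $B$ in the supremum norm with constant $C\mu(x\in A)$.

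From these bounds the two claims follow almost immediately. For boundedness, if $\mu(x\in A)=0$ then $\tilde r_\mu(F,A)=0$ by the $0/0$ convention; otherwise the first two inequalities give $\abs{\tilde r_\mu(F,A)}\leq C^2$, a constant independent of $A$ and of $F\in B$. For Lipschitz continuity on $B$, I would plug the estimates into the algebraic identity
\[
\tilde r_\mu(F,A)-\tilde r_\mu(G,A)=-\frac{N_\mu(F,A)-N_\mu(G,A)}{D_\mu(F,A)}-\tilde r_\mu(G,A)\cdot\frac{D_\mu(G,A)-D_\mu(F,A)}{D_\mu(F,A)},
\]
and observe that each ratio on the right-hand side is of the form (quantity bounded by $C\,\mu(x\in A)\,\norm{F-G}_\infty$) divided by $D_\mu\geq C^{-1}\mu(x\in A)$; the $\mu(x\in A)$ factors therefore cancel and one obtains $\abs{\tilde r_\mu(F,A)-\tilde r_\mu(G,A)}\leq(C^2+C^4)\,\norm{F-G}_\infty$ with a constant uniform in $A$ (and, incidentally, in $\mu\in\mathcal{M}$).

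I do not anticipate any serious obstacle: everything flows from the uniform bounds provided by \Cref{ass:bounded-lipshitz}, which on their own suffice and bypass the case-by-case structure of \Cref{ass:bound-ratios}. The only mild subtleties are the degenerate case $\mu(x\in A)=0$, handled via the $0/0=0$ convention, and the careful bookkeeping of the $\mu(x\in A)$ factors needed to certify that the resulting Lipschitz constant is genuinely independent of $A$.
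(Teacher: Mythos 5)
Your proposal is correct and follows essentially the same route as the paper: both reduce to the uniform bounds and Lipschitz estimates on $\ell_1,\ell_2$ from \Cref{ass:bounded-lipshitz}, the only cosmetic difference being that the paper clears denominators via a double integral over $A^2$ while you use the standard add-and-subtract decomposition of a difference of ratios. (Minor slip: the second term of your identity should read $+\tilde r_\mu(G,A)\cdot\frac{D_\mu(G,A)-D_\mu(F,A)}{D_\mu(F,A)}$, but this does not affect the absolute-value bounds.)
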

\begin{proof}
  Let us fix $M$. We note $\mathbb{B}_M:=\{F\in\mathbb{B},\norm{F}_\infty \leq M\}$ the ball with radius $M$ and consider $F,G\in \mathbb{B}_M$.
  With the notation of \Cref{ass:bounded-lipshitz}, we have
  \[
  \mu\Big[\pderiv[i]{z}{L}(y,F(x))\1_{A}(x)\Big] = \int_A \ell_i(x, F(x)) \,\mu_X(\rmd x),\qquad i\in \{1,2\}, 
  \]
  with $\mu_X$ the marginal distribution of $X$.   Using this and putting everything on the same denominator, we can write
  \begin{align*}
    &\abs{\tilde r_\mu(F,A) - \tilde r_\mu(G,A)}\\
    &=\Big(\int_{A^2} \ell_1(x, F(x))\ell_2(x', G(x')) - \ell_1(x, G(x))\ell_2(x', F(x')) \,\mu_X^{\otimes 2}(\rmd x\rmd x')\Big)\\
    &\quad \times \Big(\int_{A^2} \ell_2(x, F(x))\ell_2(x', G(x')) \,\mu_X^{\otimes 2}(\rmd x\rmd x')\Big)^{-1}.
  \end{align*}
  By \Cref{ass:bounded-lipshitz}, there is a constant $C>0$ that depends only on $M$ such that for all $i\in \{1,2\}$,
  \begin{gather*}
    \abs{\ell_i(x,F(x))}\vee \abs{\ell_i(x,G(x))} \leq C,\quad {\ell_2(x,F(x))}\wedge {\ell_2(x,G(x))} \geq C^{-1},\\
    \text{and }\quad \abs{\ell_i(x,F(x))-\ell_i(x,G(x))} \leq C \abs{F(x)-G(x)}.
  \end{gather*}
  We thus easily obtain
  \[
  \abs{\tilde r_\mu(F,A))} \leq C^2 \text{ and }\abs{\tilde r_\mu(F,A)-\tilde r_\mu(G,A)} \leq C^4 \norm{F-G}_\infty,
  \]
  which concludes the proof.
\end{proof}

\begin{lemma} \label{lem:2-prop-ibo-lipschitz}
 When restricted on a bounded set, the map $F\in \mathbb{B}\mapsto\frac{\rmd Q_{F}^\mu}{\rmd Q_0}(\xi)$ is bounded by $K^{2^d-1}$ and Lipschitz, with a constant that does not depend on $\xi$.
\end{lemma}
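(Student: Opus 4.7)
The plan is to work from the explicit formula for the Radon--Nikodym derivative obtained from \eqref{eq:RN} (adapted to an arbitrary measure $\mu\in\mathcal{M}$), namely
\[
\frac{\rmd Q_{F}^\mu}{\rmd Q_0}(\xi^1)= \int \prod_{v\in \mathscr{T}_{d-1}}  \frac{K\exp(\beta \Delta_\mu(j_v^1,u_v^1; A_v))}{\sum_{k=1}^K \exp(\beta \Delta_\mu(j_v^{k},u_v^{k}; A_v))} \,Q_0(\rmd \xi^2)\cdots Q_0(\rmd \xi^K),
\]
where the regions $(A_v)_{v\in\mathscr{T}_d}$ are those induced by $\xi^1$. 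The boundedness by $K^{2^d-1}$ is just \eqref{eq:RN-bounded}, so all the work is in the Lipschitz estimate, which I will build from the inside out.

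The main step is to show that $F\in \mathbb{B}_M \mapsto \Delta_\mu(j,u;A)\in\mathbb{R}$ is uniformly bounded and Lipschitz-continuous, with constants depending only on $M$ (and not on $(j,u,A)$ or on $\mu$ through more than \Cref{ass:bounded-lipshitz}). Writing each summand of $\Delta_\mu(j,u;A)$ as $a_\mu(F,B)^2/\mu(x\in B)$, where $B\in \{A_0,A_1\}$ and $a_\mu(F,B) = \int_B \ell_1(x,F(x))\,\mu_X(\rmd x)$, the bound $|\ell_1(x,F(x))|\leq C$ from \Cref{ass:bounded-lipshitz} yields $|a_\mu(F,B)|\leq C\mu(x\in B)$, so Cauchy--Schwarz gives $a_\mu(F,B)^2/\mu(x\in B)\leq C^2\mu(x\in B)\leq C^2$ (with the convention $0/0=0$). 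For the Lipschitz estimate, the identity
\[
\frac{a_\mu(F,B)^2-a_\mu(G,B)^2}{\mu(x\in B)}=\frac{(a_\mu(F,B)-a_\mu(G,B))(a_\mu(F,B)+a_\mu(G,B))}{\mu(x\in B)}
\]
combined with $|a_\mu(F,B)-a_\mu(G,B)|\leq C\mu(x\in B)\norm{F-G}_\infty$ (from the Lipschitz bound on $\ell_1$) and $|a_\mu(F,B)+a_\mu(G,B)|\leq 2C\mu(x\in B)$ collapses the denominator and gives a Lipschitz constant $2C^2\mu(x\in B)\leq 2C^2$, again uniformly in $(j,u,A,\mu)$.

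Next, the softmax map $(\delta_1,\dots,\delta_K)\mapsto K\exp(\beta\delta_1)/\sum_k\exp(\beta\delta_k)$ is everywhere bounded by $K$ and smooth, hence Lipschitz on the compact range $[0,2C^2]^K$; composing with the previous step, each factor $h_v(F,\xi^1,\dots,\xi^K):= K\exp(\beta\Delta_\mu(j_v^1,u_v^1;A_v))/\sum_k \exp(\beta\Delta_\mu(j_v^k,u_v^k;A_v))$ is bounded by $K$ and Lipschitz in $F\in\mathbb{B}_M$ with a constant $L$ independent of the splitting schemes. A telescoping argument then shows
\[
\bigg|\prod_{v\in\mathscr{T}_{d-1}}h_v(F,\cdot)-\prod_{v\in\mathscr{T}_{d-1}}h_v(G,\cdot)\bigg|\leq (2^d-1)K^{2^d-2}\,L\,\norm{F-G}_\infty,
\]
pointwise in $(\xi^1,\dots,\xi^K)$. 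Integrating against $Q_0(\rmd\xi^2)\cdots Q_0(\rmd\xi^K)$ preserves both the pointwise bound $K^{2^d-1}$ and this Lipschitz estimate, yielding the claim with a constant independent of $\xi^1$.

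The only place that could give trouble is the denominators $\mu(x\in A_0)$ and $\mu(x\in A_1)$ in $\Delta_\mu$, which may degenerate to zero; the key observation handling this is that the Cauchy--Schwarz/factorization arguments above always leave a factor $\mu(x\in B)$ in the numerator, so the cancellation is automatic and the convention $0/0=0$ is consistent.
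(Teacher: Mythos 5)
Your proof is correct and follows essentially the same route as the paper's: bound each softmax factor by $K$ for the uniform bound, then reduce the Lipschitz estimate to showing that $F\mapsto\Delta_\mu(j,u;A)$ is bounded and Lipschitz uniformly in $(j,u,A)$ via \Cref{ass:bounded-lipshitz}, and propagate through the softmax and the product. You simply spell out the algebra (the difference-of-squares factorization that cancels the denominator $\mu(x\in B)$, and the telescoping bound for the product) that the paper compresses into "a similar argument as in the previous proof."
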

\begin{proof}
  Recall the expression for $\frac{\rmd Q_{F}^\mu}{\rmd Q_0}$ that can be deduced from \eqref{eq:RN}:
  \begin{equation}\label{eq:RN2}
  \frac{\rmd Q_{F}^\mu}{\rmd Q_0}(\xi^1)= 
  \int \prod_{v\in \mathscr{T}_{d-1}}  \frac{K\exp(\beta \Delta_\mu(j_v^1,u_v^1; A_v))}{\sum_{k=1}^K \exp(\beta \Delta_\mu(j_v^{k},u_v^{k}; A_v))} \,Q_0(\rmd \xi^2)\cdots Q_0(\rmd \xi^K),
  \end{equation}
  with 
  \begin{equation} \label{eq:Delta_i-expression}
    \Delta_\mu(j_v^{k},u_v^{k}; A_v) = \frac{\mu[\pderiv{z}L(y,F(x))\1_{A_{v0}^k}(x)]^2}{\mu(x\in A_{v0}^k)}+\frac{\mu[\pderiv{z}L(y,F(x))\1_{A_{v1}^k}(x)]^2}{\mu(x\in A_{v1}^k)},
  \end{equation}
  where $A_{v0}^k$ and $A_{v1}^k$ are the regions resulting from the $(j_v^k,u_v^k)$-split of $A_v$.
  
  Now note that the softmax function $(t_1,\dots,t_K) \mapsto \exp(t_1)/(\sum_{i=1}^K\exp(t_i))$  is bounded by $1$. Hence the product in~\eqref{eq:RN2}  has 
  $2^{d}-1$ factors bounded by $K$ so that the bound $K^{2^d-1}$ is clear.
 On the other hand, the softmax function is $1/2$-Lipschitz-continuous for the uniform norm. It is therefore sufficient to show that the maps $F\mapsto \Delta_\mu(j_v^{k},u_v^{k}; A_v)$ are bounded and Lipschitz-continuous on bounded subsets of $\mathbb{B}$ with constants that do not depend on $(j_v^k,u_v^k)$ and $A_v$. Using
\[
\frac{\mu[\pderiv{z}L(y,F(x))\1_{A}(x)]^2}{\mu(x\in A)}=\Big(\int_{A^2} \ell_1(x,F(x))\ell_1(x',F(x')) \,\mu_X^{\otimes 2}(\rmd x\rmd x')\Big)\times \Big( \int_A \mu_X(\rmd x)\Big)^{-1}, 
\] 
 this follows from \Cref{ass:bounded-lipshitz} with a similar argument as in the previous proof.
\end{proof}

We are now ready to prove \Cref{prop:ibo_Lipschitz}.

\begin{proof}[Proof of \Cref{prop:ibo_Lipschitz}]
  By Equation~\eqref{eq:ibo3}, 
  \[
   \mathcal{T}_\mu(F)(z)= \int \sum_{v\in \{0,1\}^d} \psi_{F}^\mu(v,\xi)\1_{A_v}(z) \,Q_0(\rmd \xi)
  \]
  with 
  \[
  \psi_{F}^\mu(v,\xi)=\tilde r_\mu(F,A_v)\frac{\rmd Q_{F}^\mu}{\rmd Q_0}(\xi).
  \]
  Let $M>0$ and assume $F,G\in \mathbb{B}_M$. \Cref{lem:bound-norm-Wq} implies
  \begin{equation}\label{eq:technical}
  \norm{\mathcal{T}(F)-\mathcal{T}(G)}_{\mathbb{W}^{\infty}} \leq 2^{p+d}\sum_{v\in\{0,1\}^d}\norm[\big]{\psi_{F}^\mu(v,\xi)-\psi_{G}^\mu(v,\xi)}_{L^\infty(Q_0)}.
  \end{equation}
   By \Cref{lem:1-prop-ibo-lipschitz} and \Cref{lem:2-prop-ibo-lipschitz}, we know that for  fixed $v,\xi$, the maps $F\mapsto \tilde r_\mu(F,A_v)$ and $F\mapsto \frac{\rmd Q_{F}^\mu}{\rmd Q_0}(\xi)$ are both bounded and Lipschitz on $\mathbb{B}_M$, for constants that do not depend on $v,\xi$.
  This implies that, for each $v$, the product $F\mapsto \psi_{F}^\mu(v,\xi)\in L^\infty(Q_0)$ is also bounded and Lipschitz on $\mathbb{B}_M$, thus concluding the proof.
\end{proof}

\subsubsection{Proof of Proposition~\ref{prop:continuity_ibo_igb}}
We first state a deterministic convergence lemma for sequences of infinitesimal gradient boosting operators. This lemma  will be key in the proof of \Cref{prop:continuity_ibo_igb} and also in the proof of \Cref{thm:asymptotic_ibo}.

\begin{lemma}\label{lem:deterministic-T-cv}
  Let $\mu, (\mu_k)_{k\geq 1}$ be distributions in $\mathcal{M}$. 
  Assume that there exists a bounded subset $B\subset \mathbb{B}$ and $q \geq 1$ such that:
  \begin{enumerate}[label=(\roman*)]
    \item $\displaystyle \sup_{F\in B}\sum_{v\in\{0,1\}^d} \abs[\big]{\tilde r_{\mu_k}(F,A_v)-\tilde r_{\mu}(F,A_v)} \to 0$, $Q_0$-a.s.
    \item $\displaystyle \sup_{F\in B}\sum_{v\in\{0,1\}^d} \abs[\big]{\frac{\mu_k[\pderiv{z}L(y,F(x))\1_{A_v}(x)]^2}{\mu_k(x\in A_v)}-\frac{\mu[\pderiv{z}L(y,F(x))\1_{A_v}(x)]^2}{\mu(x\in A_v)}} \to 0$, $Q_0$-a.s.
    \item $\displaystyle\int \sup_{k\in\mathbb{N},F\in B}\sum_{v\in\{0,1\}^d}\abs[\big]{\tilde r_{\mu_k}(F,A_v)}^q \,Q_0(\rmd \xi) < \infty$.
  \end{enumerate}
  Then $\sup_{F\in B}\norm{\mathcal{T}_{\mu_k}(F)-\mathcal{T}_\mu(F)}_{\mathbb{W}^q} \to 0$.
\end{lemma}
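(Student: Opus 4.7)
Proof plan.

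The starting point is the explicit representation \eqref{eq:ibo3}, which writes both $\mathcal{T}_{\mu_k}(F)$ and $\mathcal{T}_{\mu}(F)$ as integrals of the form to which Lemma \ref{lem:bound-norm-Wq} applies. Setting $\psi_{F}^{\mu}(v,\xi) = \tilde r_{\mu}(F,A_v)\,\frac{\rmd Q_{F}^{\mu}}{\rmd Q_0}(\xi)$, the difference $\mathcal{T}_{\mu_k}(F)-\mathcal{T}_{\mu}(F)$ fits the template of Lemma \ref{lem:bound-norm-Wq} with integrand $\psi_{F}^{\mu_k}-\psi_{F}^{\mu}$, yielding
\[
\sup_{F\in B}\norm{\mathcal{T}_{\mu_k}(F)-\mathcal{T}_{\mu}(F)}_{\mathbb{W}^q}\;\leq\;2^{p+d}\sum_{v\in\{0,1\}^d}\sup_{F\in B}\norm[\big]{\psi_{F}^{\mu_k}(v,\cdot)-\psi_{F}^{\mu}(v,\cdot)}_{L^q(Q_0)}.
\]
It therefore suffices to show, for each $v$, that the right-hand side tends to zero, which is what the remainder of the proof establishes.

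For each $v$, I would split the integrand additively as
\[
\psi_{F}^{\mu_k}(v,\xi)-\psi_{F}^{\mu}(v,\xi) = \bigl[\tilde r_{\mu_k}(F,A_v)-\tilde r_{\mu}(F,A_v)\bigr]\frac{\rmd Q_{F}^{\mu_k}}{\rmd Q_0}(\xi) + \tilde r_{\mu}(F,A_v)\left[\frac{\rmd Q_{F}^{\mu_k}}{\rmd Q_0}(\xi)-\frac{\rmd Q_{F}^{\mu}}{\rmd Q_0}(\xi)\right],
\]
and handle each piece by a dominated-convergence argument in $L^q(Q_0)$. For the first piece, the Radon--Nikodym factor is bounded by $K^{2^d-1}$ (equation \eqref{eq:RN-bounded}), while hypothesis (i) gives $\sup_{F\in B}\sum_v|\tilde r_{\mu_k}(F,A_v)-\tilde r_{\mu}(F,A_v)|\to 0$ pointwise $Q_0$-a.s. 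A $Q_0$-integrable envelope is provided by hypothesis (iii) together with the uniform bound $\sup_{F\in B}|\tilde r_{\mu}(F,A_v)|\leq C_B$ from Lemma \ref{lem:1-prop-ibo-lipschitz} (so $\sum_v|\tilde r_{\mu}(F,A_v)|^q\leq 2^d C_B^q$). Dominated convergence then finishes this piece.

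For the second piece, $\tilde r_{\mu}(F,A_v)$ is again uniformly bounded by Lemma \ref{lem:1-prop-ibo-lipschitz} and the two Radon--Nikodym derivatives are both bounded by $K^{2^d-1}$, so the envelope is harmless and I only need uniform-in-$F$ pointwise convergence of the Radon--Nikodym derivatives, $Q_0$-a.s. Using the representation \eqref{eq:RN2}, each RN derivative is an average (over independent copies $\xi^2,\dots,\xi^K$) of a product of $2^d-1$ softmax factors; the softmax is $\tfrac{\beta}{2}$-Lipschitz in its arguments and bounded by $1$, so by dominated convergence (on the inner $(\xi^2,\dots,\xi^K)$ integral) it is enough to get convergence of every score $\Delta_{\mu_k}(j_v^k,u_v^k;A_v)\to\Delta_{\mu}(j_v^k,u_v^k;A_v)$, uniformly in $F\in B$ and $(Q_0^{\otimes K})$-a.s.

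The main obstacle is this last step: hypothesis (ii) is stated for the leaves $A_v$, $v\in\{0,1\}^d$, of a single depth-$d$ splitting scheme drawn from $Q_0$, whereas the scores appearing in \eqref{eq:Delta_i-expression} involve the regions $A_{v0}^k,A_{v1}^k$ obtained by inserting an auxiliary split at an internal node of depth $|v|<d$. To bridge this, I would introduce, for each internal node $v$ and each $k$, the modified splitting scheme $\tilde\xi^{v,k}$ equal to $\xi^1$ except that the split at node $v$ is replaced by $(j_v^k,u_v^k)$ from $\xi^k$. Since each individual split is i.i.d.\ uniform under $Q_0$, the law of $\tilde\xi^{v,k}$ is again $Q_0$; by Fubini, hypothesis (ii) applies to $\tilde\xi^{v,k}$ simultaneously for all $v,k$ outside a $Q_0^{\otimes K}$-null set. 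The regions $A_{v0}^k,A_{v1}^k$ are then coarsenings of the leaves of $\tilde\xi^{v,k}$, and by expressing the ratio $\mu[g\1_A]^2/\mu(x\in A)$ for a union $A=\bigsqcup_w A_w$ as a continuous function of the leaf-wise quantities, the convergence of scores follows. Combining everything, the two pieces tend to zero in $L^q(Q_0)$ uniformly in $F\in B$, which concludes the proof.
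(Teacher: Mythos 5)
Your proof follows the paper's argument essentially step for step: the same application of Lemma~\ref{lem:bound-norm-Wq}, the same product-rule decomposition of $\psi_F^{\mu_k}-\psi_F^{\mu}$ into the two terms the paper calls $\theta_{k,1}$ and $\theta_{k,2}$, and the same dominated-convergence argument in $L^q(Q_0)$ with the envelope supplied by hypothesis (iii) together with the uniform bound $K^{2^d-1}$ on the Radon--Nikodym derivatives. Where you go beyond the paper is on the term $\theta_{k,2}$: the paper disposes of it in one line (``Assumption (ii) together with Equations \eqref{eq:RN2}--\eqref{eq:Delta_i-expression} ensure that $\theta_{k,2}\to 0$''), whereas you correctly observe that the regions $A^k_{v0},A^k_{v1}$ entering the scores are not leaves of the reference scheme, and your device of re-randomizing one internal split (which leaves $Q_0$ invariant, so Fubini applies) is the right way to reduce to $Q_0$-distributed schemes. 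One caveat on your closing step: for a union $A=\bigsqcup_w A_w$ of leaves, the quantity $\mu[g\1_{A}]^2/\mu(x\in A)$ is \emph{not} a function of the leaf-wise ratios appearing in hypothesis (ii) --- from $a_{k}^2/b_{k}\to a^2/b$ one cannot recover $a_k\to a$ and $b_k\to b$, so the ratio for the coarsened region is not determined by the leaf ratios. What is actually needed (and what is in fact verified whenever the lemma is invoked in the paper, via the Glivenko--Cantelli classes of Proposition~\ref{prop:GC} or the continuity argument in Proposition~\ref{prop:continuity_ibo_igb}) is convergence of the numerators $\mu_k[\pderiv{z}L(y,F(x))\1_{A}(x)]$ and of $\mu_k(x\in A)$ separately, uniformly over hyperrectangles; with that reading of (ii) your argument closes. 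This imprecision is shared with --- indeed inherited from --- the paper's own one-line treatment, so I would regard your proof as matching the paper's, with a more careful (if not fully airtight) handling of the one step the paper leaves implicit.
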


\begin{proof}
  Similarly as for Equation~\eqref{eq:technical}, \Cref{lem:bound-norm-Wq} entails
  \[
  \norm[\big]{\mathcal{T}_{\mu_k}(F)-\mathcal{T}_\mu(F)}_{\mathbb{W}^q} \leq 2^{p+d} \sum_{v\in\{0,1\}^d} \norm[\big]{\psi_F^{\mu_k}(v,\xi)-\psi_F^\mu(v,\xi)}_{L^q(Q_0)}.
  \]
  Using the triangle inequality, we have the bound
  \[
  \sup_{F\in B}\abs[\big]{\psi_F^{\mu_k}(v,\xi)-\psi_F^\mu(v,\xi)} \leq \theta_{k,1}(v,\xi) + \theta_{k,2}(v,\xi),
  \]
  with
  \begin{align*}
    \theta_{k,1}(v,\xi) &= \sup_{F\in B}\abs[\big]{\tilde r_{\mu_k}(F,A_v)-\tilde r_{\mu}(F,A_v)}\sup_{F\in B}\abs[\Big]{\frac{\rmd Q_F^{\mu_k}}{\rmd Q_0}(\xi)}\\
    \text{and }\theta_{k,2}(v,\xi) &=\sup_{F\in B}\abs[\big]{\tilde r_{\mu}(F,A_v)}\sup_{F\in B}\abs[\Big]{\frac{\rmd Q_F^{\mu_k}}{\rmd Q_0}(\xi)-\frac{\rmd Q_F^\mu}{\rmd Q_0}(\xi)}.
  \end{align*}
  We show that, for fixed $v$, $\sup_{k\geq 1}(\theta_{k,1}+\theta_{k,2})\in L^q(Q_0)$ and $(\theta_{k,1}+\theta_{k,2})\to 0$, $Q_0$-a.s. By dominated convergence, we have then $\norm{\theta_{k,1}+\theta_{k,2}}_{L^q(Q_0)}\to 0$ and the result follows.  By \Cref{lem:2-prop-ibo-lipschitz}, $\rmd Q_F^{\mu_k}/\rmd Q_0(\xi)$ is uniformly bounded by $K^{2^d-1}$. Assumption~(iii) then implies that $\sup_{k\geq 1}(\theta_{k,1}+\theta_{k,2})\in L^q(Q_0)$. On the other hand, Assumption~(i) ensures that  $\theta_{k,1} \to 0$ $Q_0$-a.s. and Assumption (ii) together with Equations \eqref{eq:RN2}-\eqref{eq:Delta_i-expression} ensure that $\theta_{k,2} \to 0$ $Q_0$-a.s.
\end{proof}

We can now use the result above to prove Proposition~\ref{prop:continuity_ibo_igb}.

\begin{proof}[Proof of Proposition~\ref{prop:continuity_ibo_igb} (i)]
Let $n\geq 1$ be fixed, consider an input sample $(\mathbf{x},\mathbf{y})=(x_i,y_i)_{1\leq i\leq n}$ and define $\mu=\sum_{i=1}^n \delta_{(x_i,y_i)}$.  For a sequence of input samples $(\mathbf{x}^k,\mathbf{y}^k)=(x^k_i,y^k_i)_{1\leq i\leq n}$ that tends to $(\mathbf{x},\mathbf{y})$ as $k\to \infty$, we write $\mu_k=\sum_{i=1}^n \delta_{(x_i^k,y_i^k)}$ for the empirical distribution associated with $(\mathbf{x}^k,\mathbf{y}^k)$. It is easily checked that $\mu,\mu_k$ satisfies \Cref{ass:L-regular-convex,ass:Y-integrable,ass:bound-ratios,ass:bounded-lipshitz} (with any $q>1$ in \Cref{ass:Y-integrable}).
We want to show that for any $q\geq 1$ and any bounded subset $B\subset \mathbb{B}$, we have
\[
  \sup_{F\in B}\norm{\mathcal{T}_{\mu_k}(F)-\mathcal{T}_\mu(F)}_{\mathbb{W}^q} \underset{k\to\infty}{\longrightarrow} 0.
\]
This is easily proven thanks to \Cref{lem:deterministic-T-cv}. The assumptions \textit{(i)--(iii)} of the lemma are easily verified.
Point \textit{(iii)} is satisfied because necessarily $\{(\mathbf{x},\mathbf{y})\}\cup\big(\bigcup_{k\geq 1}\{(\mathbf{x}^k,\mathbf{y}^k)\}\big)$ is a compact subset of $[0,1]^{pk}\times \mathbb{R}^k$, so by a continuity argument, we have
\begin{align*}
  &\sup_{F\in B}\sup_{A\subset [0,1]^p}\sup_{k\geq 1} \,\abs{\tilde{r}_{\mu_k}(F, A)} \\
 &\quad =\sup_{F\in B}\sup_{A\subset [0,1]^p}\sup_{k\geq 1} \,\abs[\Big]{\frac{\sum_{i}\pderiv{z}L(y_i^k,F(x_i^k))\1_{A}(x_i^k)}{\sum_{i}\pderiv[2]{z}L(y_i^k,F(x_i^k))\1_{A}(x_i^k)}} < \infty.
\end{align*}
For \textit{(i)--(ii)}, it is clear that the convergence holds for splitting schemes $\xi$ such that none of the $(x_i)_{1\leq i\leq n}$ is at the frontier of a leaf.
This event has null $Q_0$-probability since the splits of a  completely random splitting scheme are uniform.
Therefore the lemma applies, concluding the proof.
\end{proof}

\begin{proof}[Proof of Proposition~\ref{prop:continuity_ibo_igb} (ii)]
 We write respectively $(\hat F_t^n)_{t\geq 0}$ and $(\hat F_t^n{}')_{t\geq 0}$ for the infinitesimal gradient boosting process based on the input samples $(\mathbf{x},\mathbf{y})$ and $(\mathbf{x}',\mathbf{y}')$. Since $(\hat F_t^n)_{t\geq 0}$ is the solution of the ODE, 
\[
\hat F_t^n=\hat F_0^n+\int_0^t \mathcal{T}_n(\hat F_u^n)\,\rmd u,\quad t\geq 0,
\]
and similarly for $\hat F_t^n{}'$. We deduce
\begin{equation}\label{eq:gronwall}
\norm[\big]{\hat F_t^n{}'-\hat F_t^n}_{\mathbb{W}^q}\leq \norm[\big]{\hat F_0^n{}'-\hat F_0^n}_{\mathbb{W}^q}+\int_0^t \norm[\big]{\mathcal{T}_n'(\hat F_u^n{}')-\mathcal{T}_n(\hat F_u^n)}_{\mathbb{W}^q}\,\rmd u.
\end{equation}
 By the triangle inequality, 
\[
\norm[\big]{\mathcal{T}_n'(\hat F_u^n{}')-\mathcal{T}_n(\hat F_u^n)}_{\mathbb{W}^q}
\leq  \norm[\big]{\mathcal{T}_n'(\hat F_u^n{}')-\mathcal{T}_n(\hat F_u^n{}')}_{\mathbb{W}^q}+\norm[\big]{\mathcal{T}_n(\hat F_u^n{}')-\mathcal{T}_n(\hat F_u^n)}_{\mathbb{W}^q}.
\]
Let the time horizon $T>0$ be fixed  and let $B_T\subset \mathbb{W}^q$ be a bounded set containing $\hat F_u^n$ and $\hat F_u^n{}'$ for $u\in[0,T]$ and $C_T$ the Lipschitz constant on $B_T$ of the locally Lipschitz map $\mathcal{T}^n$. We have
\begin{align*}
 \norm[\big]{\mathcal{T}_n'(\hat F_u^n{}')-\mathcal{T}_n(\hat F_u^n{}')}_{\mathbb{W}^q}&\leq K := \sup_{F\in B_T}\norm[\big]{\mathcal{T}_n'(F)-\mathcal{T}_n(F)}_{\mathbb{W}^q},\\
\norm[\big]{\mathcal{T}_n(\hat F_u^n{}')-\mathcal{T}_n(\hat F_u^n)}_{\mathbb{W}^q} &\leq C_T \norm[\big]{\hat F_u^n{}'-\hat F_u^n}_{\mathbb{W}^q}.
\end{align*}
These bounds together with Equation~\eqref{eq:gronwall} imply
\[
\norm[\big]{\hat F_t^n{}'-\hat F_t^n}_{\mathbb{W}^q}\leq  \norm[\big]{\hat F_0^n{}'-\hat F_0^n}_{\mathbb{W}^q}+Kt+C_T\int_0^t \norm[\big]{\hat F_u^n{}'-\hat F_u^n}_{\mathbb{W}^q}\,\rmd u.
\]
Grönwall's Lemma finally yields
\[
\norm[\big]{\hat F_t^n{}'-\hat F_t^n}_{\mathbb{W}^q}\leq \big( \norm[\big]{\hat F_0^n{}'-\hat F_0^n}_{\mathbb{W}^q}+Kt\big)e^{C_Tt},\quad t\in[0,T].
\]
By point (i) proven above,  $K= \sup_{F\in B_T}\norm[\big]{\mathcal{T}_n'(F)-\mathcal{T}_n(F)}_{\mathbb{W}^q}\to 0$ as $(\mathbf{x}',\mathbf{y}')\to(\mathbf{x},\mathbf{y})$ and therefore it remains to show that $\norm{\hat F_0^n{}'-\hat F_0^n}_{\mathbb{W}^q}\to 0$ as $(\mathbf{x}',\mathbf{y}')\to(\mathbf{x},\mathbf{y})$. Recall the initialization is constant and given by
$\hat F_0^{n}{}'=\argmin_{z} \sum_{i=1}^n L(y_i',z)$. Therefore, in a neighborhood of $\mathbf{y}$, the implicit function theorem implies the continuity of the map
\[
  \mathbf{y}' \mapsto \Big(\text{unique solution of }\sum_{i=1}^n \pderiv{z}L(y_i',z)=0\Big).
\]
Note that the theorem can be applied  since, by \Cref{ass:L-regular-convex},  $L$ is $C^2$ with $\pderiv[2]zL>0$.  We deduce that $\norm{\hat F_0^n{}'-\hat F_0^n}_{\mathbb{W}^q}=\mathrm{cst}\times |\hat F_0^n{}'-\hat F_0^n|\to 0$, proving the result.
\end{proof}

\subsubsection{Proof of Theorem~\ref{thm:asymptotic_ibo}}

\begin{proof}[Proof of \Cref{thm:asymptotic_ibo}]
We aim to use \Cref{lem:deterministic-T-cv}, and need to check that the conditions \textit{(i)--(iii)} of the lemma hold almost surely for $\mu_k = \P_k$ and $\mu=\P$.

We first show (i): $\displaystyle \sup_{F\in B}\sum_{v\in\{0,1\}^d} \abs[\big]{\tilde{r}_n(F,A_v)-\tilde{r}(F,A_v)} \to 0$, $Q_0$-a.s.\\
Note that for any hyperrectangle $A$ such that $\P(x\in A)>0$, \Cref{prop:GC} ensures the almost sure convergence $\tilde{r}_n(F,A_v)\to \tilde{r}(F,A_v)$.
Furthermore, for any $A$ such that $\P(x\in A)=0$, we have $\tilde{r}_n(F,A_v)=0$ and $\tilde{r}(F,A_v)=0$ almost surely.
Therefore (i) is satisfied.

The proof of (ii), which consists in showing that 
\[
\sup_{F\in B}\sum_{v\in\{0,1\}^d} \abs[\big]{\frac{\P_n[\pderiv{z}L(y,F(x))\1_{A_v}(x)]^2}{\P_n(x\in A_v)}-\frac{\P[\pderiv{z}L(y,F(x))\1_{A_v}(x)]^2}{\P(x\in A_v)}} \to 0, \quad Q_0\text{-a.s.},
\]
is exactly the same.

We now show (iii): $\displaystyle\int \sup_{n\in\mathbb{N},F\in B}\sum_{v\in\{0,1\}^d}  \abs[\big]{\tilde{r}_n(F,A_v)}^q \,Q_0(\rmd \xi) < \infty$.\\
This trivially holds under \Cref{ass:bound-ratios}, (i) because it implies that the ratios $\tilde{r}_n(F,A)$ are uniformly bounded.
Under \Cref{ass:bound-ratios}, (ii), instead, we can define $\delta = \inf_{(y,z)\in \mathcal{Y}\times K}\pderiv[2]{z}L(y,z)>0$, where $K=[-M,M]$, with $M=\sup_{F\in B}\norm{F}_\infty$.
We can then bound
\[
  \sup_{F\in B,n\geq 1}\tilde{r}_n(F,A) \leq \frac{1}{\delta}\sup_{n\geq 1} \frac{\P_n[\sup_{z\in K}\abs{\pderiv{z}L(y,z)}\1_{A_v}(x)]}{\P_n(x\in A)}.
\]
Note that the supremum in the right hand side is equal in distribution to
\[
\sup_{n\geq 1}\frac{1}{n}\sum_{i=1}^n g(\tilde{Y}_i),
\]
where $g(y)= \sup_{z\in K}|\pderiv{z}{L}(y,z)|$, and the $(\tilde{Y}_i)_{i\geq 1}$ are i.i.d.\ with the conditional distribution of $Y$ given $x\in A$.
It is classical \cite[see e.g.][Example 5.6.1]{Dur10} that $(\frac{1}{n}\sum_{i=1}^n g(\tilde{Y}_i))_{n\geq 1}$ is a backwards martingale, and Doob's inequality \cite[Theorem 5.4.3]{Dur10} implies
\[
  \E\left[\Big(\sup_{n\geq 1}\frac{1}{n}\sum_{i=1}^n g(\tilde{Y}_i)\Big)^q\right] \leq \left(\frac{q}{q-1}\right)^q \E[g(\tilde{Y})^q].
\]
Now by \Cref{ass:Y-integrable}, we can bound $\E[g(\tilde{Y})^q]$ by a constant $C$ that does not depend on $A$, so we have
\[
  \E\big[\sup_{F\in B,n\geq 1}\tilde{r}_n(F,A)^q\big] \leq \left(\frac{q}{q-1}\right)^qC < \infty.
\]
Taking the integral with respect to $\xi$, this shows that almost surely, (iii) is satisfied, concluding the proof.
\end{proof}

\subsection{Proof of Proposition \ref{prop:ODE} and of Lemma \ref{lem:tmax}}

\Cref{prop:ODE} is immediate from the fact that $\mathcal{T}:\mathbb{W}^\infty \to \mathbb{W}^\infty$ is locally Lipschitz, which is a straightforward consequence of \Cref{prop:ibo_Lipschitz}.

\begin{proof}[Proof of \Cref{lem:tmax}]
  By providing control on the leaf values of a softmax gradient tree, Equation \eqref{eq:infinite-tmax} ensures that for $F\in \mathbb{B}$, we have $\norm{\mathcal{T}(F)}_{\mathbb{W}^\infty} \leq A \norm{F}_{\infty} + B$.
  Therefore, from any initial condition $F_0\in\mathbb{B}$, the solution to
  \[
    \deriv{t} F_t = \mathcal{T}(F_t)
  \]
  satisfies
  \[
    \norm{F_t - F_0}_{\infty} \leq \norm{F_t - F_0}_{\mathbb{W}^\infty} \leq \int_0^t \norm{\mathcal{T}(F_s)}_{\mathbb{W}^\infty} \, \rmd s \leq \int_0^t A\norm{F_s}_{\infty} + B \, \rmd s.
  \]
  A standard Grönwall lemma-type argument shows that the norm of $F_t$, hence the norm $\mathcal{T}(F_t)$, cannot explode in finite time, therefore the maximal time of definition of $F_t$ is $t_{\max}=+\infty$.
\end{proof}

\subsection{Proof of Theorem~\ref{thm:asymptotic_igb}}
\begin{proof}[Proof of Theorem~\ref{thm:asymptotic_igb}] The proof shares similarities with the proof of Proposition~\ref{prop:continuity_ibo_igb} (ii) and relies on Grönwall's Lemma.

First let us show that $\hat{F}^n_0 \to \hat{F}_0$ almost surely.  According to Assumption~\ref{ass:L-regular-convex}, the  map $z\mapsto \mathbb{E}[L(Y,z)]$ has a unique minimizer  $\hat F_0$ which must be the unique zero of the derivative $z\mapsto \mathbb{E}[\partial L(Y,z)]$ --- note that Assumption~\ref{ass:Y-integrable} ensures that one can differentiate under the expectation. The maps $z\mapsto P_n[L(y,z)]$, $n\geq 1$, are strictly convex and, by the law of large numbers,  their derivatives satisfy, for all $\epsilon>0$,
\[
P_n\big[\pderiv{z} L(y,\hat F_0-\epsilon)\big]\longrightarrow \mathbb{E}\big[\pderiv{z} L(Y,\hat F_0-\epsilon)\big]>0 \quad \mbox{a.s.}
 \]
 and
 \[ P_n\big[\pderiv{z} L(y,\hat F_0+\epsilon)\big]\longrightarrow \mathbb{E}\big[\pderiv{z} L(Y,\hat F_0-\epsilon)\big]>0\quad \mbox{a.s.}
\]
This ensures that, for $n$ large enough,  $\hat F_0^n\in [\hat F_0-\epsilon,\hat F_0+\epsilon]$ and,  $\epsilon>0$ being arbitrary, proves the almost sure convergence  $\hat{F}^n_0 \to \hat{F}_0$.

Now, note that for all $t\in [0,t_{\max})$, 
\[
\hat F_t=\hat F_0+\int_0^t \mathcal{T}(\hat F_u)\,\rmd u\quad
\mbox{and}\quad 
\hat F_t^n=\hat F_0^n+\int_0^t \mathcal{T}_n(\hat F_u^n)\,\rmd u.
\]
Taking the difference and applying the triangle inequality, we deduce
\begin{equation} \label{eq:preGronwall}
\norm[\big]{\hat F_t^n-\hat F_t}_{\mathbb{W}^q}\leq \norm[\big]{\hat F_0^n-\hat F_0}_{\mathbb{W}^q}+\int_0^t \norm[\big]{\mathcal{T}_n(\hat F_u^n)-\mathcal{T}(\hat F_u)}_{\mathbb{W}^q}\,\rmd u.
\end{equation}
The integrand in the right hand side is bounded from above by
\begin{equation} \label{eq:GronwallTriangleIneq}
\norm[\big]{\mathcal{T}_n(\hat F_u^n)-\mathcal{T}(\hat F_u)}_{\mathbb{W}^q}
\leq  \norm[\big]{\mathcal{T}_n(\hat F_u^n)-\mathcal{T}(\hat F_u^n)}_{\mathbb{W}^q}+\norm[\big]{\mathcal{T}(\hat F_u^n)-\mathcal{T}(\hat F_u)}_{\mathbb{W}^q}.
\end{equation}
Let us now fix a time horizon $T\in (0,t_{\max})$ and show that $\sup_{t\in [0,T]}\norm{\hat{F}_t^n-\hat{F}_t}_{\mathbb{W}^q} \to 0$ almost surely.
Define
\[
R=\sup_{t\in [0,T]} \norm{\hat{F}_t}_{\mathbb{W}^q} \quad \text{and}\quad M = \sup_{F\in B(R+1)}\norm{\mathcal{T}(F)}_{\mathbb{W}^q},
\]
where $B(R+1)$ denotes the closed ball of $\mathbb{W}^q$ centered on $0$ and of radius $R+1$.
For $n\geq 1$, define
\[ S_n=\max\{s\in [0,T]\ :\ \forall m\geq n, \,\hat{F}^n_s\in B(R+1)\}. \]
Note that be definition the $(S_n)_{n\geq 1}$ are nondecreasing.
We will show that
\begin{enumerate}[label=(\roman*)]
  \item $\sup_{t\in [0,S_n]} \sup_{m\geq n} \norm[\big]{\hat F_t^m-\hat F_t}_{\mathbb{W}^q} \to 0$.
  \item for all $n\geq 1$, there almost surely exists $n' \geq n$ such that $S_{n'} \geq (S_n+\delta_M)\wedge T$, for $\delta_M={(2(M+1))^{-1}}$.
\end{enumerate}
Since the second point implies that there almost surely exists $n\geq 1$ such that $S_n=T$, by the first point we will have
\[
  \sup_{t\in [0,T]}\norm[\big]{\hat F_t^n-\hat F_t}_{\mathbb{W}^q} \;\underset{n\to\infty}{\longrightarrow}\; 0\qquad a.s,
\]
which proves the proposition.

Let us now show (i) and (ii).
Let $C$ denote the Lipschitz constant on $B(R+1)$ of the locally Lipschitz map $\mathcal{T}$, and for $n\geq 1$, let us define
\[
  \epsilon_{n} = \sup_{F\in B(R+1)} \sup_{m\geq n} \norm[\big]{ \mathcal{T}_m(F)-\mathcal{T}(F)}_{\mathbb{W}^q},
\]
which tends to $0$ almost surely by \Cref{thm:asymptotic_ibo}.
By \eqref{eq:preGronwall} and \eqref{eq:GronwallTriangleIneq}, for $t\in [0,S_n]$, we have
\[
\norm[\big]{\hat F_t^n-\hat F_t}_{\mathbb{W}^q}\leq \norm[\big]{\hat F_0^n-\hat F_0}_{\mathbb{W}^q} + \epsilon_{n}T + \int_0^t C \norm[\big]{\hat F_u^n-\hat F_u}_{\mathbb{W}^q}\,\rmd u,
\]
whence Grönwall's Lemma implies
\[
\norm[\big]{\hat F_t^n-\hat F_t}_{\mathbb{W}^q}\leq \big( \norm[\big]{\hat F_0^n-\hat F_0}_{\mathbb{W}^q}+\epsilon_{n}T\big)e^{Ct}.
\]
Therefore we have
\[
\sup_{t\in[0,S_n]}\sup_{m\geq n}\norm[\big]{\hat F_t^m-\hat F_t}_{\mathbb{W}^q}\leq \big( \sup_{m\geq n}\norm[\big]{\hat F_0^m-\hat F_0}_{\mathbb{W}^q}+\epsilon_{n}T\big)e^{CT} \;\underset{n\to\infty}{\longrightarrow}\; 0\qquad a.s.
\]
This shows (i). To prove (ii), note that for any fixed $n\geq 1$ there is a random index $n'\geq n$ such that
\[
  \sup_{t\in [0,S_n]} \sup_{m\geq n'}\norm{\hat{F}^m_t-\hat{F}_t}_{\mathbb{W}^q} \leq \frac{1}{2} \quad\text{ and } \quad\sup_{F\in B(R+1)} \sup_{m\geq n'} \norm{\mathcal{T}^m(F)}_{\mathbb{W}^q} \leq M+1.
\]
This implies that for all $m\geq n'$, $\norm{\hat{F}^m_{S_n}}_{\mathbb{W}^q} \leq R+\frac{1}{2}$, and furthermore that for all $t\in [S_n,S_n+\delta_M]$, we have $\norm{\hat{F}^m_t-\hat{F}^m_{S_n}}_{\mathbb{W}^q} \leq \frac{1}{2}$.
Therefore we have $\hat{F}^m_t\in B(R+1)$ for all $m\geq n'$ and all $t\in [0,S_n+\delta]$, in other words $S_{n'} \geq (S_n+\delta)\wedge T$.
So we have shown (ii) and this completes the proof.
\end{proof}

\subsection{Proof of Proposition~\ref{prop:decreasing_error} and Proposition~\ref{prop:zero_residual}}

\begin{proof}[Proof of \Cref{prop:decreasing_error}]
We differentiate $\mathbb{E}[L(Y,\hat{F}_t(X))]$ with respect to time.
To see that we can differentiate under the expectation, note that for a fixed $t\in(0,t_{\mathrm{max}})$ and any $s\in [0,t]$, we have
\[
\abs[\Big]{\deriv{s}L(Y,\hat{F}_s(X))} = \abs[\big]{\mathcal{T}(X;\hat{F}_s)\pderiv{z}L(Y,\hat{F}_s(X))} \leq M \sup_{z\in K}\abs[\Big]{\pderiv{z}L(Y,z)},
\]
where $M=\sup_{t\in[0,T],x\in [0,1]^p}\mathcal{T}(x;\hat{F}_s)$ and $K$ is the image of $[0,T]\times[0,1]^p$ by $(t,x)\mapsto \hat{F}_s(x)$.
The right-hand side expression does not depend on $t$ and is integrable by \Cref{ass:Y-integrable}, therefore we can differentiate under the expectation and compute
\begin{align}
  \deriv{t}\E[L(Y,\hat{F}_t(X))] &= \E\left[\mathcal{T}(X;\hat{F}_t)\pderiv{z}L(Y,\hat{F}_t(X))\right] \notag\\
  &= -\E\left[\sum_{v\in \{0,1\}^d} \frac{\P[\pderiv{z}L(y,\hat{F}_t(x))\1_{A_v(x)}]}{\P[\pderiv[2]{z}L(y,\hat{F}_t(x))\1_{A_v(x)}]}\1_{A_v}(X)\pderiv{z}L(Y,\hat{F}_t(X))\right] \notag\\
  &= -\E\left[\sum_{v\in \{0,1\}^d} \frac{\P[\pderiv{z}L(y,\hat{F}_t(x))\1_{A_v(x)}]^2}{\P[\pderiv[2]{z}L(y,\hat{F}_t(x))\1_{A_v(x)}]}\right] \leq 0, \label{eq:time-diff-of-error}
\end{align}
where the $(A_v)$ are the leaves of a regression tree $\xi$ based on $\hat{F}_t$,
which concludes the proof.
\end{proof}

\begin{proof}[Proof of \Cref{prop:zero_residual}]
We leave the computation --- which is very similar to the one in the proof above --- to the reader; we get
\[
  \deriv{t}\,\E\!\left[\pderiv{z}L(Y,\hat{F}_t(X))\right] = - \E\left[\pderiv{z}L(Y,\hat{F}_t(X))\right],
\]
which implies the result.
\end{proof}

\subsection{Proof of Proposition \ref{prop:critical-points} and Proposition \ref{prop:weak-consistency}}

\begin{proof}[Proof of \Cref{prop:critical-points}]
First, note that (ii) clearly implies (i) because (ii) implies that leaf-values of a softmax gradient tree are always null.

(i)$\implies$(ii). If $\mathcal{T}(F)=0$, then $\E[\pderiv{z}L(Y,F(X))\mathcal{T}(X; F)] = 0$, and this expectation was computed in \eqref{eq:time-diff-of-error} to be equal to
\[
  -\E\left[\sum_{v\in \{0,1\}^d} \frac{\P[\pderiv{z}L(y,\hat{F}_t(x))\1_{A_v(x)}]^2}{\P[\pderiv[2]{z}L(y,\hat{F}_t(x))\1_{A_v(x)}]}\right].
\]
This shows that for $Q_0$-almost every splitting scheme $\xi$ and all $v\in \{0,1\}^d$, the non-negative value $\P[\pderiv{z}L(y,\hat{F}_t(x))\1_{A_v(x)}]^2$ must be zero.

Since under $Q_0$, the splitting scheme $\xi$ is completely random, i.e.\ the regions $A_v$ are obtained by successively making a series of $d$ uniform splits.
Therefore we have $\P[\pderiv{z}L(y,\hat{F}_t(x))\1_{A(x)}]=0$ for a dense subset $A\in \mathcal{A}_d$, where $\mathcal{A}_d$ is endowed with the metric $d(A,A'):=\leb(A\symdiff A')$ --- recall that $\leb$ denotes the Lebesgue measure and $\symdiff$ the symmetric difference.
A standard continuity argument shows that it must also hold for all $A\in \mathcal{A}_d$.

(ii)$\iff$(iii). Let $J\subset \{1,\dots,p\}$ with $\abs{J}\leq d$, and define $\mathcal{A}_J\subset \mathcal{A}_d$ as the sets $A$ of the form $[a,b]$ for which $(a_j, b_j)=(0,1)$ for all $j\notin J$.
Clearly the $\sigma$-field generated by $\mathcal{A}_J$ is the one that makes the map $x\in[0,1]^p\mapsto x_J$ measurable.
Therefore if $\E\left[\pderiv{z}L(Y,F(X))\1_{A}(X)\right]=0$ for all $A\in \mathcal{A}_J$, then $\E\left[\pderiv{z}L(Y,F(X)) \;\Big |\; X_J\right] = 0$ almost surely, and reciprocally.
\end{proof}

In the following we focus on the context of regression.
Let us recall that we consider $L^2=L^2([0,1]^p,\P_X)$, where $\P_X$ denotes the distribution of $X$ under $\P$, endowed with its usual scalar product $\langle\cdot,\cdot\rangle$.
Recall also that we define $L^2_d=\overline{\mathrm{span}}(\1_{A},\ A\in\mathcal{A}_d)$.

The following lemma is key to proving \Cref{prop:weak-consistency}, which shows a convergence in the weak topology of $L^2$ --- i.e.\ the coarsest topology for which for each $g\in L^2$, the map $f\mapsto \langle f,g\rangle$ is continuous.
Recall that we defined $F^*= \E[Y\mid X]\in L^2$ --- this is the so-called target function of the problem of regression.

\begin{lemma} \label{lem:weak-cont-calt}
  The map
  \begin{equation*} %\label{eq:weak-cont}
  \phi : F \in L^2 \mapsto \int \sum_{v\in \{0,1\}^d} \langle F^*-F, \1_{A_v}\rangle^2\frac{\rmd Q_F}{\rmd Q_0}(\xi)\,Q_0(\rmd \xi).
  \end{equation*}
  is weakly continuous on bounded subsets of $L^2$, and is null only on the affine space $F^*+(L^2_d)^{\perp}$.
\end{lemma}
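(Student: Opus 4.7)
My plan is to split the proof into two independent parts, corresponding to the two assertions of the lemma.

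For the weak continuity on bounded subsets, I would rely on the fact that the weak topology of $L^2$ is metrizable on every norm-bounded subset (since $L^2$ is separable), so that sequential continuity is enough. Given $F_n\to F$ weakly with $\sup_n\norm{F_n}_{L^2}\leq R$, the key observation is that in the regression setting $\P[\pderiv{z}L(y,F_n(x))\1_B(x)]=\langle F_n-F^*,\1_B\rangle_{L^2}$ for every measurable $B\subset[0,1]^p$. The scores $\Delta_{\P}(j,u;A)$ entering the softmax weights in \eqref{eq:RN2} are therefore continuous functions of finitely many such inner products (with fixed $\P_X$-measure denominators, and the $0/0=0$ convention kicking in harmlessly whenever a denominator vanishes since then the numerator does too). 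Each such inner product converges by definition of weak convergence, and continuity of the softmax then gives pointwise-in-$\xi$ convergence $\frac{\rmd Q_{F_n}}{\rmd Q_0}(\xi)\to \frac{\rmd Q_F}{\rmd Q_0}(\xi)$ (the inner integrations over $\xi^2,\ldots,\xi^K$ in \eqref{eq:RN2} being handled by a preliminary dominated convergence using that each softmax factor lies in $[0,K]$). Similarly $\langle F^*-F_n,\1_{A_v(\xi)}\rangle^2\to \langle F^*-F,\1_{A_v(\xi)}\rangle^2$ pointwise in $\xi$. A final dominated convergence with respect to $Q_0$ concludes, the uniform domination being provided by \eqref{eq:RN-bounded} and by the Cauchy--Schwarz bound $\abs{\langle F^*-F_n,\1_{A_v(\xi)}\rangle}\leq \norm{F^*}_{L^2}+R$.

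For the characterization of the zero set, I would first note that each softmax factor appearing in \eqref{eq:RN2} is strictly positive, hence $\frac{\rmd Q_F}{\rmd Q_0}(\xi)>0$ for every $\xi$. Since the summands of the integrand defining $\phi$ are non-negative, $\phi(F)=0$ is therefore equivalent to $\langle F^*-F,\1_{A_v(\xi)}\rangle=0$ for $Q_0$-almost every $\xi$ and every $v\in\{0,1\}^d$. The $(\Leftarrow)$ direction is immediate: if $F-F^*\in (L^2_d)^\perp$, then every leaf $A_v(\xi)$ lies in $\mathcal{A}_d\subset L^2_d$, so all these inner products vanish for every $\xi$. For the $(\Rightarrow)$ direction, I would invoke the same density argument used in the proof of Proposition~\ref{prop:critical-points}, implication (i)$\Rightarrow$(ii): under $Q_0$ the splits are i.i.d.\ uniform on $[\![1,p]\!]\times[0,1]$, so typical leaf families densely sweep out $\mathcal{A}_d$ in the Lebesgue symmetric-difference pseudometric, along which $A\mapsto\langle F^*-F,\1_A\rangle$ is continuous. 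This forces $\langle F^*-F,\1_A\rangle=0$ for every $A\in\mathcal{A}_d$, which is exactly $F\in F^*+(L^2_d)^\perp$.

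The hardest step is this final density-and-continuity argument, because the $2^d$ leaves of a single depth-$d$ splitting scheme cannot be made to coincide with an arbitrary hyperrectangle of $\mathcal{A}_d$ (e.g.\ an interior box $[a,b]\times[0,1]^{p-1}$ cannot appear as a single leaf when $d=1$). Fortunately, the needed approximation --- obtaining all elements of $\mathcal{A}_d$ as limits of unions of leaves and exploiting right-continuity of $A\mapsto\langle F^*-F,\1_A\rangle$ along these limits --- is precisely what is handled in the proof of Proposition~\ref{prop:critical-points}, so I would cite that step rather than reproduce the combinatorics.
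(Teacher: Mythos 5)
Your proof is correct, and the zero-set part follows the paper's own route exactly (the paper likewise defers the density-of-leaves step to the argument in the proof of Proposition~\ref{prop:critical-points}, so the subtlety you flag at the end is present there too and is resolved the way you indicate). For the weak continuity, however, you take a genuinely different path. You reduce to sequential continuity via metrizability of the weak topology on norm-bounded subsets of the separable space $L^2$, then pass to the limit pointwise in $\xi$ (inner products converge by definition of weak convergence, the softmax is continuous, and the $0/0$ convention is harmless since a vanishing denominator $\P(X\in A)$ forces a vanishing numerator) and conclude by two layers of dominated convergence, dominated by $K^{2^d-1}$ and the Cauchy--Schwarz bound $(\norm{F^*}_{L^2}+R)^2$. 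The paper instead works directly with the weak topology: it rewrites $\phi(F)=\int_{B^k}\psi\big((\langle F-F^*,g_i\rangle)_i\big)\,\mu(\rmd g)$ for a probability measure $\mu$ on powers of the unit ball $B$ and a bounded uniformly continuous $\psi$, uses tightness of $\mu$ to approximate $B$ by a finite $\epsilon$-net $G$, and builds an explicit weak neighborhood of $F$ (defined by the finitely many functionals $\langle\cdot,g_i\rangle$, $g_i\in G$) on which $\phi$ varies by at most $\omega_\psi(3\epsilon)+2\epsilon\norm{\psi}_\infty$. The paper's argument yields topological continuity without invoking metrizability and would survive in a non-separable setting; yours is shorter and more transparent once separability is granted. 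Both are valid proofs of the stated lemma.
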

\begin{proof}
The second part of the lemma is easily shown: similarly as in the proof of \Cref{prop:critical-points} above, it is clear that $\phi(F)=0$ if and only if $\langle F^*-F,\1_{A}\rangle=0$ for all $A\in \mathcal{A}_d$, and this is equivalent to $F\in F^*+(L^2_d)^{\perp}$.
  
To prove the first part, observe that the functions $\1_{A}$ are in the unit ball of $L^2$, for any Borel subset $A\subset [0,1]^p$.
Also, in the case of regression, note that for a Borel subset $A\subset [0,1]^p$, we have
\[
  \E[\textstyle \pderiv{z}L(Y,F(X))\1_{A}(X)] =\E[(Y-F(X))\1_{A}(X)] = \langle F^*-F,\1_{A} \rangle,
\]
with $F^*= \E[Y\mid X]\in L^2$; furthermore, $\P(X\in A) = \norm{\1_{A}}^2_{L^2}$.
Therefore, the scores~\eqref{eq:Delta_i-expression} used in \Cref{algo1} can be written
\begin{align*}
\Delta(j,u,A) &= \frac{\langle F-F^*, \1_{A_{0}}\rangle^2}{\norm{\1_{A_{0}}}^2_{L^2}}+\frac{\langle F-F^*, \1_{A_{1}}\rangle^2}{\norm{\1_{A_{1}}}^2_{L^2}} \\
& = \langle F-F^*, \widetilde{\1}_{A_{0}}\rangle^2+\langle F-F^*, \widetilde{\1}_{A_{1}}\rangle^2,
\end{align*}
where the regions $A_{0}$ and $A_{1}$ are the result of the $(j,u)$-split of the region $A$, and where $\widetilde{\1}_A$ denotes the normalization of $\1_A$ in $L^2$.
Expressing $\frac{\rmd Q_F}{\rmd Q_0}(\xi)$ in terms of the scores, as in~\eqref{eq:RN2} with $\mu=\P$, it is clear that the map $\phi$ is of the form
\[
  \phi(F) = \int_{B^k}\psi((\langle F-F^*,g_i\rangle)_{1\leq i\leq k})\,\mu(\rmd g_1 \dots \rmd g_k),
\]
for some $k\geq 1$, where $\mu$ is a probability measure on $B^k$ and $\psi:\mathbb{R}^k\to\mathbb{R}$ is continuous.
Let us fix a bounded domain $D$ of $L^2$ --- recall that we are interested in showing that $\phi$ is weakly continuous on bounded subsets of $L^2$ --- and consider our map $\phi$ as a function of $F\in D$.
We then have almost surely $\langle F^*-F,g_i\rangle\in [-C,C]$ for all $i$ for some constant $C>0$, so we can assume without loss of generality that $\psi$ is bounded and uniformly continuous, with a modulus of uniform continuity defined by
\[
  \omega_\psi(\epsilon) = \sup\big\{ \abs{\psi(x)-\psi(y)}\ :\ x, y \in [-C,C]^p \text{ with }\norm{x-y}_\infty \leq \epsilon \big\}.
\]

Let us now fix $\epsilon > 0$, and consider a finite set $G=\{g_1,\dots,g_m\}\subset B$ such that $\mu((G^{\epsilon})^k) > 1-\epsilon$, where
\[
  G^{\epsilon} = \bigcup_{i=1}^mB(g_i,\epsilon).
\]
Note that we can do this since $B$ is a Polish (metric and complete) space.
Now for any fixed $F\in D$, consider a weak neighborhood $V$ of $F$ such that for all $F'\in V$, for all $i\in \{1,\dots,m\}$, $\abs{\langle F-F',g_i\rangle} < \epsilon$.
Then for all $F'\in V\cap D$ and $g\in G^{\epsilon}\cap B$, we have
\[
  \abs{\langle F-F',g\rangle} \leq  3\epsilon,
\]
so that for all $F'\in V\cap D$,
\begin{align*}
  &\int_{B^k}\abs{\psi((\langle F^*-F,g_i\rangle)_{1\leq i\leq k})-\psi((\langle F^*-F',g_i\rangle)_{1\leq i\leq k})}\,\mu(\rmd g)\\
  &\quad \leq\omega_\psi(3\epsilon)\mu((G^{\epsilon})^k) + 2\norm{\psi}_\infty (1-\mu((G^\epsilon)^k)) \\
  &\quad \leq \omega_\psi(3\epsilon) + 2\epsilon \norm{\psi}_\infty,
\end{align*}
This tends to $0$ as $\epsilon\to 0$, so we have proved that $\phi$ is weakly continuous on bounded subsets of $L^2$, and the proof is complete.
\end{proof}

We state a final lemma before being able to prove \Cref{prop:weak-consistency}; its second part is an interesting observation in itself and it is convenient to prove it here, but it is the first part that will be useful for the following proof.

\begin{lemma} \label{lem:calt-F-bounded}
  In the context of regression, for all $F\in L^2$,
  \[
  \norm{\mathcal{T}(F)}_{L^2} \leq 2^dK^{2^d-1}\norm{F-F^*}_{L^2}.
  \]
  In the Adaboost setting, for all $F\in \mathbb{B}$,
  \[
  \norm{\mathcal{T}(F)}_\infty \leq 2^dK^{2^d-1}.
  \]
\end{lemma}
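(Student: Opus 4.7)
The plan is to start from the integral representation \eqref{eq:ibo3} and apply the triangle inequality twice together with the Radon--Nikodym bound \eqref{eq:RN-bounded}. Recall that
\[
  \mathcal{T}(F)(x) = \int \sum_{v\in\{0,1\}^d} \tilde r(F, A_v)\,\1_{A_v}(x)\, \frac{\rmd Q_F}{\rmd Q_0}(\xi)\, Q_0(\rmd \xi),
\]
and that $\abs{\rmd Q_F/\rmd Q_0(\xi)} \leq K^{2^d-1}$ pointwise. Pulling the sum over $v$ outside the norm by the triangle inequality already accounts for the factor $2^d$, and crudely bounding the Radon--Nikodym derivative by $K^{2^d-1}$ accounts for the factor $K^{2^d-1}$; the only remaining task is to control each leaf contribution uniformly.

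\textbf{Regression case.} Here $\pderiv{z}L(y,z)=z-y$ and $\pderiv[2]{z}L(y,z)=1$, so
\[
  \tilde r(F,A) = -\frac{\E[(F(X)-Y)\1_A(X)]}{\P_X(A)} = -\frac{\langle F-F^*,\1_A\rangle_{L^2}}{\norm{\1_A}_{L^2}^2},
\]
using that $F^*(X)=\E[Y\mid X]$ and that $\1_A(X)$ is $\sigma(X)$-measurable, with the convention $0/0=0$. Cauchy--Schwarz in $L^2$ then gives the key leaf estimate
\[
  \abs{\tilde r(F,A)}\,\norm{\1_A}_{L^2} \;\leq\; \norm{F-F^*}_{L^2}.
\]
Applying the triangle inequality twice, first across the sum over leaves and then to bring the $L^2$-norm inside the $Q_0$-integral, yields
\[
  \norm{\mathcal{T}(F)}_{L^2} \;\leq\; \sum_{v\in\{0,1\}^d}\int \abs{\tilde r(F,A_v)}\,\norm{\1_{A_v}}_{L^2}\,\frac{\rmd Q_F}{\rmd Q_0}(\xi)\,Q_0(\rmd \xi) \;\leq\; 2^d K^{2^d-1}\norm{F-F^*}_{L^2},
\]
as desired.

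\textbf{Adaboost case.} Here $\pderiv{z}L(y,z)=-y\,e^{-yz}$ and $\pderiv[2]{z}L(y,z)=e^{-yz}$, so with $Y\in\{-1,1\}$ we have $\abs{y\,e^{-yz}}=e^{-yz}$ for all $(y,z)$. This gives the uniform bound
\[
  \abs{\tilde r(F,A)} \;=\; \bigg|\frac{\E[Y\,e^{-YF(X)}\1_A(X)]}{\E[e^{-YF(X)}\1_A(X)]}\bigg| \;\leq\; 1,
\]
again using the convention $0/0=0$. The same two triangle inequalities, together with $\abs{\rmd Q_F/\rmd Q_0}\leq K^{2^d-1}$ and $\1_{A_v}(x)\leq 1$, give
\[
  \abs{\mathcal{T}(F)(x)} \;\leq\; \sum_{v\in\{0,1\}^d}\int \abs{\tilde r(F,A_v)}\,\frac{\rmd Q_F}{\rmd Q_0}(\xi)\,Q_0(\rmd \xi) \;\leq\; 2^d K^{2^d-1},
\]
uniformly in $x$, which is the claimed $L^\infty$ bound.

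There is no real obstacle here: everything reduces to two applications of the triangle inequality and a leaf-value estimate specific to the loss. The only point requiring care is the regression leaf bound, where one must recognise that $\E[(F(X)-Y)\1_A(X)]$ equals $\langle F-F^*,\1_A\rangle_{L^2}$ so that Cauchy--Schwarz cancels one power of $\norm{\1_A}_{L^2}$ against the denominator $\P_X(A)$.
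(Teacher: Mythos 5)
Your proof is correct and follows essentially the same route as the paper: the leaf estimate $\abs{\tilde r(F,A)}\,\norm{\1_A}_{L^2}\leq \norm{F-F^*}_{L^2}$ via Cauchy--Schwarz, the bound $\abs{\rmd Q_F/\rmd Q_0}\leq K^{2^d-1}$, and the factor $2^d$ from the sum over leaves. The only cosmetic difference is that you bound $\norm{\mathcal{T}(F)}_{L^2}$ directly via Minkowski's integral inequality, whereas the paper tests against an arbitrary $g$ in the unit ball and applies Cauchy--Schwarz a second time to $\langle g,\1_{A_v}\rangle$; the two are equivalent.
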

\begin{proof}
  For the first part of the lemma, note that we can write
  \[
  \mathcal{T}(F) = \int \sum_{v\in\{0,1\}^d} \frac{\langle F^*-F,\1_{A_v} \rangle}{\norm{\1_{A_v}}_{L^2}^2} \1_{A_v}(\cdot) \frac{\rmd Q_F}{\rmd Q_0}(\xi) \,Q_0(\rmd \xi).
  \]
  Since we have $\frac{\rmd Q_F}{\rmd Q_0}(\xi)\leq K^{2^d-1}$, for any $g\in L^2$ we can bound
  \begin{align*}
  \abs{\langle g, \mathcal{T}(F)\rangle} &\leq K^{2^d-1}\int \sum_{v\in\{0,1\}^d} \frac{\abs{\langle F^*-F,\1_{A_v} \rangle}}{\norm{\1_{A_v}}_{L^2}} \frac{\abs{\langle g,\1_{A_v} \rangle}}{\norm{\1_{A_v}}_{L^2}}\,Q_0(\rmd \xi)\\
  &\leq 2^dK^{2^d-1}\norm{F^*-F}_{L^2} \norm{g}_{L^2}.
  \end{align*}
  Taking the supremum of this expression for $g$ in the unit ball of $L^2$ yields the result.
  
  For the second part of the lemma, note that in the context of Adaboost, we have
  \[
  \tilde{r}(F,A)= \frac{\E[Ye^{-YF(X)}\1_{A}(X)]}{\E[e^{-YF(X)}\1_{A}(X)]}\in [-1;1] \quad \text{because }Y\in \{-1,1\}\quad \text{a.s}.
  \]
  Therefore, similarly as above we can directly bound $\norm{\mathcal{T}(F)}_{\infty} \leq 2^d K^{2^d-1}$.
\end{proof}

\begin{proof}[Proof of \Cref{prop:weak-consistency}, (i)]
By \Cref{prop:decreasing_error}, $\norm{\hat{F}_t-F^*}_{L^2}$ is decreasing in time, so we can fix $B\subset L^2$ a closed ball such that $\hat{F}_t\in B$ for all $t\geq 0$.
Since closed balls in $L^2$ are compact for the weak topology, we know that $\hat{F}_t$ has weakly convergent subsequences.

It is sufficient to show that any such subsequence must tend to $\mathcal{P}_d(F^*)$.
Let us fix $\ell$ an adherent point.
We know that $\ell$ must be in $B$ because the norm is weakly lower semicontinuous, and furthermore it is readily seen that $L^2_d$ is weakly closed, so that $\ell \in L^2_d\cap B$.

We assume by contradiction that $\ell \neq \mathcal{P}_d(F^*)$, and compute
\[
\deriv{t}\norm{\hat{F}_t-F^*}_{L^2}^2 = 2\langle \hat{F}_t-\mathcal{P}_d(F^*), \mathcal{T}(\hat{F}_t)\rangle = -\int \sum_v \langle \hat{F}_t-\mathcal{P}_d(F^*), \1_{A_v}\rangle^2\frac{\rmd Q_{\hat{F}_t}}{\rmd Q_0}(\xi)\,Q_0(\rmd \xi).
\]
By \Cref{lem:weak-cont-calt}, the map
\[
\phi:F \in L^2_d\cap B \mapsto \int \sum_v \langle F-\mathcal{P}_d(F^*), \1_{A_v}\rangle^2\frac{\rmd Q_F}{\rmd Q_0}(\xi)\,Q_0(\rmd \xi).
\]
is continuous for the weak topology, nonnegative, and null only when $F = \mathcal{P}_d(F^*)$, therefore $c:=\phi(\ell)$ is positive and there exists $V$ a weak neighborhood of $\ell$ such that for all $f\in V$, $\phi(f)>c/2$.
Up to taking a smaller neighborhood, we can assume that $V$ is of the form
\[
  V = \big\{f\in L^2_d\cap B, \, \forall i\in \{1,\dots k\}, \langle f-\ell, g_i \rangle < \epsilon \big\}
\]
for some $g_1,\dots g_k$ in the unit ball of $L^2$ and $\epsilon > 0$.
Let us now define $W \subset V$ by
\[
  W = \big\{f\in L^2_d\cap B, \, \forall i\in \{1,\dots k\}, \langle f-\ell, g_i \rangle < \tfrac{\epsilon}{2} \big\}.
\]
Consider the set of times $T_V = \{t\geq 0, \,\hat{F}_t \in V\}$, and define $T_W$ in an analogous way.
Since $V$ and $W$ are both neighborhoods of $\ell$, these sets are unbounded.
Furthermore, if $t\in T_W$, then for each $u\geq 0$, for all $i\in \{1,\dots,k\}$, we have
\[
  \abs{\langle \hat{F}_t - \ell, g_i \rangle -\langle \hat{F}_u - \ell, g_i \rangle} \leq \abs{t-u} C
\]
for $C:= \sup_{F\in B}\norm{\mathcal{T}(F)}_{L^2}$, which is finite by \Cref{lem:calt-F-bounded}.
Therefore, for all $t\in T_W$, if $\abs{t-u} \leq \frac{\epsilon}{2C}$, then $u\in T_V$, in other words $[t-\frac{\epsilon}{2C},t+\frac{\epsilon}{2C}]\subset T_V$.
Since $T_W$ is unbounded, this shows that the total time spent in $V$ is infinite.
This is absurd since we would have
\[
\infty > \lim_{t\to\infty} \norm{\hat{F}_0-F^*}_{L^2}^2-\norm{\hat{F}_t-F^*}_{L^2}^2=\int_0^{\infty}\phi(\hat{F}_t)\,\rmd t > \int_{T_V} \frac{c}{2} \, \rmd t = \infty.
\]
This conclude the proof.
\end{proof}

\begin{proof}[Proof of \Cref{prop:weak-consistency}, (ii)]
When $\beta=0$, we have, for all $F\in \mathbb{B}$,
\[
  \mathcal{T}(F) = \int \sum_{v\in \{0,1\}^d} \frac{\langle F^*-F, \1_{A_v}\rangle}{\norm{\1_{A_v}}^2_{L^2}} \1_{A_v}(\cdot)\,Q_0(\rmd \xi)
\]
and $(\hat{F}_t)_{t\geq 0}$ solves $\deriv{t}\hat{F}_t = \mathcal{T}(\hat{F}_t)$.
It is clear that this extends to a continuous map $\mathcal{T}:L^2\to L^2$, and that
\[
  \mathcal{L} : G \mapsto -\mathcal{T}(F^*+G) \;=\; \int \sum_{v\in \{0,1\}^d} \frac{\langle G, \1_{A_v}\rangle}{\norm{\1_{A_v}}^2_{L^2}} \1_{A_v}(\cdot)\,Q_0(\rmd \xi)
\]
is a bounded positive semi-definite linear operator.
Furthermore, $G_t := \hat{F}_t-F^*$ satisfies $\deriv{t}G_t = -\mathcal{L}(G_t)$, so that $G_t=e^{-t\mathcal{L}}G_0$, with $G_0=\hat{F}_0-F^*$.

In the context of regression, notice that we can reformulate the equivalence of (i) and (ii) of \Cref{prop:critical-points} by:
\[
  \mathcal{T}(F) = 0 \iff \text{for all }A\in \mathcal{A}_d, \langle F^*-F,\1_{A}\rangle = 0.
\]
In other words, we have $\ker(\mathcal{L})=(L^2_d)^{\perp}$.
Therefore if $\mathcal{P}_d^{\perp}$ denotes the orthogonal projection on $(L_d^2)^{\perp}$,
by decomposing $G_0 = \mathcal{P}_d(G_0)+\mathcal{P}_d^{\perp}(G_0)$, we get
\[
G_t = e^{-t\mathcal{L}}\mathcal{P}_d(G_0) + \mathcal{P}_d^{\perp}(G_0).
\]
with $e^{-t\mathcal{L}}\mathcal{P}_d(G_0) \to 0$ in $L^2$, since the restriction of $\mathcal{L}$ to $L^2_d$ is positive definite.
Finally, we have
\[
\hat{F}_t = G_t+F^* \longrightarrow \mathcal{P}_d^{\perp}(G_0)+F^* \; =\; -\mathcal{P}_d^{\perp}(F^*)+F^* \;=\;\mathcal{P}_d(F^*).
\]
in $L^2$, concluding the proof.
\end{proof}

\appendix

\section{Proofs related to Sections~\ref{sec:preliminaries-w-norm} and~\ref{sec:prelim-glivenko-cantelli}}\label{sec:proofs-preliminaries}

\subsection{Proof of Lemma~\ref{lem:bound-norm-Wq} and Proposition~\ref{prop:W-to-uniform-norm}}

\begin{proof}[Proof of \Cref{lem:bound-norm-Wq}]
  First consider a measure $\nu$ of the form
  \[
  \nu(\rmd z) = \int\psi(z,\xi)\,\pi_\xi(\rmd z)\,Q_0(\rmd \xi),
  \]
  and let us show that for all $q\in [1,\infty]$,  
  \begin{equation} \label{eq:radon-nykodym-lp}
    \norm[\big]{\frac{\rmd\nu}{\rmd\pi_0}}_{L^q(\pi_0)} \;\leq\; (2^{p+d})^{1-\frac{1}{q}}\norm[\big]{\psi(z,\xi)}_{L^q(\pi_\xi(\rmd z)Q_0(\rmd \xi))}.
  \end{equation}
  It suffices to show the result for $q <\infty$, since the case $q=\infty$ is then obtained by taking the limit $q\to\infty$. Therefore, we fix $q\in [1,\infty)$ and let $q'\in (1,\infty]$ be such that $\frac{1}{q}+\frac{1}{q'}=1$.
  We use the duality between $L^q(\pi_0)$ and $L^{q'}(\pi_0)$, more precisely the fact that
  \[
  \norm[\big]{\frac{\rmd\nu}{\rmd\pi_0}}_{L^q(\pi_0)} = \sup\big\{ \int h \,\rmd\nu\ :\ h\in L^{q'}(\pi_0),\,\norm{h}_{L^{q'}(\pi_0)}= 1\big\}.
  \]
  For $h\in L^{q'}(\pi_0)$ such that $\norm{h}_{L^{q'}(\pi_0)}=1$, we compute
  \begin{align*}
    \int h\,\rmd \nu &= \int\int h(z) \psi(z,\xi)\,\pi_\xi(\rmd z)\,Q_0(\rmd \xi)\\
    &\leq \int \norm{h}_{L^{q'}(\pi_\xi)}\norm{\psi(\cdot,\xi)}_{L^q(\pi_\xi)}\,Q_0(\rmd \xi)\\
    &\leq C^{1/q'}\norm{h}_{L^{q'}(\pi_0)} \left(\int\norm{\psi(\cdot,\xi)}_{L^q(\pi_\xi)}^q\,Q_0(\rmd \xi)\right)^{\frac{1}{q}} \\
    &= C^{1/q'}\left(\int\int\abs{\psi(z,\xi)}^q \,\pi_\xi(\rmd z)\,Q_0(\rmd \xi)\right)^{\frac{1}{q}},
  \end{align*}
  where $C$ is the total mass of $\tilde{\pi}_0(\rmd z)=\int\pi_\xi(\rmd z)\,Q_0(\rmd \xi)$.
  For both inequalities above, we used Hölder's inequality --- applied to $\pi_\xi$, then to $Q_0$. By definition of $\pi_\xi$, for all splitting scheme $\xi$ of depth $d$, $\pi_\xi$ has at most total mass $2^{p+d}$, therefore $C\leq 2^{p+d}$.  This proves \eqref{eq:radon-nykodym-lp}.
  
  To prove the lemma, note that for a splitting scheme $\xi$ with associated partition $(A_v)_{v\in\{0,1\}^d}$, there exist for each $v$ a point measure $\nu_{A_v}$ supported by the corner points $c(A_v)$ such that $\1_{A_v}(z)=\nu_{A_v}([0,z])$ for all $z\in [0,1]^p$ \citep[see][Proposition~3.3]{DD21}. By definition of $\pi_\xi$, $\nu_{A_v}$ is absolutely continuous with respect to $\pi_\xi$, and $\abs{\frac{\rmd\nu_{A_v}}{\rmd \pi_\xi}(z)}\leq 1$ for all $z$ in its support. Then, if $T$ is of the form
  \[
  T(z) = \int \sum_{v\in\{0,1\}^d}\psi(v,\xi)\1_{A_v}(z)\,Q_0(\rmd \xi),
  \]
 the measure $\nu_T$ defined by
  \[
  \nu_T(\rmd z) = \int \sum_{v\in\{0,1\}^d}\psi(v,\xi)\frac{\rmd\nu_{A_v}}{\rmd \pi_\xi}(z)\,\pi_\xi(\rmd z)Q_0(\rmd \xi)
  \]
  is such that
  \[
  T(z)=\nu_T([0,z])=\int_{[0,z]} \frac{\rmd \nu_T}{\rmd\pi_0}(u)\pi_0(\rmd u). 
  \]
  Therefore, we can bound $\norm{T}_{\mathbb{W}^q} = \norm[\big]{\frac{\rmd \nu_T}{\rmd\pi_0}}_{L^q(\pi_0)}$ by
  \begin{align*}
    \norm{T}_{\mathbb{W}^q}  &\leq (2^{p+d})^{1/q'}\left(\int \abs[\Big]{\sum_{v\in\{0,1\}^d}\psi(v,\xi)\frac{\rmd\nu_{A_v}}{\rmd \pi_\xi}(z)}^q\,\pi_\xi(\rmd z)Q_0(\rmd \xi)\right)^{\frac{1}{q}}\\
    &\leq (2^{p+d})^{1/q'}\left(\int \Big(\sum_{v\in\{0,1\}^d}\abs{\psi(v,\xi)}\Big)^q\,\pi_\xi(\rmd z)Q_0(\rmd \xi)\right)^{\frac{1}{q}}\\
    &\leq (2^{p+d})^{1/q'}(2^{p+d})^{1/q}\left(\int \Big(\sum_{v\in\{0,1\}^d}\abs{\psi(v,\xi)}\Big)^q\,Q_0(\rmd \xi)\right)^{\frac{1}{q}}\\
    &\leq 2^{p+d}\sum_{v\in\{0,1\}^d}\norm{\psi(v,\cdot)}_{L^q(Q_0)},
  \end{align*}
  which concludes the proof.
\end{proof}

For the proof of \Cref{prop:W-to-uniform-norm}, the following technical lemma will be useful.
\begin{lemma} \label{lem:pi0-bound}
  Let $a<b\in [0,1]$, and consider $S_{a,b} = \{x\in [0,1]^p \ :\ a < x_1\leq b\}$.
  Then there is a constant $C'$ that depends on $d$ and $p$ such that
  \begin{equation} \label{eq:pi0-slice}
    \pi_0(S_{a,b}) \;\leq\; C' (b-a)\big(1-\log(b-a)\big)^{d-1}.
  \end{equation}
  As a consequence, if $x,y\in [0,1]^p$, we have
  \begin{equation} \label{eq:pi0-bound}
    \pi_0([0,x]\symdiff[0,y]) \;\leq\; C \norm{x-y}_{\infty}\big(1-\log \norm{x-y}_{\infty}\big)^{d-1},
  \end{equation}
  where $C = pC'$ and $\symdiff$ denotes the symmetric difference.
\end{lemma}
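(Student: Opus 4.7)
The plan is to prove the slice estimate \eqref{eq:pi0-slice} directly and then deduce \eqref{eq:pi0-bound} from it by a union bound across the $p$ coordinate directions. By the permutation invariance of the completely random tree distribution $Q_0$, the measure $\pi_0$ is invariant under coordinate permutations, so it is enough to treat the case of a slice along coordinate~$1$. Since $\pi_0$ is a normalization of $\tilde\pi_0 = \int \pi_\xi\,Q_0(\rmd\xi)$, I will bound $\tilde\pi_0(S_{a,b})$, which equals the expected number of corners of leaves $A_v$ (restricted to $[0,1)^p$) whose first coordinate lies in $(a,b]$. Because $a\geq 0$, the first coordinate of such a corner cannot equal $0$ and must therefore coincide with the threshold of some ancestor split on coordinate~$1$; this motivates decomposing $\tilde\pi_0(S_{a,b})$ as a sum over internal nodes $w\in\mathscr{T}_{d-1}$ of the contribution produced at $w$.

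For a fixed internal node $w$ at depth $k$, I will condition on the event that the split at $w$ is on coordinate~$1$ (probability $1/p$). The threshold $t_w$ is then uniform on the first-coordinate projection $[a_1^w,b_1^w]$ of $A_w$, so that $\P(t_w\in(a,b])\leq \min(1,(b-a)/W_w)$ with $W_w=b_1^w-a_1^w$, and the number of descendant corners of $w$ having first coordinate equal to $t_w$ is bounded by a constant $C_{p,d}$ depending only on $p$ and $d$. The key technical step is to control $\E[\min(1,(b-a)/W_w)]$: conditional on $N_1(w)=m$ (the number of ancestor splits on coordinate~$1$), $W_w$ has the law of a product of $m$ i.i.d.\ $U(0,1)$ variables, hence $-\log W_w\sim \mathrm{Gamma}(m,1)$. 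Splitting according to whether $W_w>b-a$ (integrating $W_w^{-1}$ against the Gamma density) or $W_w\leq b-a$ (using a Gamma lower-tail bound) yields an estimate of the form $C_m(b-a)(1-\log(b-a))^m$, uniformly in $m\leq d-1$. Summing the $O(2^d)$ contributions and normalizing proves \eqref{eq:pi0-slice}.

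For \eqref{eq:pi0-bound}, any $z\in[0,x]\symdiff[0,y]$ must satisfy $z_j\in(\min(x_j,y_j),\max(x_j,y_j)]$ for some coordinate $j$, so
\[
[0,x]\symdiff[0,y] \;\subset\; \bigcup_{j=1}^p S^{(j)}, \qquad S^{(j)} := \{z\in [0,1]^p:\min(x_j,y_j)<z_j\leq \max(x_j,y_j)\}.
\]
Each slice $S^{(j)}$ has width $\abs{x_j-y_j}\leq \norm{x-y}_\infty$ and can be enlarged to a coordinate-$j$ slice of width exactly $\norm{x-y}_\infty$ (translating if necessary so as to remain in $[0,1]$). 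Applying the slice bound along coordinate~$j$ by permutation invariance yields $\pi_0(S^{(j)})\leq C'\norm{x-y}_\infty(1-\log\norm{x-y}_\infty)^{d-1}$, and a union bound over $j=1,\dots,p$ delivers \eqref{eq:pi0-bound} with $C=pC'$.

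The main obstacle will be the Gamma computation controlling $\E[\min(1,(b-a)/W_w)]$: the two regimes $W_w>b-a$ and $W_w\leq b-a$ each call for a slightly different estimate---one integrating $W_w^{-1}$ against a Gamma density, the other a left-tail bound---yet both produce contributions at the same matching order $(b-a)(1-\log(b-a))^m$ that combine exactly into the stated bound. The rest of the argument is careful but routine bookkeeping on the binary tree structure, tracking the identification of corners with ancestor thresholds on the appropriate coordinate.
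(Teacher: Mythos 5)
Your proof is correct, but it takes a genuinely different route from the paper's. You decompose the expected atom count $\tilde\pi_0(S_{a,b})$ over \emph{internal} nodes $w$, identifying each corner in the slice with the split threshold of the ancestor node that created it, and then you compute $\E[\min(1,(b-a)/W_w)]$ exactly by conditioning on the number $m$ of ancestor splits on coordinate $1$, so that $-\log W_w\sim\Gamma(m,1)$; the two regimes indeed give $\epsilon\,\E[W_w^{-1}\1_{W_w>\epsilon}]=\epsilon\,(-\log\epsilon)^m/m!$ and $\P(W_w\leq\epsilon)=\epsilon\sum_{k<m}(-\log\epsilon)^k/k!$, both of order $\epsilon(1-\log\epsilon)^{d-1}$ for $m\leq d-1$. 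The paper instead decomposes over \emph{leaves} $v$, bounds the number of atoms along the branch to $v$ by $d2^p$ times the probability that at least one atom appears, and controls that probability through a backwards induction comparing the generic branch to the worst-case all-left branch, $Q_v^k(a,b,r,s)\leq Q_0^k(0,b-a,0,s-r)$, before invoking the same Gamma tail. Your version replaces that induction by a direct conditional computation and is arguably more transparent (it makes explicit \emph{which} split produces each atom, and where the $(1-\log(b-a))^{d-1}$ factor comes from), at the cost of the slightly more delicate two-regime expectation estimate; the paper's comparison argument avoids computing $\E[W_w^{-1}\1_{W_w>\epsilon}]$ but hides the mechanism inside the induction. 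The deduction of \eqref{eq:pi0-bound} from \eqref{eq:pi0-slice} via a union of coordinate slices widened to width $\norm{x-y}_\infty$ is the same in both proofs. One small point worth making explicit in your write-up: the passage from $\tilde\pi_0$ to $\pi_0$ requires a lower bound on the total mass $\tilde\pi_0([0,1]^p)$, which is immediate since the origin is always a retained corner.
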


\begin{proof}
  Consider $\xi$ a splitting scheme of depth $d$ drawn according to $Q_0$, and let us fix $v\in \{0,1\}^d$ a leaf of the discrete binary tree.
  This leaf $v$ corresponds to a unique chain of random rectangular sets $A_v^0 = [0,1]^p \supset A_v^1 \supset \dots \supset A_v^d = A_v(\xi)$ that correspond to the subsequent splits along the branch ending in $v$.
  Let us define the following quantities:
  \begin{itemize}
    \item Let $N_v$ be the number of atoms of $\pi_\xi$ in $S_{a,b}$ that are caused by the splits along the branch $(A_v^1, \dots, A_v^d)$.
    \item For $r \leq a \leq b\leq s$ and $0\leq k\leq d-1$, let $Q_v^k(a,b,r,s)$ be the conditional probability that $N_v$ is positive, given $\Pi_1(A_v^k=[r,s\rangle$, where $\Pi_1:\mathbb{R}^p\to\mathbb{R}$ denotes the projection on the first coordinate and  $[r,s\rangle=[r,s)$ if $s < 1$ and $[r,1]$ if $s=1$.
  \end{itemize}
  We want to bound $\E[N_v]$ from above, and since the number of atoms caused by $d$ split is at most $d2^{p}$, we have
  \[
  \E[N_v] \leq d2^{p} Q_v^0(a,b,0,1).
  \]
  Now we claim that for all $r \leq a \leq b \leq s$ and $0\leq k\leq d-1$, we have
  \begin{equation} \label{eqpr:bound-Q}
    Q_v^k(a,b,r,s) \leq Q_0^k(0,b-a,0,s-r).
  \end{equation}
  The right-hand side --- note that we have $v=0$ there --- will be easily bounded further in the proof.
  We prove this by induction on the decreasing value of $k$.
  First for $k=d-1$, note that
  \[
  Q_v^{d-1}(a,b,r,s) = \frac{1}{p}\cdot\frac{b-a}{s-r}.
  \]
To see this, note that the conditioning $\Pi_1(A^{d-1}_v)=[r,s\rangle$ implies that no splits up to stage $d-1$ may have caused atoms in $S_{a,b}$, so the only possibility for an atom is for the last split to be on the first coordinate, and with splitting value between $a$ and $b$.
  This expression clearly shows that \eqref{eqpr:bound-Q} is satisfied --- actually with an equality --- for $k=d-1$.
  Let us now proceed with the induction: for $0\leq k<d-1$, we compute
  \begin{align} \label{eqpr:Q-computation}
    Q_v^k(a,b,r,s) &=\big(1-\frac{1}{p}\big)Q_v^{k+1}(a,b,r,s)+\frac{1}{p}\cdot \frac{b-a}{r-s}\\
    & \hspace{-15mm} + \frac{1}{p}\cdot\frac{1}{s-r}\bigg(\1_{v_{k+1}=1}\int_r^aQ_v^{k+1}(a,b,r',s)\,\rmd r' +\1_{v_{k+1}=0}\int_b^sQ_v^{k+1}(a,b,r,s')\,\rmd s'\bigg). \nonumber
  \end{align}
  The key to derive this is to recall that for $v_{k+1}=1$, if the split along the first coordinate arrives at $r'\geq b$, then $\Pi_1(A_v^{k+1})=[r',s\rangle$ and conditional on that, the probability that $N_v$ is positive is null.
  Now we can use the induction hypothesis: the first integral in the display above can be bounded by
  \begin{align*}
    \int_r^aQ_v^{k+1}(a,b,r',s)\,\rmd r' &\leq \int_r^aQ_v^{k+1}(0,b-a,0,s-r')\,\rmd r'\\
    & = \int_{s-a}^{s-r}Q_0^{k+1}(0,b-a,0,r')\,\rmd r'\\
    & \leq \int_{b-a}^{s-r}Q_0^{k+1}(0,b-a,0,r')\,\rmd r'.
  \end{align*}
  The second integral can be bounded by the same term using the same technique, so regardless of the value of $v_{k+1}$, we get
  \begin{align*}
    Q_v^k(a,b,r,s) &\leq \big(1-\frac{1}{p}\big)Q_0^{k+1}(0,b-a,0,s-r)+\frac{1}{p}\cdot \frac{b-a}{r-s}\\
    & \quad + \frac{1}{p}\cdot\frac{1}{s-r}\int_{b-a}^{s-r}Q_0^{k+1}(0,b-a,0,s')\,\rmd s'\\
    & = Q_0^k(0,b-a,0,s-r).
  \end{align*}
  To get the equality, we simply applied \eqref{eqpr:Q-computation} to $Q_0^k(0,b-a,0,s-r)$.
  So the induction is proven and \eqref{eqpr:bound-Q} follows. In particular,
  \[
  \E[N_v] \leq d2^{p} Q_v^0(a,b,0,1) \leq d2^{p} Q_0^0(0,b-a,0,1).
  \]
  This last quantity is finally bounded from above: in the worst case scenario, we split only the first coordinate, so after $k$ split $A_0^k$ is of the form $[0,U_1\cdots U_k)$, where the $U_i$ are i.i.d.\ uniform on $[0,1]$.
  Therefore, we have 
  \begin{align*}
    Q_0^0(0,b-a,0,1) &\leq \P(U_1\cdots U_d \leq b-a)\\
    & = \P(\Gamma_d \geq -\log (b-a)) \\
    & = (b-a)\sum_{k=0}^{d-1}\frac{(-\log(b-a))^{k}}{k!},
  \end{align*}
  where $\Gamma_d$ follows a $\Gamma(d,1)$ distribution.
  It follows  that there is a constant $C''$ that depends only on $d$ such that $Q_0^0(0,b-a,0,1) \leq C''(b-a)(-\log(b-a))^{d-1}$.
  \Cref{eq:pi0-slice} now follows  from
  \[
  \pi_0(S_{a,b}) \leq \sum_{v}\E[N_v] \leq d2^{p+d} Q_0^0(0,b-a,0,1).
  \]
  To show \eqref{eq:pi0-bound}, consider $x,y\in [0,1]^p$ and fix some $a_i,b_i\in [0,1]$ for each $i\in [\![1,p]\!]$, such that
  \[
  a_i \leq x_i\wedge y_i\leq x_i\vee y_i\leq b_i = a_i+\norm{x-y}_{\infty}.
  \]
  Since $[0,x]\symdiff[0,y] \subset \bigcup_{i}S_{a_i,b_i}$, we apply \eqref{eq:pi0-slice} to each of the $S_{a_i,b_i}$, which completes the proof.
\end{proof}

We can now prove \Cref{prop:W-to-uniform-norm}.

\begin{proof}[Proof of \Cref{prop:W-to-uniform-norm}]
  For any $x,y\in [0,1]^p$, using Hölder's inequality, we have
  \begin{align*}
  \abs{F(x)-F(y)} &= \abs[\bigg]{\int(\1_{[0,x]}(z)-\1_{[0,y]}(z))\frac{\rmd\nu_F}{\rmd\pi_0}\,\rmd \pi_0}\\
  &\leq \big( \pi_0([0,x]\symdiff[0,y])\big)^{\frac{1}{q'}} \bigg(\int\abs[\Big]{\frac{\rmd\nu_F}{\rmd\pi_0}}^q\,\rmd \pi_0\bigg)^{\frac{1}{q}},\\
  &\leq \big( \pi_0([0,x]\symdiff[0,y])\big)^{\frac{1}{q'}} \norm{F}_{\mathbb{W}^{(q)}}.
  \end{align*}
The bound for $\pi_0([0,x]\symdiff[0,y])$ is obtained using \Cref{lem:pi0-bound}.
\end{proof}

\subsection{Proof of Proposition~\ref{prop:GC}}

The following lemma will be useful for the proof of \Cref{prop:GC}.
Before stating and proving it, let us define the notions of brackets and envelope functions, as they are used in e.g.\ \citet[Section 2.4]{vW96}.

Consider $\mathcal{X}$ a separable complete metric space endowed with its Borel $\sigma$-field, and let $\mathbb{P}$ denote a probability measure on $\mathcal{X}$.
For any $\epsilon>0$, an $\epsilon$-bracketing of a family $\mathscr{F}$ of measurable functions $f:X\to \mathbb{R}$, is a collection of pairs $(l_i,u_i)_{i\in I}$ of $\mathbb{P}$-integrable functions $X\to\mathbb{R}$ satisfying
\[
\forall i\in I, \int_{\mathcal{X}} (u_i-l_i)\,\rmd \mathbb{P} \leq \epsilon\quad \text{ and }\quad \bigcup_{i\in I} \{f\in \mathscr{F}\ :\ l_i\leq f \leq u_i\} = \mathscr{F}.
\]
If for all $\epsilon>0$, such an $\epsilon$-bracketing can be found with a finite index set $I$, we say that $\mathscr{F}$ has finite \emph{bracketing numbers} for $\mathbb{P}$, and we write
\[
  N_\epsilon(\mathscr{F}) = \inf \big\{\abs{I}\ :\ (l_i,u_i)_{i\in I}\; \text{is an }\epsilon\text{-bracketing of }\mathscr{F}\big\}.
\]
A measurable function $F:X\to \mathbb{R}$ is called an envelope function of $\mathscr{F}$ if
\[
  \sup_{f\in \mathscr{F}} \abs{f} \leq F \qquad \mathbb{P}\text{-a.s.}
\]
Finally, recall that $\mathcal{A}$ denotes the class of hyperrectangles, of the form $[a,b]\subset [0,1]^p$.

\begin{lemma} \label{lem:product-GC}
  If $\mathscr{G}$ is a class of measurable functions $g:[0,1]^p\times \mathbb{R} \to \mathbb{R}$ with finite bracketing numbers for $\P$ and with a $\P$-integrable envelope function $G$, then the class of functions
  \[
    \mathscr{G}\cdot \mathcal{A} := \{(x,y)\mapsto g(x,y)\1_A(x)\ :\ g\in\mathscr{G},A\in\mathcal{A}\}
  \]
  has finite bracketing numbers and therefore is $\P$-Glivenko--Cantelli.
  Furthermore, for a fixed $\epsilon>0$, if $\delta(\epsilon)>0$ denotes a value such that for each Borel set $B\subset [0,1]^p$,
  \[
    \P(X\in B) \leq \delta(\epsilon)\quad\mbox{implies}\quad \P[G(X,Y)\1_{B}(X)] \leq \epsilon/3,
  \]
  then $N_\epsilon(\mathscr{G}\cdot \mathcal{A}) = O\big(N_{\epsilon/3}(\mathscr{G}) \delta(\epsilon)^{-2p}\big)$.
\end{lemma}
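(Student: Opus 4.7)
The plan is to build an $\epsilon$-bracketing of $\mathscr{G}\cdot\mathcal{A}$ by combining an $\epsilon/3$-bracketing of $\mathscr{G}$ with a finite inner-outer approximation of the class of hyperrectangles $\mathcal{A}$, and then to invoke a standard bracketing criterion for the Glivenko-Cantelli property, see for instance \citet[Theorem~19.4]{vW96}.

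For the rectangle approximation, for each coordinate $j\in\{1,\dots,p\}$ I construct a grid $0=t^j_0<t^j_1<\cdots<t^j_{M_j}=1$ of size $M_j=O(p/\delta(\epsilon))$, chosen via quantiles of the marginal distribution of $X_j$ (with extra points added at each atom of $X_j$ of mass $\geq \delta(\epsilon)/(2p)$), so that each slice $\{t^j_{i-1}<x_j\leq t^j_i\}$ has $\P_X$-mass at most $\delta(\epsilon)/(2p)$. For any $A=[a,b]\in\mathcal{A}$ I then define $A_-$ (resp.\ $A_+$) as the largest (resp.\ smallest) hyperrectangle with endpoints on the grids that is contained in (resp.\ contains) $A$. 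There are at most $\prod_j M_j^2 = O(\delta(\epsilon)^{-2p})$ such pairs $(A_-,A_+)$. Moreover $A_+\setminus A_-$ is contained in a union of at most $2p$ coordinate slices each of $\P_X$-mass at most $\delta(\epsilon)/(2p)$, so $\P_X(A_+\setminus A_-)\leq\delta(\epsilon)$, and therefore $\P[G(X,Y)\,\1_{A_+\setminus A_-}(X)]\leq \epsilon/3$ by definition of $\delta(\epsilon)$.

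For the combination of brackets, I start from an $(\epsilon/3)$-bracketing of $\mathscr{G}$ and assume (without loss of generality, by truncation at $\pm G$) that each bracket $(g_l,g_u)$ satisfies $|g_l|,|g_u|\leq G$. To each such bracket and each pair $(A_-,A_+)$ I associate the functions
\[
l = g_l^+\,\1_{A_-} - g_l^-\,\1_{A_+},\qquad u = g_u^+\,\1_{A_+} - g_u^-\,\1_{A_-},
\]
where $g^\pm=\max(\pm g,0)$. Using $g_l^+\leq g^+\leq g_u^+$ and $g_u^-\leq g^-\leq g_l^-$ together with the inclusions $A_-\subset A\subset A_+$, one verifies $l\leq g\,\1_A\leq u$ for every $g\in\mathscr{G}$ in the bracket $[g_l,g_u]$ and every $A\in\mathcal{A}$ approximated by $(A_-,A_+)$. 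A direct computation gives
\[
u-l = (g_u^+ + g_l^-)\,\1_{A_+\setminus A_-} + (g_u-g_l)\,\1_{A_-}\ \leq\ 2G\,\1_{A_+\setminus A_-} + (g_u-g_l),
\]
whose $\P$-integral is at most $2\epsilon/3+\epsilon/3 = \epsilon$. This yields an $\epsilon$-bracketing of $\mathscr{G}\cdot\mathcal{A}$ of cardinality $N_{\epsilon/3}(\mathscr{G})\cdot O(\delta(\epsilon)^{-2p})$, giving the quantitative bound and, since the construction is valid for every $\epsilon>0$, the Glivenko-Cantelli property via \citet[Theorem~19.4]{vW96}.

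The main subtlety is the sign handling in the brackets: when $g$ changes sign, the naive choice $(g_l\1_A,g_u\1_A)$ is not a bracket of $g\1_A$, so the positive/negative part decomposition above is needed in order to produce a bracket whose $L^1(\P)$-width can be controlled uniformly over $A\in\mathcal{A}$. A secondary technicality is the presence of atoms of $\P_X$: each atom of mass $\geq \delta(\epsilon)/(2p)$ is inserted among the grid points and the approximations $A_\pm$ are chosen, by snapping endpoints towards or away from the atom depending on whether it lies inside or outside $A$, so that any given atom always belongs either to both $A_-$ and $A_+$ or to neither, ensuring it never falls in $A_+\setminus A_-$.
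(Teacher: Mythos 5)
Your proposal is correct and follows essentially the same route as the paper: a quantile grid of size $O(\delta(\epsilon)^{-2p})$ yielding inner/outer rectangle approximations $A_-\subset A\subset A_+$ with $\P[G(X,Y)\1_{A_+\setminus A_-}(X)]\le\epsilon/3$, combined with an $\epsilon/3$-bracketing of $\mathscr{G}$ and the bracketing Glivenko--Cantelli theorem (Theorem~2.4.1 of van der Vaart and Wellner, rather than Theorem~19.4). The only cosmetic difference is the form of the combined bracket: where you decompose $g$ into positive and negative parts, the paper uses $\underline{g}\,\1_{\underline{A}}-G\,\1_{\overline{A}\setminus\underline{A}}\;\le\; g\1_A\;\le\;\overline{g}\,\1_{\underline{A}}+G\,\1_{\overline{A}\setminus\underline{A}}$, which handles the sign issue directly via the envelope on the annulus and gives the same $L^1(\P)$-width $\epsilon/3+2\epsilon/3=\epsilon$.
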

\begin{proof}
  It is classical that the class of indicator functions of rectangles has finite bracketing numbers for any probability measure on $[0,1]^p$, but we nevertheless recall the argument in order to fix some notation.
  For all $\epsilon>0$, for each coordinate $j\in[\![1,p]\!]$, we can define a sequence $0=a^j_0 < a^j_1 < \dots < a^j_{k_j}$ with $k_j \leq 2p/\epsilon+1$ such that
  \[
  \mathbb{P}(X^i \in (a^j_l, a^j_{l+1})) \leq \frac{\epsilon}{2p}.
  \]
  Now consider the subset $\mathcal{A}_\epsilon\subset\mathcal{A}$ of rectangles $A=[a,b]$ such that for each $j$, the $j$th coordinate of $a$ and $b$ is among the $(a^j_l,0\leq l \leq k_j)$.
  Note that the cardinality of $\mathcal{A}_\epsilon$ is no greater than $\prod_j \frac{k_j(k_j+1)}{2} = O(\epsilon^{-2p})$.
  For each rectangle $A\in \mathcal{A}$, let us define
  \[
    \overline{A} = \bigcap\big\{A'\in \mathcal{A}_\epsilon\ :\ A\subset A' \big\},
  \]
  and define $\underline{A}$ as the largest rectangle of $\mathcal{A}_\epsilon$ included in $\overline{A}$ whose boundary is disjoint from that of $\overline{A}$, with the convention $\underline{A}=\emptyset$ if there are no such rectangles in $\mathcal{A}_\epsilon$.
  Then it is clear that
  \[
  \underline{A}\subset A \subset \overline{A}\quad\text{and}\quad \P(X\in \overline{A}\setminus\underline{A})\leq \epsilon.
  \]
  In the rest of the proof, with a slight abuse, we call the pair $(\underline{A},\overline{A})$ an $\epsilon$-bracketing of~$A$.
  
  For all $\epsilon>0$, let us choose $\mathscr{G}_\epsilon$ a finite $\epsilon$-bracketing of $\mathscr{G}$.
  We fix $\epsilon>0$ and, using the fact that $G$ is $\P$-integrable, we define $\delta = \delta(\epsilon)$ as in the statement of the lemma.
  Consider $g\in \mathscr{G}$ and $A\in \mathcal{A}$.
  Let $(\underline{g},\overline{g})\in \mathscr{G}_{\epsilon/3}$ be such that $\underline{g}\leq g\leq \overline{g}$, and $(\underline{A},\overline{A})$ be a $\delta$-bracketing of $A$.
  Then we have
  \[
    \underline{g}(x,y)\1_{\underline{A}}(x) - G(x,y)\1_{\overline{A}\setminus\underline{A}}(x) \leq g(x,y)\1_A(x) \leq \overline{g}(x,y)\1_{\underline{A}}(x) + G(x,y)\1_{\overline{A}\setminus\underline{A}}(x),
  \]
  and
  \begin{align*}
  &\P\Big[\overline{g}(X,Y)\1_{\underline{A}}(X) + G(X,Y)\1_{\overline{A}\setminus\underline{A}}(X) - \big(\underline{g}(X,Y)\1_{\underline{A}}(X) - G(X,Y)\1_{\overline{A}\setminus\underline{A}}(X)\big)\Big]\\
  &= \P\Big[\big(\overline{g}(X,Y)-\underline{g}(X,Y) \big)\1_{\underline{A}}(X) + 2 G(X,Y)\1_{\overline{A}\setminus\underline{A}}(X)\Big]\\
  & \leq \frac{\epsilon}{3} + 2\frac{\epsilon}{3} \;=\; \epsilon.
  \end{align*}
  Therefore, we have found a class of functions that bracket $\mathscr{G}\cdot\mathcal{A}$ with an $L^1(\P)$-precision $\epsilon$, and there are at most $\abs{\mathscr{G}_{\epsilon/3}}\abs{\mathcal{A}_{\delta}}$ such functions.
  
  It remains to argue that $\mathscr{G}\cdot \mathcal{A}$ is $\P$-Glivenko--Cantelli: this is easily deduced from the fact that it has finite bracketing numbers --- see \citet[Theorem 2.4.1]{vW96}.
\end{proof}

\begin{proof}[Proof of \Cref{prop:GC}.]  
  We aim to apply \Cref{lem:product-GC} twice.
  Indeed, note that with the \namecref{lem:product-GC}'s notation, the sets $\mathcal{F}_1$ and $\mathcal{F}_2$ are of the form
  \[
    \mathcal{F}_i = \mathcal{F}'_i\cdot \mathcal{A},
  \]
  with
  \begin{gather*}
    \mathcal{F}'_1 = \left\{(x,y)\mapsto \pderiv{z}L(y,F(x))\ :\ F\in B
    \right\}\\
    \mathcal{F}'_2=\left\{(x,y)\mapsto \pderiv[2]{z}L(y,F(x))\ :\ F\in B\right\}.
  \end{gather*}
  It then suffices to show that the bracketing numbers of $\mathcal{F}'_1$ and $\mathcal{F}'_2$ are finite, and that these classes of functions have a $\P$-integrable envelope.
  Since the proof is the same for the two classes of functions, we show this only for $\mathcal{F}'_1$.
  
  Let us define $M=\sup_{F\in B}\norm{F}_{\mathbb{W}^q}$, and $q'=q/(q-1)$.
  Note that for all $F\in B$, we have $\norm{F}_\infty \leq \norm{F}_{\mathbb{W}^q} \leq M$, and by \Cref{prop:W-to-uniform-norm} all $F\in B$ have a common modulus of continuity
  \begin{equation}\label{eqproof:bound-modulus-continuity}
  \omega(\delta) = \sup_{F\in B}\sup_{\substack{x,y\in [0,1]^p\\\norm{x-y}_\infty \leq \delta}}\abs{F(x)-F(y)} \leq \left(C\delta(1-\log \delta)^{d-1}\right)^{\frac{1}{q'}}M,
  \end{equation}
  where $C$ is a constant that depends only on $d$ and $p$.
  Note that since $\sup_{F\in B}\norm{F}_{\infty}\leq M$, then $G(x,y) := \sup_{z\in [-M,M]}\abs[\big]{\pderiv{z}L(y,z)}$ is a $\P$-integrable (by \Cref{ass:Y-integrable}) envelope function for $\mathcal{F}'_1$.
  Let $m=m(\epsilon)>0$ be such that 
  \[
    \mathbb{E}\left[G(X,Y)\1_{\abs{Y}\geq m}\right] \leq \frac{\epsilon}{2}.
  \]
  Since by \Cref{ass:L-regular-convex} $\pderiv{z}{L}(y,z)$ and $\pderiv[2]{z}{L}(y,z)$ are locally Lipschitz, the functions in $\mathcal{F}'_1$ are uniformly bounded on $[0,1]^p\times[-m,m]$ and, on this compact space, have a common modulus of continuity satisfying the same bound as in \eqref{eqproof:bound-modulus-continuity} up to a multiplicative constant.
  It is classical \citep[see][Theorem 2.7.1]{vW96} that for all $\epsilon > 0$, there exists a finite \emph{uniform} $\epsilon$-bracketing of $\mathcal{F}'_1$ on $[0,1]^p\times[-m,m]$, i.e.\ there exists a finite class of functions $\mathcal{F}'_1(\epsilon)$ such that for each $g\in \mathcal{F}'_1$, there exist $\underline{g},\overline{g}\in \mathcal{F}'_1(\epsilon)$ satisfying
  \[
    \underline{g}(x,y) \leq g(x,y) \leq \overline{g}(x,y) \leq \underline{g}(x,y)+\epsilon, \qquad (x,y)\in [0,1]^p\times[-m,m].
  \]
  Now it is readily checked that the functions of the form
  \[
    \tilde{g}(x,y) = \begin{cases}
    g_0(x,y) & \text{if }(x,y)\in [0,1]^p\times [-m,m]\\
    G(x,y) & \text{otherwise}
    \end{cases} \qquad \text{for some }g_0\in \mathcal{F}'_1(\epsilon)
  \]
  or 
  \[
  \tilde{g}(x,y) = \begin{cases}
  g_0(x,y) & \text{if }(x,y)\in [0,1]^p\times [-m,m]\\
  -G(x,y) & \text{otherwise}
  \end{cases} \qquad \text{for some }g_0\in \mathcal{F}'_1(\epsilon)
  \]
  define a finite class of functions such that for each $g\in \mathcal{F}'_1$, there exists  $\underline{g},\overline{g}$ of the previous form satisfying
  \begin{gather*}
  \underline{g}(x,y) \leq g(x,y) \leq \overline{g}(x,y), \qquad (x,y)\in [0,1]^p\times\mathbb{R}\\
  \text{and}\qquad \E\big[\overline{g}(X,Y)-\underline{g}(X,Y)\big] \leq 2\epsilon.
  \end{gather*}
  This concludes the proof.
\end{proof}

% Bibliography %%%%%%%%%%%%%%%%%%%%%%%%%%%%%%%%%%%%%%%%%%%%%%%%%%%%%%%%%%%%%%%%%
\phantomsection

\addcontentsline{toc}{section}{References}

\end{document}